\documentclass{article}
\usepackage[preprint]{colm2026_conference}

\usepackage{amsmath}
\usepackage{amssymb}
\usepackage{amsthm}
\usepackage{bm}
\usepackage{tcolorbox}
\usepackage{xcolor}
\usepackage{tikz}
\usepackage{graphicx}
\usepackage{caption}
\usepackage{subcaption}
\usepackage{wrapfig}
\usepackage{enumitem}
\usepackage{microtype}
\usepackage{hyperref}
\usepackage{url}
\usepackage{booktabs}
\usepackage{multirow}
\usepackage{algorithm}
\usepackage{algpseudocode}
\usepackage{tabularx}
\usepackage{placeins}

\usepackage{lineno}

\definecolor{darkblue}{rgb}{0, 0, 0.5}
\hypersetup{colorlinks=true, citecolor=darkblue, linkcolor=darkblue, urlcolor=darkblue}

\theoremstyle{plain}
\newtheorem{theorem}{Theorem}
\newtheorem{assumption}{Assumption}
\newtheorem{observation}{Observation}
\newtheorem{corollary}{Corollary}

\title{Fix Initial Programs and Iteratively Refine Repair Instructions Toward Non-Elimination Multi-Turn Program Correction}

\author{Yuto Tanaka \& Issei Sato \\
  Department of Computer Science \\
  The University of Tokyo \\
  Tokyo, Japan \\
  \{yuto-t, sato\}@g.ecc.u-tokyo.ac.jp
}

\begin{document}

\ifcolmsubmission
\linenumbers
\fi

\maketitle

\begin{abstract}
Recent work on large language models (LLMs) has emphasized the importance of scaling inference compute. From this perspective, the state-of-the-art method \textsc{Scattered Forest Search} (SFS) has been proposed, employing Monte Carlo Tree Search with carefully crafted initial seeds and textual optimization for multi-turn program correction. However, its complexity makes it unclear what factors contribute to improvements in inference performance. To address this problem, we analyze SFS and propose a simpler method, \textsc{Iterative Refinement of Repair Instructions} (IRRI), which fixes initial programs and iteratively refines repair instructions. Because of the simplicity of IRRI, we theoretically establish the \textit{non-elimination} of IRRI using Oracle-Guided Inductive Synthesis (OGIS). Experiments on several program generation benchmarks suggest that IRRI achieves inference performance comparable to state-of-the-art methods. These results indicate that, even without complex search structures, refining initial programs with high-quality repair instructions alone can effectively improve inference performance.
\end{abstract}

\section{Introduction}
Recent work on large language models (LLMs) has increasingly focused on scaling inference-time compute to improve inference performance \citep{brown2024large, snell2024scaling}. In such approaches, the goal is to obtain higher-quality outputs by expanding the computational process at inference time, without modifying the model’s parameters or the training data. Thus, this paradigm is collectively referred to as inference scaling. Several representative approaches have been proposed under this paradigm, including best-of-N (BoN) sampling \citep{cobbe2021training, lightman2023let}, linear self-refinement methods \citep{madaan2023self, shinn2023reflexion}, and tree-search-based methods that integrate multiple reasoning trajectories \citep{tian2024toward, zhou2023language}.

A representative task in the study of inference scaling is program generation. In this context, multi-turn program correction, in which an initial program is refined through multiple rounds of feedback, has been widely studied as a practical approach to improving output quality. Previous methods such as BoN and iterative refinement have been shown to improve performance \citep{chen2022codet, wang2024planning}. One reason for this effectiveness is that providing enough trials increases the probability that an LLM will sample a correct answer \citep{brown2024large, snell2024scaling}. However, \citet{light2025sfs} have shown that existing methods often generate highly similar outputs, resulting in insufficient exploration of the solution space.

\citet{light2025sfs} have proposed \textsc{Scattered Forest Search} (SFS) that builds on Monte Carlo Tree Search (MCTS) and incorporates techniques such as careful seed initialization and textual optimization. They have empirically reported that SFS achieves state-of-the-art performance in multi-turn program correction \citep{light2025sfs}. However, since SFS tightly integrates multiple components, it is unclear what factors drive improvements in inference performance. This ambiguity hinders the development of simpler and more efficient approaches that could retain the benefits of inference scaling without relying on costly or complex search mechanisms. 

To address the problem, we analyze SFS to identify the essential factors that contribute to performance improvements. In these analyses, we reveal that breadth-wise exploration toward high-quality repair instructions plays a crucial role in the self-refinement process. From this finding, we propose \textsc{Iterative Refinement of Repair Instructions} that fixes initial programs and iteratively refines repair instructions. IRRI is a simpler method than SFS for multi-turn program correction and is capable of achieving comparable inference performance. Moreover, because of its simplicity, we theoretically establish the \textit{non-elimination} of IRRI using the framework of Oracle-Guided Inductive Synthesis (OGIS) \citep{JhaSeshia2017Acta, JhaSeshia2017TOPLAS}. This theoretical analysis ensures that IRRI does not lose the true solution and can reach the correct answer.

In summary, our main contributions include:
\begin{itemize}
    \item We analyze SFS and reveal that breadth-wise exploration is more important than depth-wise exploration for multi-turn program correction.
    \item We propose a simpler method, IRRI, which iteratively revises repair instructions while fixing initial programs as the target programs.
    \item We theoretically analyze IRRI based on the framework of OGIS and show the \textit{non-elimination} of IRRI as a multi-turn program correction method. This guarantees that IRRI does not lose the true solution in the refinement process.
    \item We empirically evaluate IRRI on several program generation benchmarks and confirm that IRRI achieves performance comparable to SFS. 
\end{itemize}

\section{Related work}
Recent studies have increasingly focused on improving model performance by allocating more computation at inference time \citep{brown2024large, snell2024scaling}. One of the simplest and most common methods is best-of-N (BoN) sampling \citep{cobbe2021training, lightman2023let}. In this approach, multiple samples are generated from the same prompt to ensure diversity among responses and improve output quality. Prior work has also proposed self-correction methods in which an initial response is generated and then iteratively verified and revised \citep{madaan2023self, shinn2023reflexion}. These methods improve the model's responses by incorporating internal or external feedback as additional information. \citet{yang2025probabilistic} has proposed a probabilistic theory of self-correction to explain the performance improvements observed in these methods.

Furthermore, incorporating tree search methods such as DFS, BFS, and MCTS has been shown to be effective for inference scaling. Prior work has applied tree search to multi-step tasks, including selecting the next reasoning step in reasoning processes \citep{yao2023tree, besta2024graph, tian2024toward, zhou2023language} and decoding the next token during generation \citep{feng2023alphazero, zhang2023planning}. Tree search has also been applied to optimization problems, where it functions as a black-box method for exploring a solution space \citep{light2025sfs}.

Overall, we build upon previous research \citep{light2025sfs} and aim to identify the factors that truly contribute to performance improvements in increasingly complex inference scaling methods. Motivated by these observations, our study proposes a simpler method and demonstrates that performance comparable to state-of-the-art methods can be achieved without relying on complex search strategies.

\section{Understanding \textsc{Scattered Forest Search} (SFS)}
This section provides an overview of SFS, which is a state-of-the-art method for multi-turn program correction, and presents analyses of SFS.

\subsection{Problem formulation}
\label{subsec:problem-formulation}

Following \citet{light2025sfs}, we formulate program generation as a black-box optimization problem \citep{golovin2017google}. A program generation task is represented as $x = \langle p, H \rangle$, where $p$ denotes a prompt written in natural language or program, and $H$ represents a set of hidden test cases used for evaluation. The objective is to provide an LLM with a prompt $p$ and generate a program $c \sim \mathrm{LLM}(p)$ that passes all hidden tests in $H$. Although the model cannot access the hidden tests, it can generate a set of validation tests $V$. Using these validation tests $V$ and the program execution environment, the model obtains feedback on its generated program $c$. This feedback is then used to determine whether the program should be refined. In the evaluation phase, generated programs are evaluated against a hidden test set $H$ and are deemed correct only when all tests are passed. For each task, an LLM is allowed to generate programs up to $k$ times, and a task is considered solved if at least one of them is correct. Following these rules, for a task set $X$, the evaluation metric Pass@k Rate \citep{chen2021evaluating} is calculated as the proportion of tasks $x \in X$ that are solved with at most $k$ generated programs. An overview of this problem formulation is illustrated in Appendix~\ref{app:overview-of-the-problem-formulation}.

\subsection{\textsc{Scattered Forest Search} (SFS)}
\label{subsec:sfs}

Several inference-time methods have been explored in prior work to improve LLM performance on program generation tasks. However, \citet{light2025sfs} have shown that these methods often generate highly similar candidate programs, resulting in insufficient exploration of the solution space. To address this challenge, they have also proposed \textsc{Scattered Forest Search} (SFS) \citep{light2025sfs}. SFS is a MCTS-based method that combines careful seed initialization with textual optimization and consists of the following three components. 

\begin{itemize}
    \item \textsc{Scattering}: Dynamically varies input prompts with different repair instructions when sampling revised programs. 
    \item \textsc{Foresting}: Generates multiple initial programs that act as starting points for MCTS.
    \item \textsc{Scouting}: Provides feedback on repair instructions employed during the refinement process and distills general insights that can be shared across search branches.
\end{itemize}

Their study has confirmed that SFS outperforms existing methods in terms of inference scaling performance \citep{light2025sfs}. Details of SFS are provided in Appendix~\ref{app:details-of-sfs}.

\subsection{Theoretical and empirical analysis of SFS}
\label{subsec:analysis-of-sfs}

\begin{algorithm}[htbp]
    \caption{Node Selection Algorithm in SFS}
    \label{alg:node-selection-alg-in-sfs}
    \begin{algorithmic}[1]
        \State $c_p \gets root$
        \While {$c_p$ has at least one better child node \textbf{and} no unused repair instructions}
            \State Select the child node $c_c$ and the repair instruction $d$ from $c_p$ using UCT formula
            \State $c_p \gets c_c$
        \EndWhile
        \State \Return $c_p$
    \end{algorithmic}
\end{algorithm}

\textbf{Theoretical analysis.} In SFS, Algorithm~\ref{alg:node-selection-alg-in-sfs} is used to select a program for revision. The loop condition holds only when the current node has a better child node on the validation tests and there are no unused repair instructions remaining. As long as this condition is not satisfied, exploration in the depth direction does not proceed. Under this condition, we have the following theorem.

\begin{theorem}
    \label{thm:about-depth-in-sfs}
    Assume that the root has no remaining unexpanded actions and that there exists at least one better child. In the initial phase, if there are few depth-1 nodes that have no unexpanded actions and possess a better child, or if the root-level UCT does not strongly concentrate on such child nodes, the probability that the selection process terminates at depth 1 is higher than the probability that it reaches more than depth 2. UCT is an algorithm used in SFS to select the program to be repaired and is defined as in Equation (1) of \citet{light2025sfs}.
\end{theorem}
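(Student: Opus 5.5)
We formalize one pass of Algorithm~\ref{alg:node-selection-alg-in-sfs} as a random walk down the tree and compare two events: $T_1$, the selection terminates at depth~1, and $T_{\ge 2}$, the selection reaches depth~2 or deeper. By the hypothesis on the root (no unexpanded actions and at least one better child), the loop condition holds at the root. Hence the first step always moves from the root to some depth-1 child $c$, drawn with the root-level UCT selection probabilities $\pi(c)$. If UCT is implemented as a deterministic argmax, we let $\pi$ be the empirical or tie-broken selection frequency across repeated selection calls in the initial phase. Let $G$ be the set of depth-1 nodes that are \emph{good}, meaning they have no unexpanded actions and possess at least one better child. The loop condition at depth~1 holds exactly for $c\in G$. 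Therefore
\[
P(T_{\ge 2})=\sum_{c\in G}\pi(c)=:\pi(G),\qquad P(T_1)=\sum_{c\notin G}\pi(c)=1-\pi(G).
\]
This identity is the heart of the argument. It reduces the claim to showing $\pi(G)<1/2$, which gives $P(T_1)>P(T_{\ge 2})$.

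Next we translate each of the two disjunctive hypotheses into a bound on $\pi(G)$. For the first condition, ``few good depth-1 nodes'', we make it precise as $|G|\le m$ with $m$ small relative to the number $N$ of depth-1 children. For the second condition, ``UCT does not strongly concentrate on $G$'', we write $\pi(c)\le \alpha$ for each $c\in G$, or more directly $\pi(G)<1/2$. Under the second condition the result is immediate from the identity. Under the first condition we use $\pi(G)\le m\cdot\max_c\pi(c)$, so it suffices that $m\max_c\pi(c)<1/2$.

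To justify such a bound in the initial phase, we use the UCT formula (Equation~(1) of \citet{light2025sfs}). Early on, visit counts $n(c)$ are small and comparable, so the exploration bonus $C\sqrt{\ln n(\text{root})/n(c)}$ dominates the differences in value estimates. Under a softmax or frequency interpretation, the selection weights of the children are therefore within a bounded ratio $\rho$ of one another. This yields $\pi(c)\le \rho/N$ for every $c$, hence $\pi(G)\le \rho m/N$, which is below $1/2$ whenever $m<N/(2\rho)$. That inequality is our formal reading of ``few''. Note also that a node being good requires all of its repair-instruction actions to have been expanded. In the initial phase few nodes have been visited often enough for this, which supports $m$ being small.

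The main obstacle is this quantification. The statement is informal, and UCT is deterministic given the statistics, so the probability must be specified, either over the randomness of LLM generations and tie-breaking or as frequencies over iterations. We will first try to state the result conditionally in terms of $\pi(G)$. We then give the uniform-ratio argument above as a sufficient condition matching ``initial phase'', noting explicitly that the theorem is an interpretation of $\pi(G)<1/2$ rather than an unconditional bound.
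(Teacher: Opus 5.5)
Your proposal is correct and follows the paper's proof: your identity $P(T_{\ge 2})=\pi(G)$, $P(T_1)=1-\pi(G)$ is exactly the paper's $\Pr(D_t\ge 2\mid\mathcal F_t)=\sum_{j\in G_t}\Pr(J_t=j\mid\mathcal F_t)\le\varepsilon_t$. Like you, the paper reads the informal hypotheses as saying this mass is small (its $\varepsilon_t$, your $\pi(G)<1/2$) rather than deriving that from the UCT formula. Your extra $\rho$-ratio argument is only heuristic, and reading $\pi$ as a selection frequency across calls would break the identity, because $G$ changes between calls; the paper avoids this by conditioning on the tree state $\mathcal F_t$ just before each root selection, which fixes $G_t$ and leaves only the randomness of the final root-level choice $J_t$.
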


The proof of Theorem~\ref{thm:about-depth-in-sfs} is provided in Appendix~\ref{subapp:proof-of-thm-about-depth-in-sfs}. This observation indicates that when the number of refinement steps is relatively small (around a dozen), the expansion of revised programs tends to be biased toward the breadth direction rather than the depth direction.

\textbf{Empirical analysis.} We analyzed the search trees produced by SFS and \textsc{No Foresting}, which removes \textsc{Foresting} component from SFS. We used gpt-4o-mini as the base model in four datasets: HumanEval \citep{chen2021evaluating}, MBPP \citep{austin2021program}, APPS \citep{hendrycks2021measuring} and CodeContests \citep{li2022competition}. For both methods, we set the number of programs generated per task at $k = 16$. Following the original paper, we configured SFS to generate $5$ initial programs and perform up to $11$ revision steps, while we configured \textsc{No Foresting} to generate $1$ initial program and perform up to $15$ revision steps.

For problems where a correct program is found, we analyzed the distribution of the depth at which the first correct program appears in the search tree. The results in Table~\ref{tab:first-correct-depth-in-sfs} suggest that, in almost all cases, the first correct program appears among the initial programs or after a single refinement of them. Furthermore, as shown in Table~\ref{tab:first-correct-depth-in-noforesting}, a similar trend is observed even in \textsc{No Foresting}, which generates only a single initial program. These observations indicate that depth-wise exploration contributes little to reaching correct programs, while breadth-wise exploration that yields high-quality repair instructions plays a more important role in the refinement process. Further analyses are provided in Appendix~\ref{subapp:supplementary-empirical-results}.

\begin{table}[htbp]
    \small
    \centering
    \begin{tabular}{c|ccccc}
        \toprule
        \textbf{Dataset} & \textbf{Pass@1} & \textbf{Pass@16} & \textbf{Depth $=$ 1} & \textbf{Depth $=$ 2} & \textbf{Depth $\ge$ 3} \\
        \midrule
        HumanEval & 88.13\% & 95.00\% & 93.38\% & 6.62\% & 0.00\% \\
        MBPP & 77.58\% & 86.65\% & 91.57\% & 7.27\% & 1.16\% \\
        APPS & 25.00\% & 39.50\% & 75.32\% & 24.68\% & 0.00\% \\
        CodeContests & 7.88\% & 18.79\% & 58.07\% & 41.93\% & 0.00\% \\ 
        \bottomrule
    \end{tabular}
    \caption{\textbf{Depth at which SFS first reaches a correct program.} SFS generates 5 initial programs and repeats the refinement process up to 11 times.}
    \label{tab:first-correct-depth-in-sfs}
\end{table}

\begin{table}[htbp]
    \small
    \centering
    \begin{tabular}{c|ccccc}
        \toprule
        \textbf{Dataset} & \textbf{Pass@1} & \textbf{Pass@16} & \textbf{Depth $=$ 1} & \textbf{Depth $=$ 2} & \textbf{Depth $\ge$ 3} \\
        \midrule
        HumanEval & 87.50\% & 94.38\% & 92.72\% & 6.62\% & 0.66\% \\
        MBPP & 76.57\% & 84.89\% & 90.21\% & 8.01\% & 1.78\% \\
        APPS & 24.50\% & 39.50\% & 62.03\% & 26.58\% & 11.39\% \\
        CodeContests & 6.06\% & 18.18\% & 33.33\% & 56.67\% & 10.00\% \\
        \bottomrule
    \end{tabular}
    \caption{\textbf{Depth at which \textsc{No Foresting} first reaches a correct program.} \textsc{No Foresting} generates 1 initial programs and repeats the refinement process up to 15 times.}
    \label{tab:first-correct-depth-in-noforesting}
\end{table}

\section{\textsc{Iterative Refinement of Repair Instructions} (IRRI)}
From the findings in Section~\ref{subsec:analysis-of-sfs}, we hypothesize that if repair instructions are of high quality, refining initial programs may become effective in multi-turn program correction. Accordingly, we propose \textsc{Iterative Refinement of Repair Instructions} (IRRI), which refines initial programs through iterative feedback on repair instructions.
\begin{itemize}
    \item Section~\ref{subsec:proposed-framework} presents the framework of IRRI, which becomes simpler than SFS.
    \item Section~\ref{subsec:theoretical-analysis-within-ogis} theoretically analyzes IRRI based on the framework of Oracle-Guided Inductive Synthesis (OGIS) \citep{JhaSeshia2017Acta, JhaSeshia2017TOPLAS}.
\end{itemize}

\subsection{Proposed framework}
\label{subsec:proposed-framework}

Given an input prompt, IRRI first generates one or multiple initial programs. It then generates one or more repair instructions for the selected initial program and uses one of them to refine the program. After this refinement step, it evaluates the refined program on the validation tests and provides feedback on the used repair instruction, which is stored as shared information. This shared information is then used as an additional context for an LLM when generating repair instructions in subsequent refinement steps. IRRI repeats this refinement process iteratively until a desired condition is met. In contrast to existing self-refinement methods, IRRI \textit{fixes the initial programs as the refinement targets} throughout the process. Moreover, by \textit{giving feedback on repair instructions}, IRRI enhances the probability of successful refinements. An overview of IRRI is illustrated in Appendix~\ref{app:overview-of-irri}. Next, we describe IRRI in detail under the problem formulation introduced in Section~\ref{subsec:problem-formulation}.

\textbf{Initial Generation}: Given an input prompt $p$, IRRI generates one or multiple initial programs $c_1, \ldots, c_n$.
\begin{equation}
    c_1 \sim \mathrm{LLM}(p), \ldots, ~c_n \sim \mathrm{LLM}(p).
\end{equation}

\textbf{Refinement Process}: First, based on the selected initial program $c_s$ and the shared information $\mathcal{I}$, IRRI generates one or more repair instructions $d_1, \ldots, d_m$. Next, IRRI refines $c_s$ using one of them $d_s$.
\begin{equation}
    d_1, \ldots, d_m \sim \mathrm{LLM}(p, c_s, \mathcal{I}),\quad c_r \sim \mathrm{LLM}(p, c_s, d_s).
\end{equation}

\textbf{Feedback for Repair Instructions}: After the refinement process, IRRI evaluates the revised program $c_r$ on the validation tests and updates the shared information based on the selected program $c_s$, the repair instruction used $d_s$, and the revised program $c_r$.
\begin{equation}
    \mathcal{I}' \sim \mathrm{LLM}(p, c_s, \mathcal{I}, d_s, c_r).
\end{equation}

\subsection{Theoretical analysis using Oracle-Guided Inductive Synthesis (OGIS)}
\label{subsec:theoretical-analysis-within-ogis}

We define \textit{non-elimination} as the property that the correct program is never eliminated from the candidate space and that the method eventually converges to the correct program. We show the \textit{non-elimination} of IRRI using the framework of OGIS \citep{JhaSeshia2017Acta, JhaSeshia2017TOPLAS}. In OGIS, correct programs are defined by the absence of counterexamples found by an oracle (a formal verifier). Building on this, the following three properties regarding the \textit{version space} are required to guarantee that an inductive algorithm interacting with the oracle reliably converges to a correct program.
\begin{itemize}
    \item The \textit{version space} shrinks monotonically with each step by eliminating incorrect hypotheses that contradict the counterexamples provided by the oracle.
    \item As a premise for the search, the target hypothesis is assumed to exist within the initial hypothesis space.
    \item Since the target hypothesis does not contradict any counterexample, it is never eliminated from the \textit{version space}.
\end{itemize}
By satisfying these properties, it is shown that the algorithm can reliably synthesize a correct program in a finite number of steps. Therefore, it is important to theoretically demonstrate the \textit{non-elimination} of the method in multi-turn program correction. However, IRRI fixes the initial programs and iteratively refine repair instructions to narrow down the high-quality ones. To connect this setting with the OGIS framework, we treat the set of repair instructions as the hypothesis space and establish \textit{non-elimination} under an additional soundness assumption on LLMs.

\subsubsection{Definition}
For a program generation task $x$, let $D$ be a set of repair instructions, $C$ be a set of programs, and $E$ be a set of counterexamples. Following Section~\ref{subsec:problem-formulation}, each element $e \in E$ represents a failed test case on the validation tests $V$. We define $\operatorname{Pass}(c, d) \in \{0,1\}$ to indicate whether the result of applying a repair instruction $d \in D$ to a program $c \in C$ with an LLM satisfies the specification of task $x$. We also define a set of repair instructions that succeed for a program $c$ as
\begin{align}
    D_c^{\star} := \{d \in D \mid \operatorname{Pass}(c, d) = 1\}.
\end{align}

\textbf{A single refinement step:} One or more repair instructions are generated based on the previous observations or its summary (i.e., shared information $\mathcal{I}$ in IRRI). Then, one of them is applied to a program $c$, and some observation $e$ is obtained. Accordingly, for a program $c$ and an observation $e$, we define $\operatorname{Cons}(c, d, e) \in \{0, 1\}$ to indicate whether the result of applying a repair instruction $d$ to a program $c$ with an LLM passes a failed text case $e$. We then define a set of repair instructions that satisfy an observation $e$ as
\begin{align}
    D_c(e) := \{d \in D \mid \operatorname{Cons}(c, d, e) = 1\} \subseteq D.
\end{align}
Over a single refinement step, a constraint set $D_c(e)$ is derived from an observation $e$, thereby narrowing the space of possible repair instructions.

\textbf{Multi-turn correction:} We consider a history of a program $c_t$ and an observation $e_t$ as $H_t := \left(\left(c_1, e_1\right), \ldots, \left(c_t, e_t\right)\right)$ (i.e., shared information $\mathcal{I}$ in IRRI). We then define a set of repair instructions that satisfy all observations so far as
\begin{align}
    \mathcal{V}_t := \bigcap_{i=1}^{t} D_{c_i}\left(e_i\right) \subseteq D.
\end{align}
Here, we refer to this as \textit{version space}. The concept of a \textit{version space} originates from the foundational theory of machine learning (specifically, concept learning) proposed by \citet{mitchell1977version}. The term \textit{version} does not mean a chronological update or revision, such as \textit{Version 1.0} in programs. Rather, it is based on the idea of treating hypotheses that do not contradict observations as various possible versions of the correct one.

\subsubsection{\textit{Non-elimination} of IRRI}
We define a set of repair instructions that succeed for at least one program as
\begin{align}
    D^\star := \bigcup_{c \in C} D_c^\star = \{d \in D \mid \exists c \in C,\ \operatorname{Pass}(c, d)=1\} \subseteq D,
\end{align}
and define a set of programs for which a repair instruction $d$ acts correctly as
\begin{align}
    \mathcal R(d) := \{c \in C \mid \operatorname{Pass}(c, d) = 1\} = \{c \in C \mid d \in D_c^\star\}.
\end{align}
Furthermore, we denote the joint distribution under which an LLM generates $m (\ge 1)$ initial programs $\widetilde C_{\mathrm{init}} := \{c^{(1)}, \ldots, c^{(m)}\}$ as $(c^{(1)}, \ldots, c^{(m)}) \sim \mathrm{LLM}$. For each $c \in \widetilde C_{\mathrm{init}}$, we define a set of observations that can actually arise as $E_c^{\mathrm{obs}} \subseteq E$. Since LLM outputs often concentrate in local regions that share common algorithmic structures and error patterns, we make the following assumption.

\begin{assumption}
    \label{asm:locality-of-llm}
    There exist $d^\dagger \in D^\star$ and $\delta \in (0, 1)$ such that, for each $ j = 1, \ldots, m$,
    \begin{align}
        \Pr_{\mathrm{LLM}}\!\left[c^{(j)} \in \mathcal R(d^\dagger)\right]
        \ge 1 - \delta.
    \end{align}
    Furthermore, assume that the constraint set constructed from observations is locally sound, i.e.,
    \begin{align}
        c \in \mathcal R(d^\dagger), \quad e \in E_{c}^{\mathrm{obs}} \; \Longrightarrow \; d^\dagger \in D_{c}(e).
    \end{align}
\end{assumption}

In learning theory, it is common to assume that there exists a true hypothesis that is consistent with the observed data. Assumption~\ref{asm:locality-of-llm} is the counterpart of this assumption, so it is a natural assumption. Then, for the event that $d^\dagger$ survives across all initial programs, i.e.,
\begin{align}
    G := \left\{\forall j=1, \ldots, m, \; \forall e \in E_{c^{(j)}}^{\mathrm{obs}}: \quad d^\dagger \in D_{c^{(j)}}(e)\right\},
\end{align}
the following theorem holds.

\begin{theorem}
    \label{thm:probabilistic-assumption}
    Under Assumption~\ref{asm:locality-of-llm}, 
    \begin{align}
        \Pr_{\mathrm{LLM}}[G] \ge 1 - m \delta
    \end{align}
    holds. Furthermore, if $c^{(1)}, \ldots, c^{(m)}$ are generated independently,
    \begin{align}
        \Pr_{\mathrm{LLM}}[G] \ge (1 - \delta)^m
    \end{align}
    holds.
\end{theorem}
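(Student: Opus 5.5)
The plan is to reduce the event $G$ to an event about membership of the initial programs in $\mathcal R(d^\dagger)$, and then apply a union bound (general case) or a product rule (independent case). Define, for each $j=1,\ldots,m$, the event $A_j := \{c^{(j)} \in \mathcal R(d^\dagger)\}$. The first step is to show the deterministic inclusion
\begin{align}
    \bigcap_{j=1}^{m} A_j \subseteq G .
\end{align}
This follows directly from the local soundness part of Assumption~\ref{asm:locality-of-llm}. On $\bigcap_j A_j$, every $c^{(j)}$ lies in $\mathcal R(d^\dagger)$. Hence for every $e \in E^{\mathrm{obs}}_{c^{(j)}}$ the implication gives $d^\dagger \in D_{c^{(j)}}(e)$, which is exactly the defining condition of $G$ for all $j$ and all observations. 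Consequently $\Pr_{\mathrm{LLM}}[G] \ge \Pr_{\mathrm{LLM}}[\bigcap_j A_j]$.

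For the first bound, I will pass to complements and use the union bound:
\begin{align}
    \Pr\Bigl[\bigcap_{j} A_j\Bigr] = 1 - \Pr\Bigl[\bigcup_{j} A_j^{c}\Bigr] \ge 1 - \sum_{j=1}^{m} \Pr[A_j^{c}] \ge 1 - m\delta ,
\end{align}
where the last step uses the first part of Assumption~\ref{asm:locality-of-llm}, namely $\Pr[A_j^{c}] \le \delta$ for each $j$. No independence is needed here, which matches the statement's use of a general joint distribution.

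For the second bound, independence of $c^{(1)},\ldots,c^{(m)}$ makes the events $A_j$ independent, since each $A_j$ is a function of $c^{(j)}$ alone. Then $\Pr[\bigcap_j A_j] = \prod_j \Pr[A_j] \ge (1-\delta)^m$, and combining this with the inclusion gives the claim.

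The main obstacle is the inclusion step, specifically a measurability subtlety. The sets $E^{\mathrm{obs}}_{c}$ are indexed by the random programs, and $G$ quantifies over them. I would argue that, for fixed $c$, $E^{\mathrm{obs}}_c$ is a deterministic set, so $G$ is a well-defined event determined by $(c^{(1)},\ldots,c^{(m)})$. The soundness implication is then applied pointwise for each realization, which avoids any issue with conditioning.
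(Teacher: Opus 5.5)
Your proposal is correct and follows the paper's proof: the paper defines the same events (called $B_j$ there), derives $\bigcap_{j} B_j \subseteq G$ from the local soundness clause of Assumption~\ref{asm:locality-of-llm}, and then applies the union bound in general and the product rule under independence. Your remark that $G$ is determined by $(c^{(1)},\ldots,c^{(m)})$, because $E^{\mathrm{obs}}_c$ is fixed once $c$ is fixed, is a harmless clarification that the paper leaves implicit.
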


The proof of Theorem~\ref{thm:probabilistic-assumption} is provided in Appendix~\ref{subapp:proof-of-thm-probabilistic-assumption}. This theorem probabilistically formalizes the existence of repair instructions that survive across a set of initial programs, which can be interpreted as a probabilistic assumption that initial outputs concentrate within a certain local refinement basin. In particular, when $m$ is small and $\delta$ is sufficiently small, $G$ is a high-probability event. 

In the following, we fix a realization $\omega \in G$ of the probability space and write $C_{\mathrm{init}} := \widetilde C_{\mathrm{init}}(\omega)$. Under this realization, $d^\dagger \in D^\star$ appearing in Assumption~\ref{asm:locality-of-llm} satisfies
\begin{align}
    \forall c \in C_{\mathrm{init}}, \forall e \in E_c^{\mathrm{obs}}: \quad d^\dagger \in D_c(e).
\end{align}
This means that, when restricted to a set of initial programs and the range of observations that can actually arise from them, at least one repair instruction is never eliminated. We define a set of truly valid repair instructions that survive under a local observation channel as
\begin{align}
    D_{\mathrm{stab}}^\star := \{d \in D^\star \mid \forall c \in C_{\mathrm{init}}, \forall e \in E_c^{\mathrm{obs}}: \quad d \in D_c(e)\} \subseteq D^\star.
\end{align}
Given this condition, $D_{\mathrm{stab}}^\star \neq \varnothing$ follows from $d^\dagger \in D_{\mathrm{stab}}^\star$, and the following theorem holds.

\begin{theorem}
    \label{thm:non-elimination-of-irri}
    Assume that $D_{\mathrm{stab}}^{\star} \neq \varnothing$. For any realized history
    \begin{align}
        (c_1, e_1), \ldots, (c_{t+1}, e_{t+1}), \qquad c_i \in C_{\mathrm{init}}, e_i \in E_{c_i}^{\mathrm{obs}},
    \end{align}
    the version space satisfies the following properties.
    \begin{enumerate}
        \item $\mathcal{V}_{t+1} \subseteq \mathcal{V}_t$ (monotonic shrinking of version space).
        \item $D_{\mathrm{stab}}^{\star} \subseteq \mathcal{V}_t$ (version space retains truly correct repair instructions).
        \item $\mathcal{V}_t \neq \varnothing$ (version space is not empty).
    \end{enumerate}
\end{theorem}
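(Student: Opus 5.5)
The three claims follow directly from the definitions of $\mathcal{V}_t$ and $D_{\mathrm{stab}}^\star$, so the plan is to check them in order, each using the previous one.

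\textbf{Monotonic shrinking.} Write $\mathcal{V}_{t+1} = \mathcal{V}_t \cap D_{c_{t+1}}(e_{t+1})$. This holds because $\mathcal{V}_{t+1}$ is the intersection of the first $t+1$ constraint sets and $\mathcal{V}_t$ is the intersection of the first $t$. An intersection is contained in each of its factors, so $\mathcal{V}_{t+1} \subseteq \mathcal{V}_t$ immediately. This step uses no hypothesis on $C_{\mathrm{init}}$ or on the observations.

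\textbf{Retention of $D_{\mathrm{stab}}^\star$.} Fix $d \in D_{\mathrm{stab}}^\star$ and any index $i \le t$. The realized history satisfies $c_i \in C_{\mathrm{init}}$ and $e_i \in E_{c_i}^{\mathrm{obs}}$. The defining condition of $D_{\mathrm{stab}}^\star$ quantifies over exactly such pairs, so it gives $d \in D_{c_i}(e_i)$. Since $i$ was arbitrary, $d$ lies in every factor of the intersection, and hence $d \in \mathcal{V}_t$. The same argument at index $t+1$ gives $D_{\mathrm{stab}}^\star \subseteq \mathcal{V}_{t+1}$. This is consistent with the first claim, and it shows that the shrinking never removes a stable instruction.

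\textbf{Non-emptiness.} By hypothesis $D_{\mathrm{stab}}^\star \neq \varnothing$; under the realization $\omega \in G$ it contains $d^\dagger$ by Assumption~\ref{asm:locality-of-llm} and Theorem~\ref{thm:probabilistic-assumption}. The second claim then gives $\varnothing \neq D_{\mathrm{stab}}^\star \subseteq \mathcal{V}_t$, so $\mathcal{V}_t \neq \varnothing$.

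The only point needing care is the second claim, and it is bookkeeping rather than a real obstacle. The history must be restricted to $c_i \in C_{\mathrm{init}}$ and $e_i \in E_{c_i}^{\mathrm{obs}}$ so that the universal quantifier in $D_{\mathrm{stab}}^\star$ applies. This restriction is precisely how IRRI, which fixes the initial programs as refinement targets, enters the argument. We should also note that $t \ge 1$, so $\mathcal{V}_t$ is a nonempty intersection. If $t = 0$ is allowed, we set $\mathcal{V}_0 := D$, and all three claims still hold trivially.
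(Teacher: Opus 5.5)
Your proof is correct and matches the paper's argument step for step. Monotonicity comes from $\mathcal{V}_{t+1} = \mathcal{V}_t \cap D_{c_{t+1}}(e_{t+1})$, retention comes from applying the universal quantifier in $D_{\mathrm{stab}}^\star$ to the realized pairs $(c_i, e_i)$, and non-emptiness follows from the hypothesis $D_{\mathrm{stab}}^\star \neq \varnothing$ together with that inclusion. Your aside about $d^\dagger$ is not needed, since non-emptiness is already assumed and on $G$ the membership $d^\dagger \in D_{\mathrm{stab}}^\star$ follows from the definition of $G$ rather than from the probability bound, but it does no harm.
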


The proof of Theorem~\ref{thm:non-elimination-of-irri} is provided in Appendix~\ref{subapp:proof-of-thm-non-elimination-of-irri}. This theorem guarantees the \textit{non-elimination} of IRRI under the probabilistic assumption. Since a single initial program allows for a deterministic assumption, the corresponding proof is provided in Appendix~\ref{subapp:non-elimination-of-irri-in-the-case-of-single-initial-program}.

Overall, IRRI is provably safe for multi-turn program correction under the soundness assumption on LLMs. In contrast, as show in~\ref{subapp:linear-self-refinement-method}, existing linear self-refinement methods do not necessarily satisfy \textit{non-elimination}. These analyses indicate that IRRI has an advantage over existing methods from the perspective of \textit{non-elimination}. Here, we discuss the \textit{non-elimination} of IRRI and do not claim that IRRI achieves better performance than prior approaches.

\subsubsection{Effect of the number of initial programs on discriminative power}
We use $C_{\mathrm{init}}$ and $D_{\mathrm{stab}}^\star$ corresponding to the realization $\omega \in G$. From the previous section, we have $D_{\mathrm{stab}}^\star \neq \varnothing$ and $D_{\mathrm{stab}}^\star \subseteq \mathcal{V}_t$ for any realized history. Next, for any nonempty subset of initial programs $\varnothing \neq C_{\mathrm{init}}' \subseteq C_{\mathrm{init}}$,
we define
\begin{align}
    U(C_{\mathrm{init}}') := \bigcap_{c \in C_{\mathrm{init}}'} \bigcap_{e \in E_c^{\mathrm{obs}}} D_c(e) \subseteq D.
\end{align}
This is a set of repair instructions that cannot be eliminated by any realized observation from $C_{\mathrm{init}}'$. Therefore, a smaller $U(C_{\mathrm{init}}')$ indicates that the subset $C_{\mathrm{init}}'$ has higher discriminative power. Since the final target program is selected not from $C_{\mathrm{init}}'$ but from $C_{\mathrm{init}}$, we define a set of candidate pairs that can remain after exhausting all observations from $C_{\mathrm{init}}'$ as
\begin{align}
    Z_{\infty}(C_{\mathrm{init}}') := C_{\mathrm{init}} \times U(C_{\mathrm{init}}').
\end{align}
Here, $(c, d) \in Z_{\infty}(C_{\mathrm{init}}')$ represents a pair of an initial program $c$ and a repair instruction $d$ that is consistent with all feasible observations from $C_{\mathrm{init}}'$. Under this setting, the following theorem holds.

\begin{theorem}
    \label{thm:effect-of-the-number-of-initial-programs}
    For any nonempty subset of initial programs,
    \begin{align}
        \varnothing \neq B_1 \subseteq B_2 \subseteq C_{\mathrm{init}},
    \end{align}
    the following holds.
    \begin{enumerate}
        \item $U(B_2) \subseteq U(B_1)$ (monotonic shrinking).
        \item $D_{\mathrm{stab}}^\star \subseteq U(B_2)$ (truly correct repair instructions are not eliminated).
        \item $U(B_2) \neq \varnothing$ and $\varnothing \neq Z_{\infty}(B_2) \subseteq Z_{\infty}(B_1)$ (not empty).
    \end{enumerate}
\end{theorem}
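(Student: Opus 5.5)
The proof is purely set-theoretic: each claim follows from the definition of $U(\cdot)$ as an intersection indexed by programs, together with the standing realization $\omega \in G$ that gives $D_{\mathrm{stab}}^\star \neq \varnothing$. I will prove the three items in order, with each one feeding into the next.

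For item 1, I will write $U(B_2)$ as an intersection over $c \in B_2$ of the per-program sets $S_c := \bigcap_{e \in E_c^{\mathrm{obs}}} D_c(e)$. Since $B_1 \subseteq B_2$, the family indexing $U(B_2)$ contains the family indexing $U(B_1)$. Intersecting over a larger family gives a smaller set, so $U(B_2) = U(B_1) \cap \bigcap_{c \in B_2 \setminus B_1} S_c \subseteq U(B_1)$. Nonemptiness of $B_1$ ensures both intersections are taken inside $D$ rather than being vacuous.

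For item 2, I will take an arbitrary $d \in D_{\mathrm{stab}}^\star$. By definition, $d \in D_c(e)$ for every $c \in C_{\mathrm{init}}$ and every $e \in E_c^{\mathrm{obs}}$. Since $B_2 \subseteq C_{\mathrm{init}}$, this holds in particular for every $c \in B_2$, so $d$ lies in every $S_c$ with $c \in B_2$, and hence $d \in U(B_2)$. This step is the same argument as item 2 of Theorem~\ref{thm:non-elimination-of-irri}, with the realized history replaced by the full set of feasible observations from $B_2$.

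For item 3, I will use the fact established just before Theorem~\ref{thm:non-elimination-of-irri}: under the realization $\omega \in G$, $d^\dagger \in D_{\mathrm{stab}}^\star$, so $D_{\mathrm{stab}}^\star \neq \varnothing$. Item 2 then gives $U(B_2) \neq \varnothing$. Because $C_{\mathrm{init}}$ contains $m \ge 1$ programs, the product $Z_\infty(B_2) = C_{\mathrm{init}} \times U(B_2)$ is nonempty. The inclusion $Z_\infty(B_2) \subseteq Z_\infty(B_1)$ follows from item 1, since the first factor $C_{\mathrm{init}}$ is the same in both products and the second factor shrinks.

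None of these steps is a real obstacle. The only point needing care is nonemptiness: it depends on $d^\dagger \in D^\star$, which comes from Assumption~\ref{asm:locality-of-llm}, so that $d^\dagger$ actually lies in $D_{\mathrm{stab}}^\star$ and not merely in the intersection of the $S_c$. It also requires $C_{\mathrm{init}}$ to be nonempty.
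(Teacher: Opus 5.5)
Your proposal is correct and matches the paper's proof step for step. Item 1 is the observation that intersecting over a larger index family gives a smaller set. Item 2 specializes the defining condition of $D_{\mathrm{stab}}^\star$ from $C_{\mathrm{init}}$ to $B_2$. Item 3 uses the standing fact $D_{\mathrm{stab}}^\star \neq \varnothing$ under $\omega \in G$, together with $C_{\mathrm{init}} \neq \varnothing$, which you could also read off directly from $\varnothing \neq B_1 \subseteq C_{\mathrm{init}}$.
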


The proof of Theorem~\ref{thm:effect-of-the-number-of-initial-programs} is provided in Appendix~\ref{subapp:proof-of-thm-effect-of-the-number-of-initial-programs}. This theorem indicates that as the subset of initial programs is expanded, the number of feasible observations increases, and $U(B)$ shrinks monotonically while containing $D_{\mathrm{stab}}^\star$. Therefore, increasing initial programs can be understood as an operation that increases discriminative power without compromising safety. In Appendix~\ref{subapp:strict-condition-for-increase-in-discriminative-power}, we describe the strict condition under which an increase in discriminative power occurs. However, since the probability of the event in which $d^\dagger$ survives across all initial programs, i.e.,
\begin{align}
    G = \left\{\forall j=1, \ldots, m, \; \forall e \in E_{c^{(j)}}^{\mathrm{obs}}: \quad d^\dagger \in D_{c^{(j)}}(e)\right\},
\end{align}
decreases with increasing $m$, the soundness assumption becomes weaker. In other words, increasing the number of initial programs improves discriminative power, but also raises the likelihood that the soundness assumption no longer holds, leading to an inherent trade-off.

\section{Experiments}
We numerically evaluated IRRI on several program generation benchmarks.
\begin{itemize}
    \item Section~\ref{subsec:experimental-setup} describes the experimental setup.
    \item Section~\ref{subsec:inference-scaling} reports the inference performance of IRRI compared to baseline methods. Although IRRI is a simpler method than SFS, it achieves comparable performance.
    \item Section~\ref{subsec:diversity-of-repair-instructions} reports the diversity of repair instructions in IRRI. As the number of initial programs increases, the diversity of repair instructions tends to grow.
    \item Section~\ref{subsec:computational-cost-comparison} reports the computational cost of IRRI compared to SFS. IRRI is more computationally efficient than SFS.  
\end{itemize}

\subsection{Experimental setup}
\label{subsec:experimental-setup}

\begin{table}[b]
    \small
    \centering
    \begin{tabularx}{\linewidth}{c|X}
        \toprule
        \textbf{Method} & \textbf{Description} \\
        \midrule
        BoN & Generates $16$ programs from the same prompt. \\
        \hline
        Line & Building upon the frameworks of \citet{madaan2023self} and \citet{shinn2023reflexion}, generates $1$ initial program and performs up to $15$ refinement steps. \\
        \hline
        Tree & Building upon the framework of \citet{zhou2023language}, generates $1$ initial program followed by up to $15$ refinement steps. \\
        \hline
        SFS & Based on the framework described in Section~\ref{subsec:sfs} and Appendix~\ref{app:details-of-sfs}, generates $5$ initial programs and performs up to $11$ revision steps. \\
        \hline
        IRRI & Based on the framework described in Section~\ref{subsec:proposed-framework} and Appendix~\ref{app:overview-of-irri}, generates (a) $1$ initial program with up to $15$ revision steps, (b) $3$ initial programs with up to $13$ revision steps, and (c) $5$ initial programs with up to $11$ revision steps. \\
        \bottomrule
    \end{tabularx}
    \caption{\textbf{Detailed settings of all evaluated methods.}}
    \label{tab:detailed-settings-of-methods}
\end{table}

We used gpt-4o-mini, gpt-4.1-mini, Llama-3.1-8B-Instruct \citep{grattafiori2024llama}, and Llama-3.2-3B-Instruct \citep{grattafiori2024llama} as base models to evaluate IRRI on several program generation benchmarks, including HumanEval \citep{chen2021evaluating}, MBPP \citep{austin2021program}, and APPS \citep{hendrycks2021measuring}. We compared IRRI with various existing baselines, setting the number of programs generated per task at $k = 16$ for all methods. The detailed settings of all evaluated methods are provided in Table~\ref{tab:detailed-settings-of-methods}.

\subsection{Inference scaling}
\label{subsec:inference-scaling}

We compared the inference scaling behavior of IRRI with that of existing methods. Figure~\ref{fig:scaling-gpt-4o-mini-apps} shows the results on APPS obtained using gpt-4o-mini as the base model. The results for the other datasets and base models are reported in Appendix~\ref{subapp:inference-scaling}. Each curve represents the mean performance over five runs, and each shaded region denotes 95\% confidence intervals computed using the $t$-distribution. We found that IRRI achieves inference scaling performance comparable to or better than that of prior methods. Notably, although our method is simpler, it attains inference performance on par with SFS, which is among state-of-the-art methods.

\begin{figure}[htbp]
    \centering
    \begin{subfigure}[b]{0.49\linewidth}
        \centering
        \includegraphics[width=\linewidth]{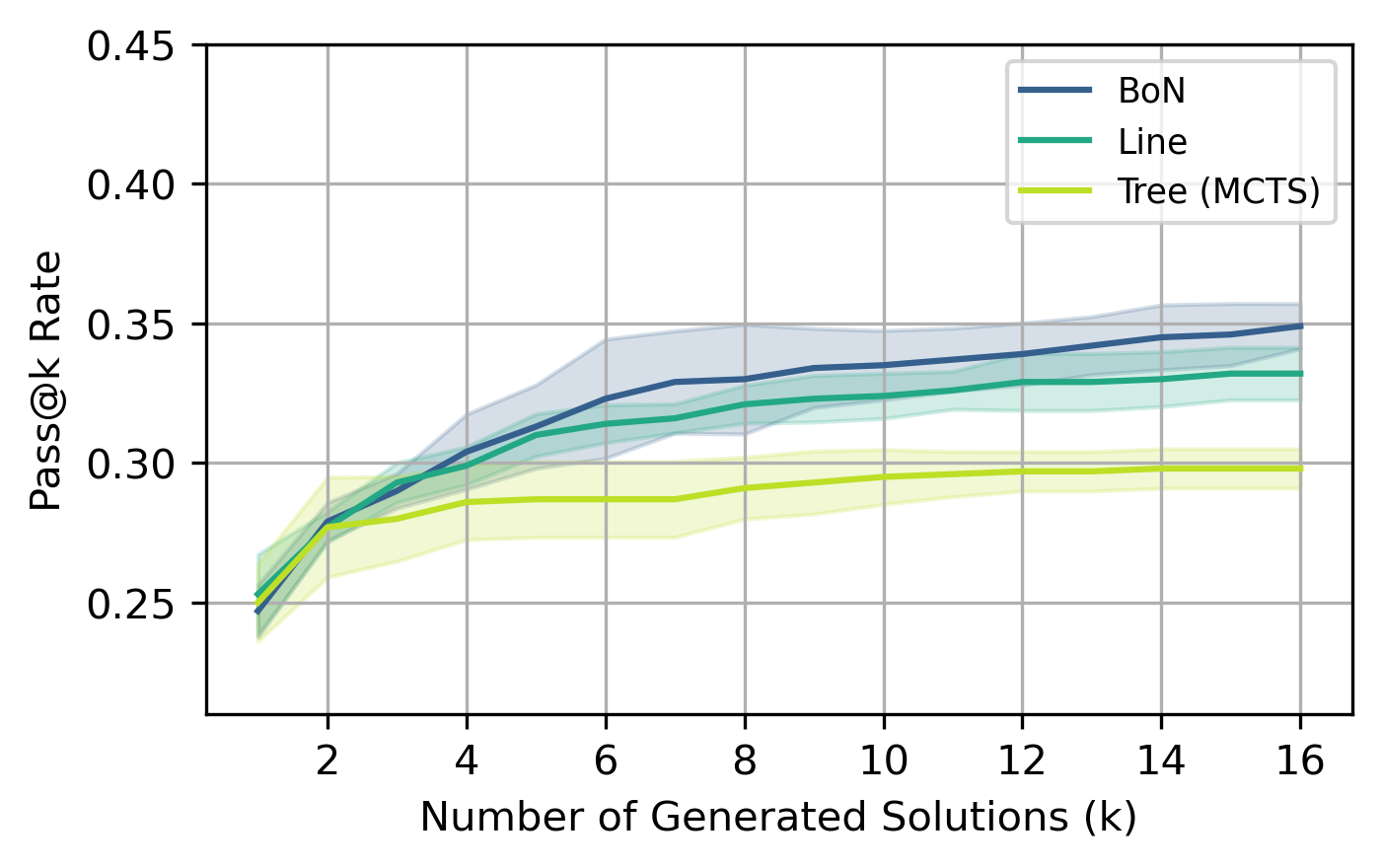}
        \label{fig:scaling1-gpt-4o-mini-apps}
    \end{subfigure}
    \begin{subfigure}[b]{0.49\linewidth}
        \centering
        \includegraphics[width=\linewidth]{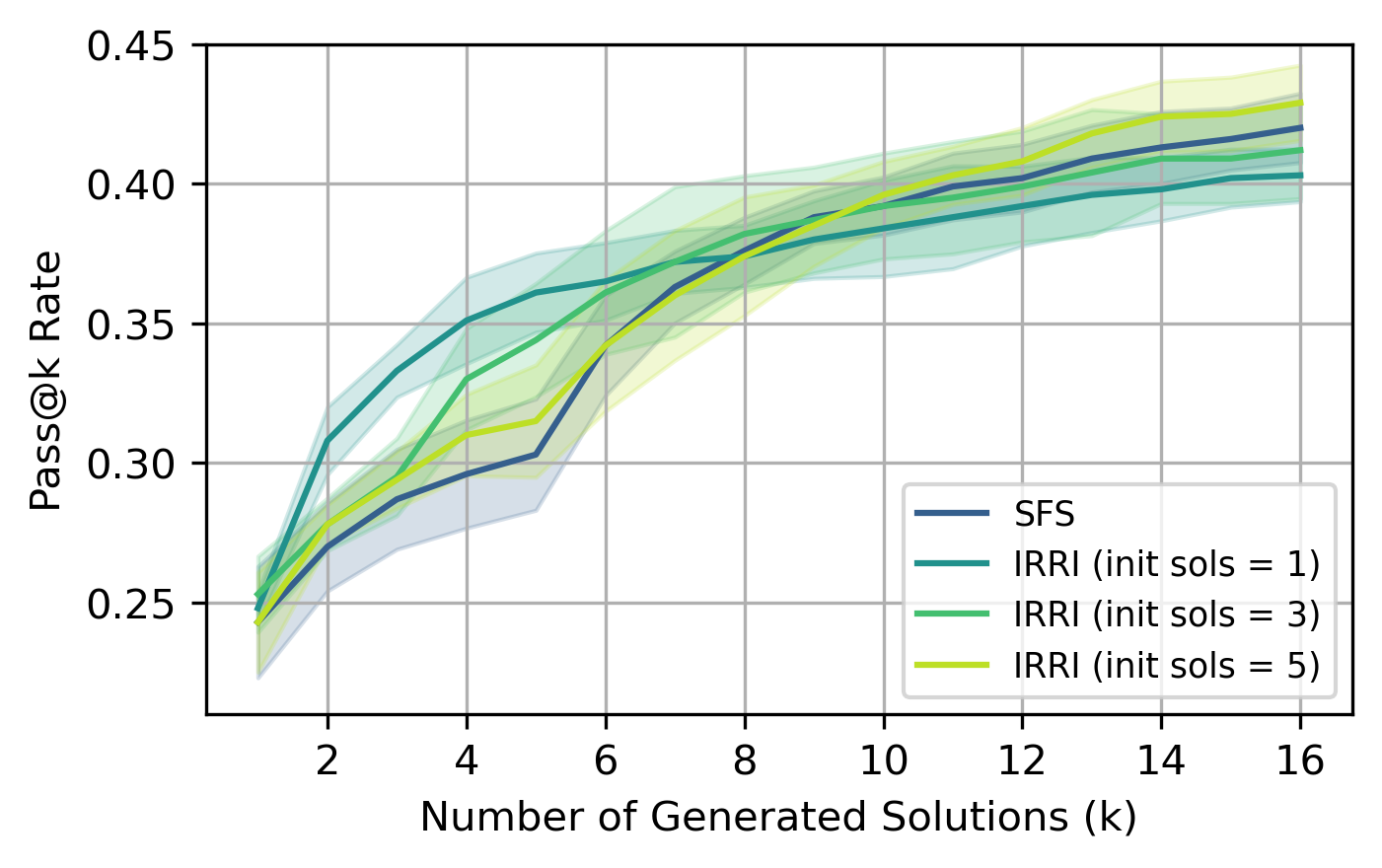}
        \label{fig:scaling2-gpt-4o-mini-apps}
    \end{subfigure}
    \caption{\textbf{Scaling curves for different methods on APPS using gpt-4o-mini.} Curves show the mean of five runs, with shaded areas indicating 95\% confidence intervals based on the $t$-distribution.}
    \label{fig:scaling-gpt-4o-mini-apps}
\end{figure}

\subsection{Diversity of repair instructions}
\label{subsec:diversity-of-repair-instructions}

\begin{figure}[b]
    \centering
    \begin{subfigure}[b]{0.32\linewidth}
        \centering
        \includegraphics[width=\linewidth]{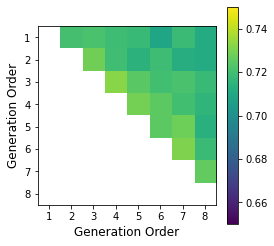}
        \caption{IRRI (init sols = 1)}
        \label{fig:heatmap-irri1-gpt-4o-mini-apps}
    \end{subfigure}
    \begin{subfigure}[b]{0.32\linewidth}
        \centering
        \includegraphics[width=\linewidth]{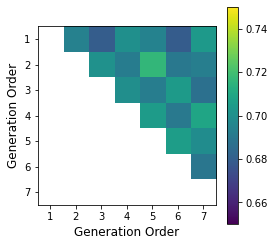}
        \caption{IRRI (init sols = 3)}
        \label{fig:heatmap-irri3-gpt-4o-mini-apps}
    \end{subfigure}
    \begin{subfigure}[b]{0.32\linewidth}
        \centering
        \includegraphics[width=\linewidth]{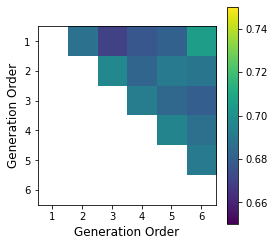}
        \caption{IRRI (init sols = 5)}
        \label{fig:heatmap-irri5-gpt-4o-mini-apps}
    \end{subfigure}
    \caption{\textbf{BERT similarity heatmaps on APPS using gpt-4o-mini.} We generated embeddings for repair instructions using the ‘all-MiniLM-L6-v2’ model of SentenceTransformers.}
    \label{fig:heatmap-gpt-4o-mini-apps}
\end{figure}

To quantify diversity in IRRI, we measured the average cosine similarity between the generated repair instructions across different refinement steps. We used the ‘all-MiniLM-L6-v2’ model of SentenceTransformers \citep{reimers2019sentence} to obtain embeddings for these repair instructions and summarized our results as heatmaps. Figure~\ref{fig:heatmap-gpt-4o-mini-apps} shows the results on APPS obtained using gpt-4o-mini as the base model. The results for the other datasets and base models are included in Appendix~\ref{subapp:diversity-of-repair-instructions}. We observed that increasing the number of initial programs leads to greater diversity in repair instructions. This suggests that increasing the number of initial programs enables IRRI to generate repair instructions that are not achievable with fewer initial programs, which is consistent with our theoretical analysis on discriminative power.

\subsection{Computational cost comparison}
\label{subsec:computational-cost-comparison}

To reveal the efficiency of IRRI, we compared the number of LLM calls, the number of output tokens, and the total number of tokens between SFS and IRRI. For each metric, we collected statistics when generating $k = 16$ program candidates and reported the average over five runs. Table~\ref{tab:cost-gpt-4o-mini-apps} shows the results on APPS obtained using gpt-4o-mini as the base model. The results for the other datasets and base models are included in Appendix~\ref{subapp:computational-cost-comparison}. We found that IRRI requires fewer LLM calls and consumes fewer tokens than SFS, highlighting the advantages of our approach.

\begin{table}[htbp]
    \small
    \centering
    \begin{tabular}{c|cc}
        \toprule
        \textbf{Metric} & \textbf{SFS} & \textbf{IRRI (init sols = 5)} \\
        \midrule
        Average LLM calls & 30.95 & 29.64 \\
        Average output tokens & 8228 & 7792 \\
        Average total tokens & 56348 & 52900 \\
        \bottomrule
    \end{tabular}
    \caption{\textbf{Computational cost comparison on APPS using gpt-4o-mini.} We compared the number of LLM calls, the number of output tokens, and the total number of tokens. For each metric, we collected statistics when generating $k = 16$ program candidates and reported the average over five runs.}
    \label{tab:cost-gpt-4o-mini-apps}
\end{table}

\section{Conclusion}
We have analyzed the state-of-the-art method SFS and found that breadth-wise exploration toward high-quality repair instructions plays an important role in the self-refinement process. From this analysis, we propose IRRI as a simpler method for multi-turn program correction. The simplicity of IRRI enables theoretical analysis, allowing us to provide guarantees on the \textit{non-elimination} of the method. Experimental results demonstrate that IRRI achieves competitive performance compared to SFS without requiring complex search structures. These findings provide new insights into the mechanisms underlying improved inference performance in multi-turn program correction and suggest promising directions for future research.

\bibliography{myref}
\bibliographystyle{colm2026_conference}

\appendix

\section{Overview of the problem formulation}
\label{app:overview-of-the-problem-formulation}

Figure~\ref{fig:overview-of-the-problem-formulation} illustrates an overview of the problem formulation described in Section~\ref{subsec:problem-formulation}.

\begin{figure}[htbp]
    \centering
    \includegraphics[width=\linewidth]{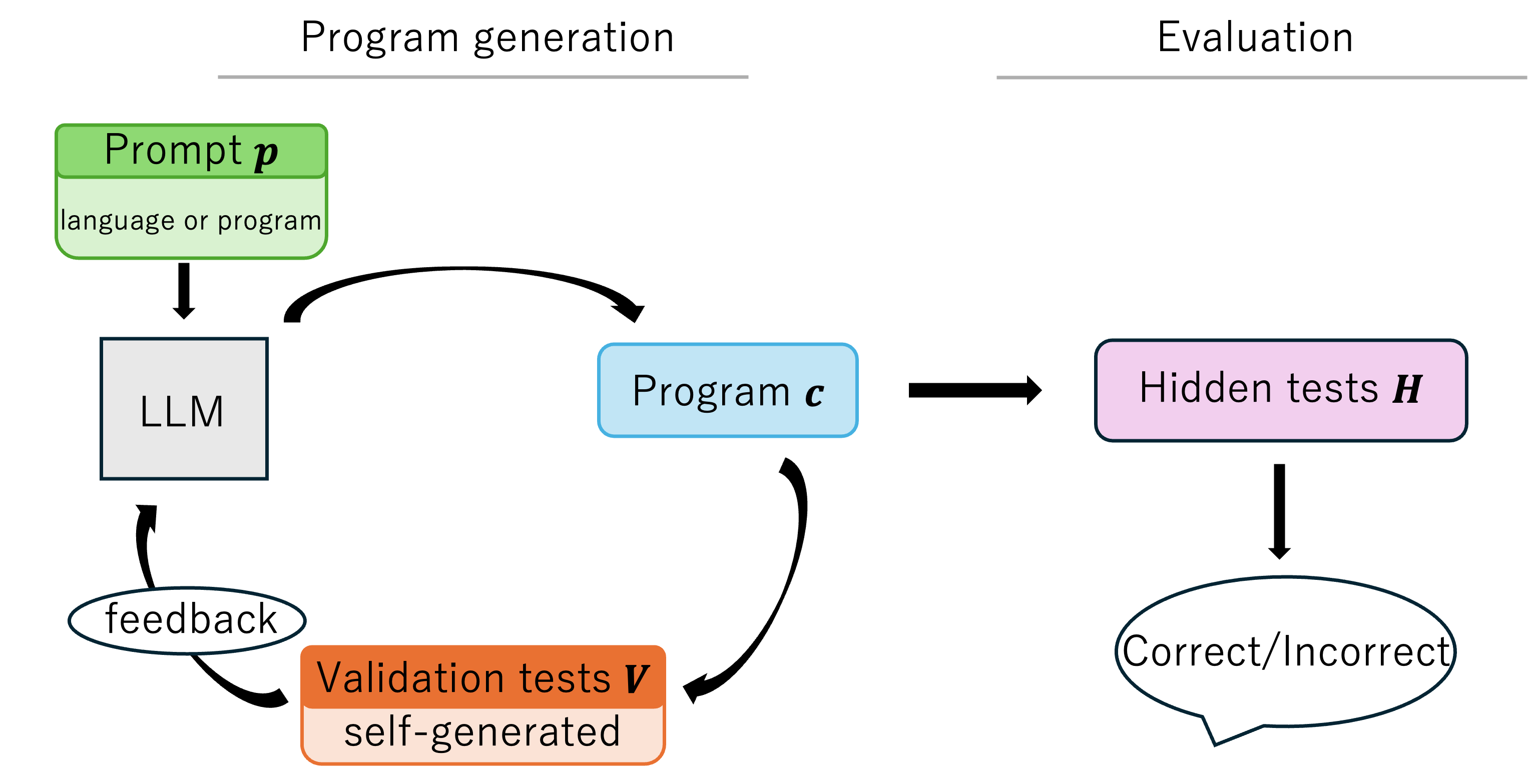}
    \caption{\textbf{Overview of the problem formulation.} Adapted from \citet{light2025sfs}. Given a prompt $p$, an LLM iteratively generates and refines programs $c$ on the basis of feedback from validation tests $V$. The objective is to generate a program $c_{true}$ that passes all hidden tests $H$.}
    \label{fig:overview-of-the-problem-formulation}
\end{figure}

\section{Details of \textsc{Scattered Forest Search} (SFS)}
\label{app:details-of-sfs}

To enable diverse exploration of the solution space in program generation, \citet{light2025sfs} have proposed \textsc{Scattered Forest Search} (SFS). SFS builds on MCTS and incorporates techniques such as careful seed initialization and textual optimization. This method encourages broader exploration of the solution space by combining LLM-driven repair instructions, the introduction of multiple initial programs, and the utilization of feedback and past search experience. Specifically, SFS consists of three main components: \textsc{Scattering}, \textsc{Foresting}, and \textsc{Scouting}, as illustrated in Figure~\ref{fig:overview-of-sfs}. 

\textbf{\textsc{Scattering}}: This component dynamically varies input prompts with different repair instructions when sampling revised programs. More concretely, given a program selected by MCTS along with its associated feedback, an LLM generates multiple repair instructions. Then, one of them is selected and incorporated into the prompt for generating the revised program.

\textbf{\textsc{Foresting}}: This component generates multiple initial programs that act as starting points for MCTS. In this process, initial programs are generated using either the same base prompt or diversified prompts obtained by randomly appending instruction statements.

\textbf{\textsc{Scouting}}: This component provides feedback on the repair instruction used during the refinement process and distills general insights that can be shared across search branches. These insights are incorporated into the prompt as auxiliary information when an LLM generates reflections and repair instructions. This enables the promotion of effective repair instructions and the suppression of unproductive ones.

\begin{figure}[htbp]
    \centering
    \includegraphics[width=\linewidth]{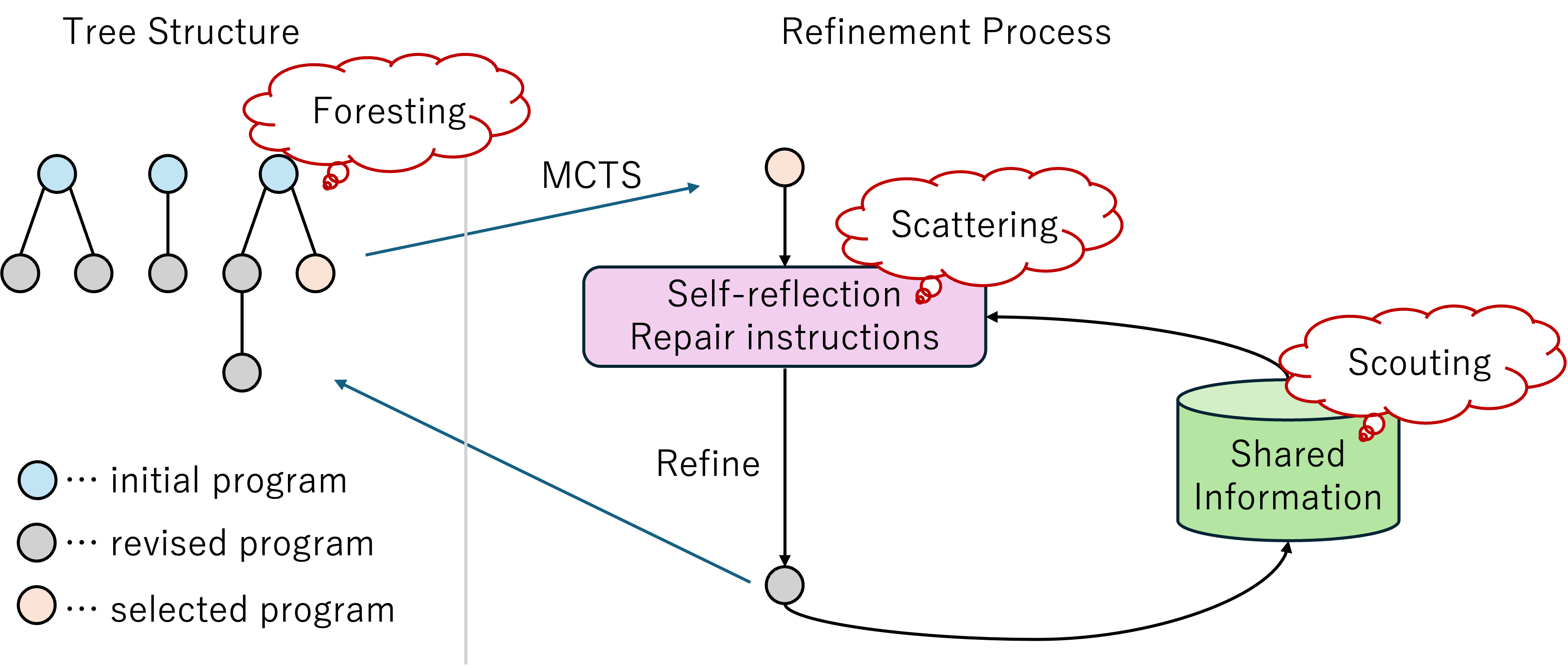}
    \caption{\textbf{Overview of SFS.} Adapted from \citet{light2025sfs}. SFS builds on MCTS and incorporates techniques such as careful seed initialization and textual optimization. \textsc{Scattering} diversifies refinements via dynamic prompt variation, \textsc{Foresting} provides multiple starting programs for MCTS, and \textsc{Scouting} propagates insights about repair instructions across the search process.}
    \label{fig:overview-of-sfs}
\end{figure}

\section{Additional analyses of SFS}

\subsection{Proof of Theorem~\ref{thm:about-depth-in-sfs}}
\label{subapp:proof-of-thm-about-depth-in-sfs}

\begin{proof}
    Let $R$ be the root and $S_1, \ldots, S_m$ be the children of the root at depth 1. For the $t$-th node selection, define the following.
    \begin{itemize}
        \item $D_t$: the depth of the node returned by the $t$-th selection.
        \item $M_t(v)$: the number of unexpanded actions of node $v$ at time $t$.
        \item $B_t(v)$: the event that node $v$ has at least one better child at time $t$.
        \item $J_t$: the index of the child selected by UCT at the stage of choosing a depth-1 child from the root.
        \item $G_t := \left\{j \in \{1, \ldots, m\} \mid M_t\left(S_j\right) = 0 \land B_t\left(S_j\right)\right\}$.
    \end{itemize}
    Furthermore, let $\mathcal{F}_t$ denote the information that has already been determined as the state of the search tree up to the moment just before the root selects a depth-1 child in the $t$-th selection.
    
    For example, $\mathcal{F}_t$ includes the current evaluation value of the node under consideration. In SFS, each node is associated with a quality score, and a better child is defined as a child whose score is higher than that of its parent. Therefore, the score represents how promising the node is currently considered to be. Moreover, $\mathcal{F}_t$ includes the number of times the node under consideration has been selected so far. That is, $\mathcal{F}_t$ includes the current search tree, the randomness realized up to that point, and the outputs of an LLM. Note that it does not include any randomness used in the final selection determining $J_t$. Under this condition, the following holds.

    From $M_t(R) = 0$ and $B_t(R)$, a selection does not stop at the root and descends to the depth-1 node $S_{J_t}$ chosen at the root. Furthermore, a selection proceeds to depth 2 or deeper only if the node $S_{J_t}$ satisfies
    \begin{align}
        M_t\left(S_{J_t}\right) = 0 \quad \land \quad B_t\left(S_{J_t}\right).
    \end{align}
    By the definition of $G_t$, this is equivalent to $J_t \in G_t$. Therefore,
    \begin{align}
        \Pr\left(D_t \geq 2 \mid \mathcal{F}_t\right) = \Pr\left(J_t \in G_t \mid \mathcal{F}_t\right) = \sum_{j \in G_t} \Pr\left(J_t = j \mid \mathcal{F}_t\right).
    \end{align}
    Here, if there exists some $\varepsilon_t \in[0,1]$ such that
    \begin{align}
        \sum_{j \in G_t} Pr\left(J_t = j \mid \mathcal{F}_t\right) \leq \varepsilon_t,
    \end{align}
    then we have
    \begin{align}
        \Pr\left(D_t \geq 2 \mid \mathcal{F}_t\right) \leq \varepsilon_t.
    \end{align}
    Hence,
    \begin{align}
        \Pr\left(D_t = 1 \mid \mathcal{F}_t\right) = 1 -\Pr\left(D_t \geq 2 \mid \mathcal{F}_t\right) \geq 1 - \varepsilon_t.
    \end{align}
    In the initial phase, if there are few depth-1 nodes in $G_t$ or the root-level UCT strongly concentrates on them, $\varepsilon_t$ becomes sufficiently small, indicating that the selection process is more likely to stop at depth 1 than to proceed to more than depth 2.
\end{proof}

\subsection{Supplementary empirical results}
\label{subapp:supplementary-empirical-results}

As show in Figures~\ref{fig:tree-sfs} and~\ref{fig:tree-noforesting}, we visualized the search trees constructed by SFS and \textsc{No Foresting}. In each node, the number at the top indicates the generation order of the program, while the number at the bottom represents the accuracy on validation tests. The red node denotes the first correct program in the reasoning process.

\begin{figure}[htbp]
    \centering
    \begin{subfigure}[b]{0.9\linewidth}
        \centering
        \includegraphics[width=\linewidth]{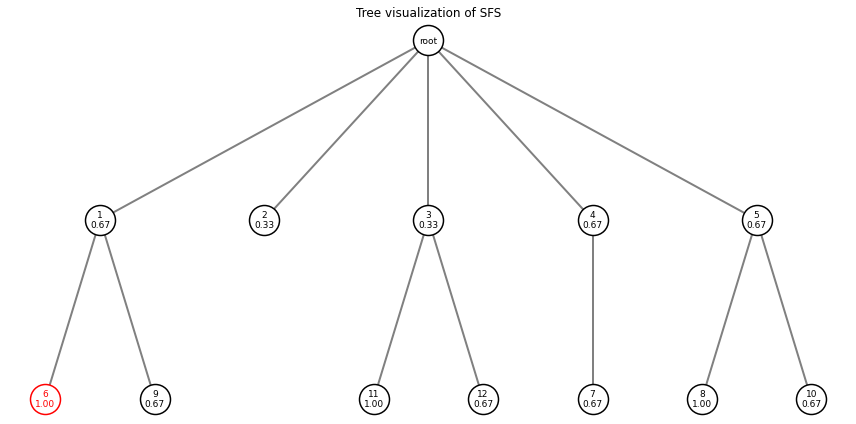}
        \caption{HumanEval}
        \label{fig:tree-sfs-humaneval}
    \end{subfigure}
    \begin{subfigure}[b]{0.9\linewidth}
        \centering
        \includegraphics[width=\linewidth]{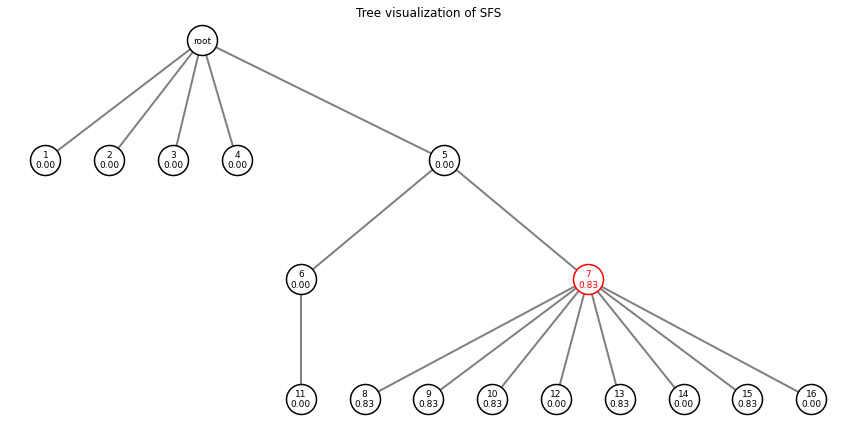}
        \caption{MBPP}
        \label{fig:tree-sfs-mbpp}
    \end{subfigure}
    \begin{subfigure}[b]{0.9\linewidth}
        \centering
        \includegraphics[width=\linewidth]{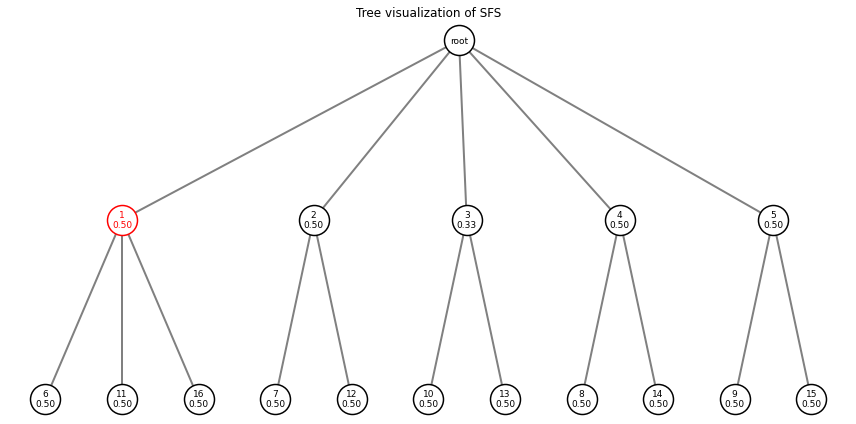}
        \caption{APPS}
        \label{fig:tree-sfs-apps}
    \end{subfigure}
    \caption{\textbf{Examples of search trees produced by SFS.} In each node, the number at the top shows the generation order and the number at the bottom shows the validation accuracy. The red node marks the first correct program.}
    \label{fig:tree-sfs}
\end{figure}

\begin{figure}[htbp]
    \centering
    \begin{subfigure}[b]{0.9\linewidth}
        \centering
        \includegraphics[width=\linewidth]{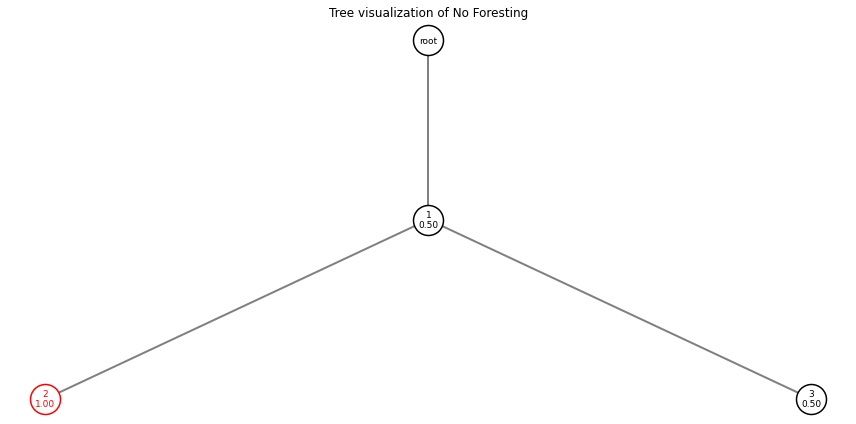}
        \caption{HumanEval}
        \label{fig:tree-noforesting-humaneval}
    \end{subfigure}
    \begin{subfigure}[b]{0.9\linewidth}
        \centering
        \includegraphics[width=\linewidth]{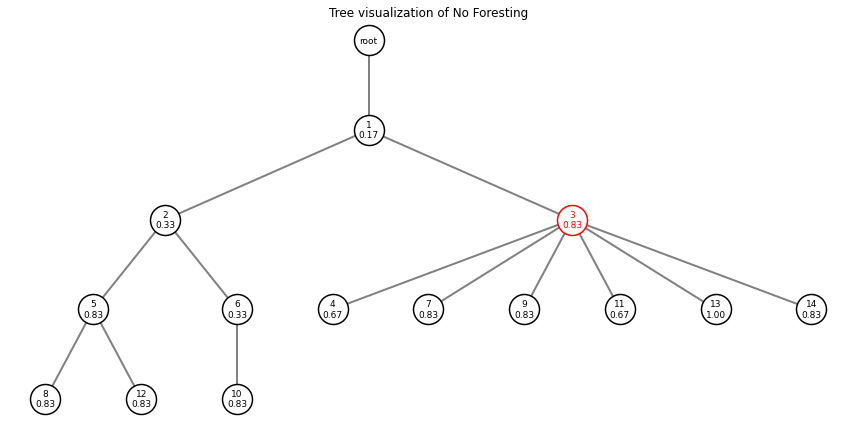}
        \caption{MBPP}
        \label{fig:tree-noforesting-mbpp}
    \end{subfigure}
    \begin{subfigure}[b]{0.9\linewidth}
        \centering
        \includegraphics[width=\linewidth]{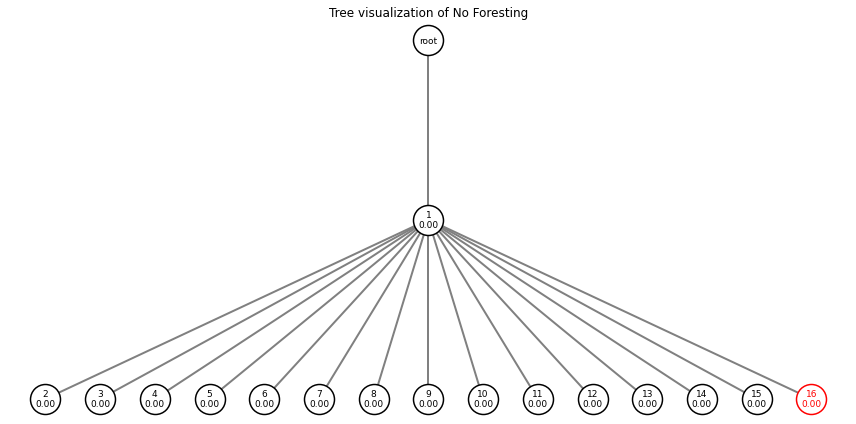}
        \caption{APPS}
        \label{fig:tree-noforesting-apps}
    \end{subfigure}
    \caption{\textbf{Examples of search trees produced by \textsc{No Foresting}.} In each node, the number at the top shows the generation order and the number at the bottom shows the validation accuracy. The red node marks the first correct program.}
    \label{fig:tree-noforesting}
\end{figure}

We also measured the distribution of the maximum depth as shown in Tables~\ref{tab:max-depth-in-sfs} and~\ref{tab:max-depth-in-noforesting}. These results suggest that when the refinement process is repeated around a dozen times, the search tends to expand revised programs in the breadth direction rather than the depth direction.

\begin{table}[htbp]
    \small
    \centering
    \begin{tabular}{c|cccccc}
        \toprule
        \textbf{Dataset} & \textbf{Pass@1} & \textbf{Pass@16} & \textbf{Depth $=$ 1} & \textbf{Depth $=$ 2} & \textbf{Depth $\ge$ 3} \\
        \midrule
        HumanEval & 88.13\% & 95.00\% & 58.12\% & 33.12\% & 8.76\% \\
        MBPP & 77.58\% & 86.65\% & 55.92\% & 30.23\% & 13.85\% \\
        APPS & 25.00\% & 39.50\% & 8.00\% & 61.00\% & 31.00\% \\
        CodeContests & 7.88\% & 18.79\% & 1.82\% & 72.73\% & 25.45\% \\
        \bottomrule
    \end{tabular}
    \caption{\textbf{Maximum depth in SFS.} SFS generates 5 initial programs and repeats the refinement process up to 11 times.}
    \label{tab:max-depth-in-sfs}
\end{table}

\begin{table}[htbp]
    \small
    \centering
    \begin{tabular}{c|ccccccc}
        \toprule
        \textbf{Dataset} & \textbf{Pass@1} & \textbf{Pass@16} & \textbf{Depth $=$ 1} & \textbf{Depth $=$ 2} & \textbf{Depth $\ge$ 3} \\
        \midrule
        HumanEval & 87.50\% & 94.38\% & 54.37\% & 40.62\% & 5.01\% \\
        MBPP & 76.57\% & 84.89\% & 53.14\% & 33.25\% & 13.61\% \\
        APPS & 24.50\% & 39.50\% & 7.50\% & 58.50\% & 34.00\% \\
        CodeContests & 6.06\% & 18.18\% & 0.61\% & 75.76\% & 23.63\%\\
        \bottomrule
    \end{tabular}
    \caption{\textbf{Maximum depth in SFS.} SFS generates 5 initial programs and repeats the refinement process up to 11 times.}
    \label{tab:max-depth-in-noforesting}
\end{table}

\FloatBarrier
\section{Overview of \textsc{Iterative Refinement of Repair Instructions} (IRRI)}
\label{app:overview-of-irri}

Algorithm~\ref{alg:irri} presents the pseudo code of IRRI, following the description in Section~\ref{subsec:proposed-framework}. Figure~\ref{fig:overview-of-irri} illustrates an overview of IRRI, and Figure~\ref{fig:difference-in-the-refinement-process} shows the difference between existing methods and IRRI.

\begin{algorithm}
    \caption{\textsc{Iterative Refinement of Repair Instructions} (IRRI)}
    \label{alg:irri}
    \begin{algorithmic}[1]
        \Require LLM $\mathcal{M}$, program generation task $x = \langle p, H \rangle$
        \Require Number of initial programs $n$
        \Require Number of repair instructions generated per step $m$
        \Require Maximum number of refinements $i_{max}$

        \State Generate validation tests $V \gets \mathcal{M}(p)$
        
        \State \textcolor{darkblue}{\textbf{Initial Generation}}
        \State $i \gets 1$
        \For{$i \le n$}
            \State Generate initial program $c_i \gets \mathcal{M}(p)$
            \If {$c_i$ passes all validation tests $V$}
                \State \textcolor{purple}{\textbf{Terminate the inference process}}
            \EndIf
            \State $i \gets i + 1$
        \EndFor

        \State \textcolor{darkblue}{\textbf{Refinement Process}}
        \State Shared information $\mathcal{I} \gets \emptyset$
        \State $i \gets 0$
        \While {$i < i_{max}$}
            \For {$c_s \in \{c_1, \ldots, c_n\}$}
                \State Generate repair instructions $d_1, \ldots, d_m \gets \mathcal{M}(p, c_s, \mathcal{I})$
                \For {$d_s \in \{d_1, \ldots, d_m\}$}
                    \State Refine program $c_r \gets \mathcal{M}(p, c_s, d_s)$
                    \If {$c_r$ passes all validation tests $V$}
                        \State \textcolor{purple}{\textbf{Terminate the inference process}}
                    \EndIf
                    \State $i \gets i + 1$
                    \If {$i \ge i_{max}$}
                        \State \textcolor{purple}{\textbf{Terminate the inference process}}
                    \EndIf
                    \State \textcolor{darkblue}{\textbf{Feedback for Repair Instructions}}
                    \State Update shared information $\mathcal{I} \gets \mathcal{M}(p, c_s, \mathcal{I}, d_s, c_r)$
                \EndFor
            \EndFor
        \EndWhile
    \end{algorithmic}
\end{algorithm}

\begin{figure}[htbp]
    \centering
    \includegraphics[width=\linewidth]{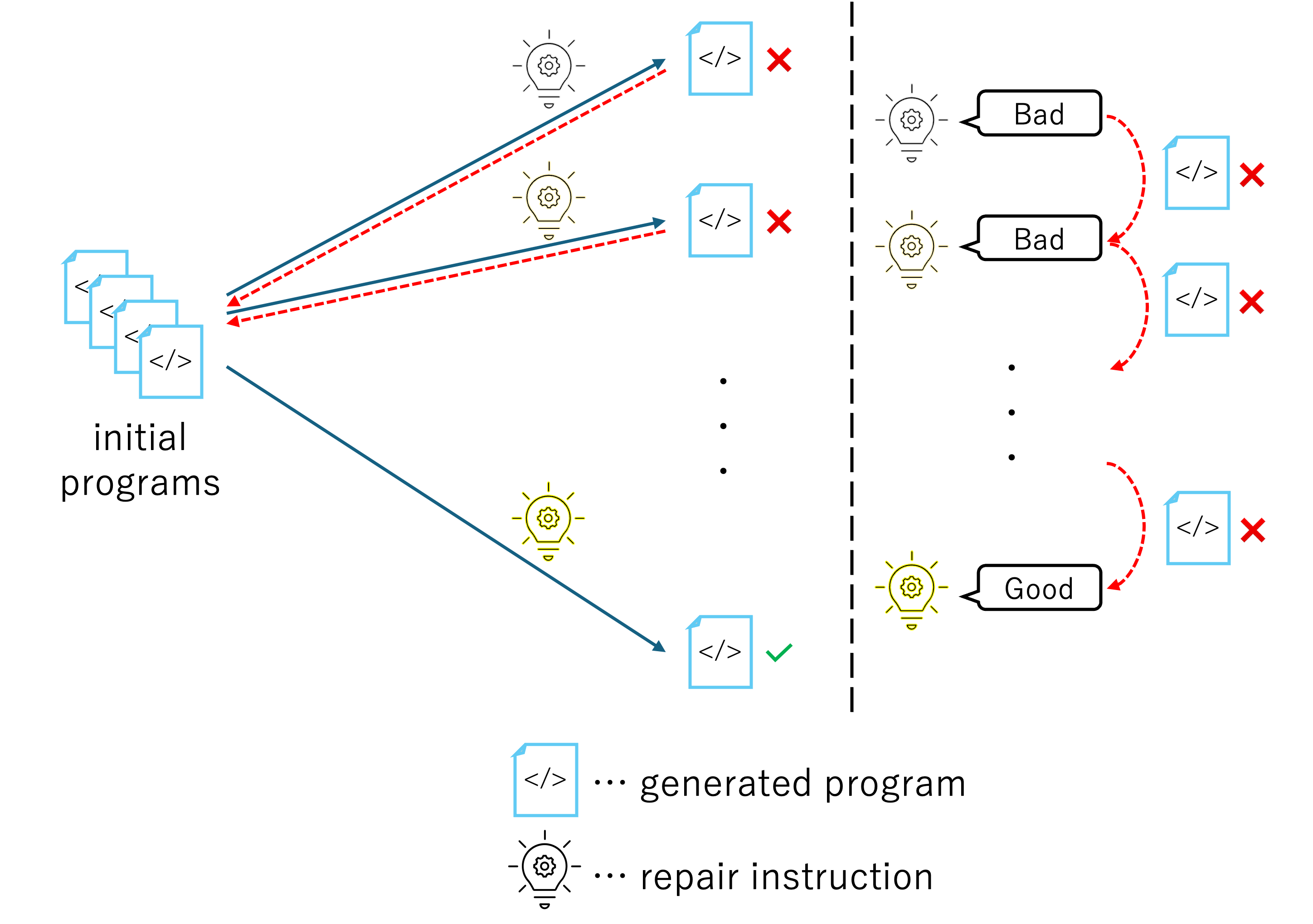}
    \caption{\textbf{Overview of IRRI.} We propose a new self-correction method that refines initial programs through iterative feedback on repair instructions.}
    \label{fig:overview-of-irri}
\end{figure}

\begin{figure}[htbp]
    \centering
    \begin{subfigure}[b]{0.49\linewidth}
        \centering
        \includegraphics[width=\linewidth]{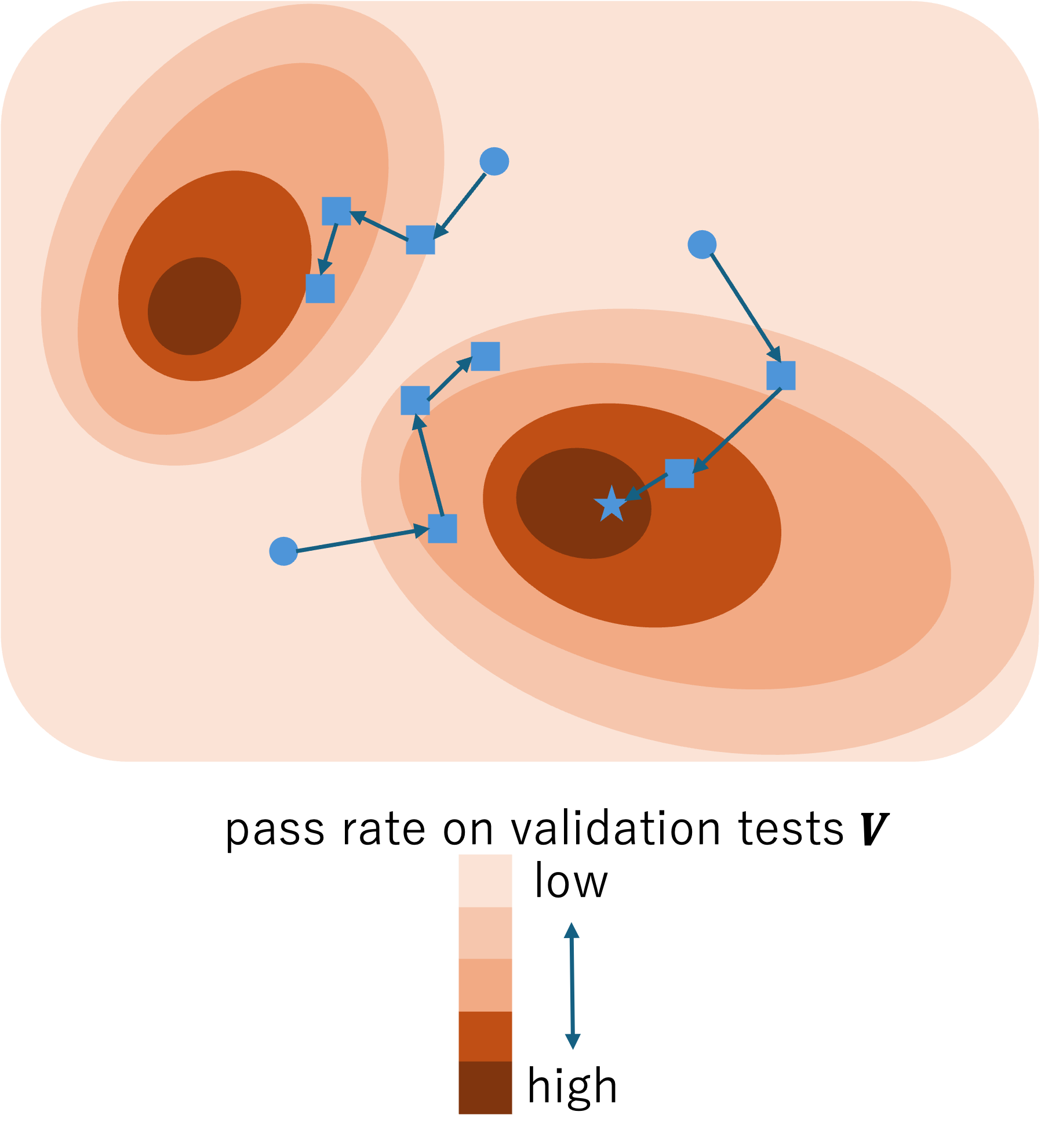}
        \caption{Existing methods}
        \label{fig:refinement-in-existing-methods}
    \end{subfigure}
    \begin{subfigure}[b]{0.49\linewidth}
        \centering
        \includegraphics[width=\linewidth]{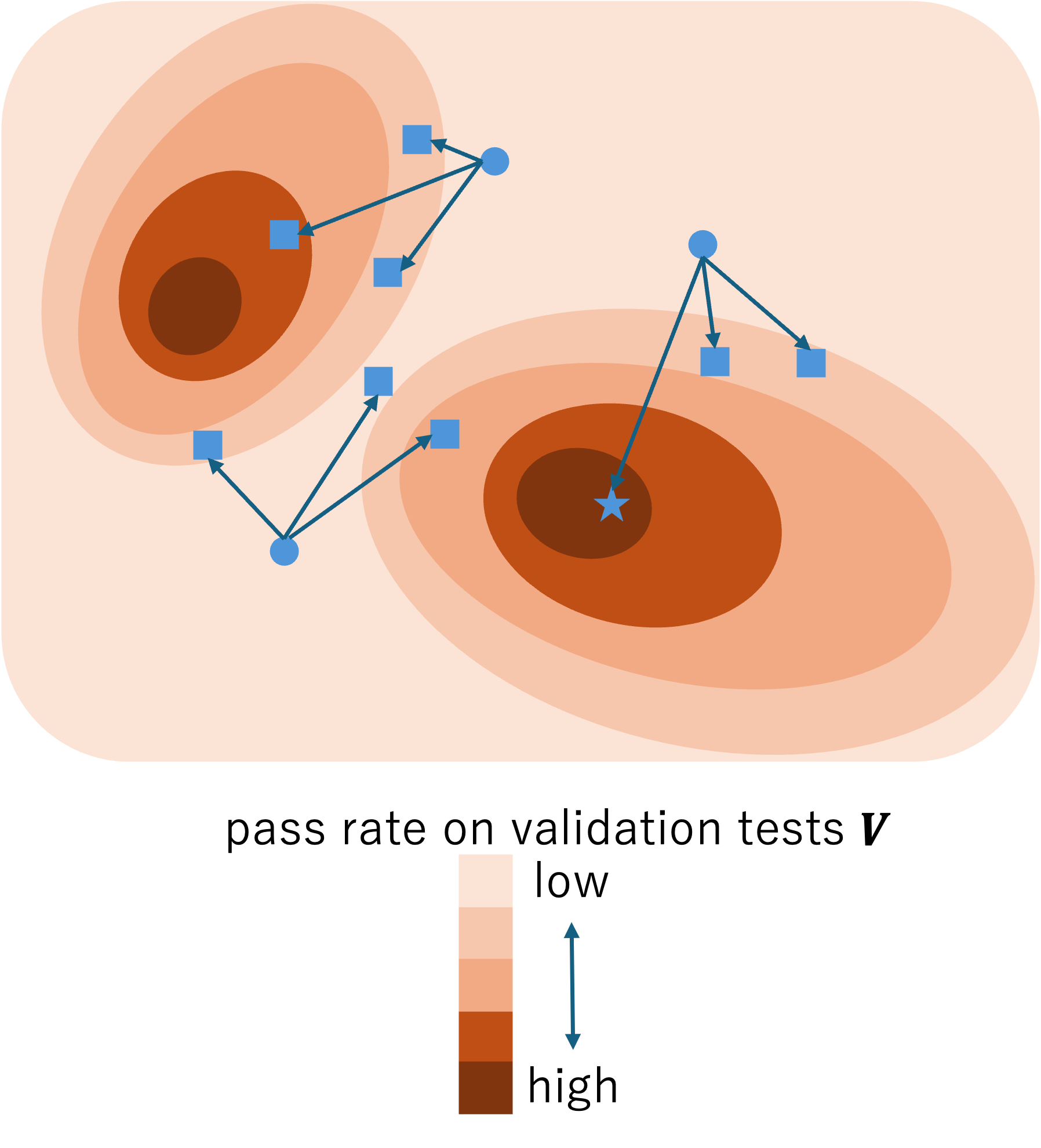}
        \caption{IRRI}
        \label{fig:refinement-in-irri}
    \end{subfigure}
    \caption{\textbf{Difference in the refinement process between existing methods and IRRI.} Circles denote initial programs, squares denote revised programs, and stars denote correct programs.}
    \label{fig:difference-in-the-refinement-process}
\end{figure}

\section{Proofs and theoretical details}

\subsection{Conceptual origin of \textit{version space}}
The notion of \textit{version space} originates as a fundamental concept in learning theory, where it denotes the set of hypotheses that are consistent with the observed data. In learning theory, let $\mathcal{H}$ denotes the hypothesis space, and let $ D = \left\{\left(x_1, y_1\right), \ldots, \left(x_n, y_n\right)\right\}$ denotes the observed (training) data. Then, the \textit{version space} is defined as
\begin{align}
    V S_{\mathcal{H}}(D) := \left\{h \in \mathcal{H} \mid h\left(x_i\right) = y_i \text { for all } i\right\}.
\end{align}
From the perspective of \textit{version space}, learning can be viewed as a process of progressively eliminating hypotheses based on the training data.

\subsection{Proof of Theorem~\ref{thm:probabilistic-assumption}}
\label{subapp:proof-of-thm-probabilistic-assumption}

\begin{proof}
    For each $j = 1, \ldots, m$, define
    \begin{align}
        B_j := \{c^{(j)} \in \mathcal R(d^\dagger)\}.
    \end{align}
    By Assumption~\ref{asm:locality-of-llm},
    \begin{align}
        \bigcap_{j=1}^{m} B_j \subseteq G.
    \end{align}
    Therefore,
    \begin{align}
        \Pr[G] \ge \Pr\!\left[\bigcap_{j=1}^{m} B_j\right].
    \end{align}
    Here, by the union bound,
    \begin{align}
        \Pr\!\left[\bigcap_{j=1}^{m} B_j\right] = 1 - \Pr\!\left[\bigcup_{j=1}^{m} B_j^{c}\right] \ge 1 -\sum_{j=1}^{m} \Pr[B_j^{c}] \ge 1 - m \delta.
    \end{align}
    Hence,
    \begin{align}
        \Pr[G] \ge 1 - m \delta.
    \end{align}
    Furthermore, if $c^{(1)}, \ldots, c^{(m)}$ are independent,
    \begin{align}
        \Pr\!\left[\bigcap_{j=1}^{m} B_j\right] = \prod_{j=1}^{m} \Pr[B_j] \ge (1 - \delta)^m.
    \end{align}
    Therefore,
    \begin{align}
        \Pr[G] \ge (1 - \delta)^m.
    \end{align}
\end{proof}

\subsection{Proof of Theorem~\ref{thm:non-elimination-of-irri}}
\label{subapp:proof-of-thm-non-elimination-of-irri}

\begin{proof}
    \textit{1}. By definition,
    \begin{align}
        \mathcal{V}_{t+1} = \bigcap_{i = 1}^{t+1} D_{c_i}\left(e_i\right) = \left(\bigcap_{i = 1}^{t} D_{c_i}\left(e_i\right)\right) \cap D_{c_{t+1}}\left(e_{t+1}\right) = \mathcal{V}_t \cap D_{c_{t+1}}\left(e_{t+1}\right) \subseteq \mathcal{V}_t.
    \end{align}
    \textit{2}. Take an arbitrary $d \in D_{\mathrm{stab}}^{\star}$. By the definition of $D_{\mathrm{stab}}^{\star}$, since
    \begin{align}
        c_i \in C_{\mathrm{init}}, \qquad e_i \in E_{c_i}^{\mathrm{obs}},
    \end{align}
    for any $i = 1, \ldots, t$, it follows that
    \begin{align}
        d \in D_{c_i}\left(e_i\right).
    \end{align}
    Therefore,
    \begin{align}
        d \in \bigcap_{i = 1}^{t} D_{c_i}\left(e_i\right) = \mathcal{V}_t.
    \end{align}
    Hence,
    \begin{align}
        D_{\mathrm{stab}}^{\star} \subseteq \mathcal{V}_t.
    \end{align}
    \textit{3}. From $D_{\mathrm{stab}}^{\star} \neq \varnothing$ and $D_{\mathrm{stab}}^{\star} \subseteq \mathcal{V}_t$,
    \begin{align}
        \mathcal{V}_t \neq \varnothing.
    \end{align}
\end{proof}

\subsection{\textit{Non-elimination} of IRRI in the case of a single initial program under deterministic assumption}
\label{subapp:non-elimination-of-irri-in-the-case-of-single-initial-program}

Let $c_f$ be a fixed initial program and define a set of observations that can actually arise from $c_f$ as
\begin{align}
    E_{c_f}^{\mathrm{obs}} := \{e \in E \mid e \text{ is observable from } c_f\} \subseteq E.
\end{align}
Throughout the refinement process, it is important that observations do not eliminate the truly valid repair instructions. Accordingly, we assume the following soundness.

\begin{assumption}
    \label{asm:deterministic-assumption-in-single-initial-program}
    \begin{align}
        \forall e \in E_{c_f}^{\mathrm{obs}}, \forall d^{\star} \in D_{c_f}^{\star}: \quad d^{\star} \in D_{c_f}(e).
    \end{align}
\end{assumption}

This assumption means that any valid repair instruction is consistent with every observation arising from $c_f$. Under this condition, the following theorem holds.

\begin{theorem}
    \label{thm:non-elimination-of-irri-in-the-case-of-single-initial-program}
    Under Assumption~\ref{asm:deterministic-assumption-in-single-initial-program} and $D_{c_f}^{\star} \neq \varnothing$, the version space satisfies the following properties for any realized sequence of observations $e_1, \ldots, e_{t+1}$ such that $e_i \in E_{c_f}^{\mathrm{obs}}$.
    \begin{enumerate}
        \item $\mathcal{V}_{t+1} \subseteq \mathcal{V}_t$ (monotonic shrinking of version space).
        \item $D_{c_f}^{\star} \subseteq \mathcal{V}_t$ (version space retains truly correct repair instructions).
        \item $\mathcal{V}_t \neq \varnothing$ (version space is not empty).
    \end{enumerate}
\end{theorem}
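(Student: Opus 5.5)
The plan mirrors the proof of Theorem~\ref{thm:non-elimination-of-irri}, specialized to the case where every $c_i$ equals the fixed initial program $c_f$. The version space then becomes $\mathcal{V}_t = \bigcap_{i=1}^{t} D_{c_f}(e_i)$ with each $e_i \in E_{c_f}^{\mathrm{obs}}$. The three claims will be proved in order, each using only the preceding ones together with Assumption~\ref{asm:deterministic-assumption-in-single-initial-program}.

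For property 1, I would split off the last term of the intersection:
\begin{align}
    \mathcal{V}_{t+1} = \mathcal{V}_t \cap D_{c_f}(e_{t+1}) \subseteq \mathcal{V}_t .
\end{align}
This holds regardless of any assumption; it is purely set-theoretic.

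For property 2, I would fix an arbitrary $d^\star \in D_{c_f}^\star$ and an index $i \in \{1,\ldots,t\}$. Since $e_i \in E_{c_f}^{\mathrm{obs}}$, Assumption~\ref{asm:deterministic-assumption-in-single-initial-program} gives directly that $d^\star \in D_{c_f}(e_i)$. Because $i$ was arbitrary, $d^\star$ lies in the intersection $\mathcal{V}_t$, and hence $D_{c_f}^\star \subseteq \mathcal{V}_t$. For property 3, nonemptiness then follows immediately from $D_{c_f}^\star \neq \varnothing$ combined with property 2.

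Alternatively, one could reduce to Theorem~\ref{thm:non-elimination-of-irri} by taking $C_{\mathrm{init}} = \{c_f\}$ and checking that $D_{c_f}^\star \subseteq D_{\mathrm{stab}}^\star$. This needs $D_{c_f}^\star \subseteq D^\star$, which is immediate since $c_f \in C$, together with the assumption above. The direct argument is cleaner, however, and it yields exactly the stated inclusion $D_{c_f}^\star \subseteq \mathcal{V}_t$.

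There is no real obstacle here. The only point needing care is the convention for $t=0$: if $\mathcal{V}_0$ is the empty intersection, it should be read as $D$, so that properties 2 and 3 hold trivially there as well.
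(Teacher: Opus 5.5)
Your proposal is correct and follows the paper's proof essentially verbatim: property 1 by peeling off the last intersectand $D_{c_f}(e_{t+1})$, property 2 by applying Assumption~\ref{asm:deterministic-assumption-in-single-initial-program} to each $e_i$, and property 3 from $D_{c_f}^\star \neq \varnothing$ combined with property 2. Your optional reduction to Theorem~\ref{thm:non-elimination-of-irri} with $C_{\mathrm{init}} = \{c_f\}$ (via $D_{c_f}^\star \subseteq D_{\mathrm{stab}}^\star$) is also valid, but the paper does not need it.
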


\begin{proof}
    \textit{1}. By definition,
    \begin{align}
         \mathcal{V}_{t+1} = \bigcap_{i=1}^{t+1} D_{c_f}\left(e_i\right) = \left(\bigcap_{i=1}^{t} D_{c_f}\left(e_i\right)\right) \cap D_{c_f}\left(e_{t+1}\right) = \mathcal{V}_t \cap D_{c_f}\left(e_{t+1}\right) \subseteq \mathcal{V}_t.
    \end{align}
    \textit{2}. Take an arbitrary $d^{\star} \in D_{c_f}^{\star}$. By Assumption~\ref{asm:deterministic-assumption-in-single-initial-program}, for any $i = 1, \ldots, t$, it holds that
    \begin{align}
        d^{\star} \in D_{c_f}\left(e_i\right).
    \end{align}
    Therefore,
    \begin{align}
        d^{\star} \in \bigcap_{i=1}^{t} D_{c_f}\left(e_i\right) = \mathcal{V}_t.
    \end{align}
    Hence, 
    \begin{align}
        D_{c_f}^{\star} \subseteq \mathcal{V}_t.
    \end{align}
    \textit{3}. From $D_{c_f}^{\star} \neq \varnothing$ and $D_{c_f}^{\star} \subseteq \mathcal{V}_t$,
    \begin{align}
        \mathcal{V}_t \neq \varnothing.
    \end{align}
\end{proof}

This theorem guarantees the \textit{non-elimination} of IRRI in the case of a single initial program under the deterministic assumption.

\subsection{Linear self-refinement method}
\label{subapp:linear-self-refinement-method}

Linear self-refinement methods start from an initial program $c_0 \in C$ and renew the target program at each turn as a result of applying a repair instruction with an LLM. That is, at time $t$, they refine the current program $c_{t-1}$ and obtain an observation $e_t \in E$. The constraints induced by this observation can be expressed as
\begin{align}
    D_{c_{t-1}}\left(e_t\right) \subseteq D.
\end{align}

Here, the key point is that $D_{c_t}(e)$ can vary depending on the program $c_t$ even for the same observation $e$. Therefore, the \textit{version space}
\begin{align}
    \widetilde{\mathcal{V}}_t := \bigcap_{i=1}^{t} D_{c_{i-1}}\left(e_i\right)
\end{align}
may become empty, since all observations are derived from different programs. We show this below.

\begin{observation}
    \label{obser:counterexample-of-non-elimination-of-linear-self-refinement}
    Let $D = \{d_a, d_b\}$ and $E=\{e\}$. Consider the programs $c_0$ and $c_1$, and define the consistency sets as
    \begin{align}
        D_{c_0}(e) = \left\{d_a\right\}, \quad D_{c_1}(e) = \left\{d_b\right\}.
    \end{align}
    These mean that the interpretation of the same observation $e$ varies across different programs. In this case, even if a valid repair instruction exists at each time, i.e.,
    \begin{align}
        D_{c_0}(e) \neq \varnothing, D_{c_1}(e) \neq \varnothing,
    \end{align}
    we have
    \begin{align}
        \widetilde{\mathcal{V}}_2 = D_{c_0}\left(e_1\right) \cap D_{c_1}\left(e_2\right) = \left\{h_a\right\} \cap\left\{h_b\right\}=\varnothing
    \end{align}
    for the counterexample sequence $e_1 = e, e_2 = e$. Namely, the version space become empty at time step $2$.
\end{observation}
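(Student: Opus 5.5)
The proof is a direct construction followed by evaluating the definition of $\widetilde{\mathcal{V}}_t$. The only real concern is that the prescribed consistency sets must be legitimate instances of the framework. So I will first realize them through the primitive $\operatorname{Cons}$, and only then compute the intersection.

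\textbf{Step 1 (realizing the sets).} Take $C \supseteq \{c_0, c_1\}$, $D = \{d_a, d_b\}$ and $E = \{e\}$, and define $\operatorname{Cons}$ on these triples by
\begin{align}
    \operatorname{Cons}(c_0, d_a, e) = 1, \quad \operatorname{Cons}(c_0, d_b, e) = 0, \quad \operatorname{Cons}(c_1, d_a, e) = 0, \quad \operatorname{Cons}(c_1, d_b, e) = 1.
\end{align}
By the definition $D_c(e) := \{d \in D \mid \operatorname{Cons}(c, d, e) = 1\}$, this gives $D_{c_0}(e) = \{d_a\}$ and $D_{c_1}(e) = \{d_b\}$. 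Since $\operatorname{Cons}$ is an arbitrary $\{0,1\}$-valued indicator of LLM behaviour, nothing in the framework forbids this choice. The example models the situation where the same failed test $e$ calls for different fixes in $c_0$ and in $c_1$.

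Next I check that the sequence is a valid linear self-refinement trajectory. Start from $c_0$ and observe $e_1 = e$. Let $c_1$ be the program produced at turn $1$; nothing constrains $c_1$ apart from being a program. Refining $c_1$ then yields $e_2 = e$. At each turn the consistency set is nonempty, so a repair instruction consistent with the current observation exists. If desired, one can also set $\operatorname{Pass}(c_0, d_a) = \operatorname{Pass}(c_1, d_b) = 1$, so that $D^\star_{c_0}$ and $D^\star_{c_1}$ are nonempty as well. This shows that the failure of $\widetilde{\mathcal{V}}_2$ to retain an instruction is not caused by any shortage of valid instructions.

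\textbf{Step 2 (computing the intersection).} By the definition of $\widetilde{\mathcal{V}}_t$,
\begin{align}
    \widetilde{\mathcal{V}}_2 = D_{c_0}(e_1) \cap D_{c_1}(e_2) = \{d_a\} \cap \{d_b\} = \varnothing,
\end{align}
because $d_a \neq d_b$. (The statement writes $h_a, h_b$; these should read $d_a, d_b$.) Hence item~3 of Theorem~\ref{thm:non-elimination-of-irri} has no analogue for linear self-refinement. Item~2 fails as well, since neither $d_a$ nor $d_b$ survives.

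\textbf{Expected obstacle.} There is no computational difficulty here; the one point needing care is conceptual. I must make clear why no soundness assumption of the type used in Assumption~\ref{asm:locality-of-llm} rescues the linear method. That assumption is local to a fixed program, whereas $\widetilde{\mathcal{V}}_t$ intersects constraints taken at different programs $c_{i-1}$. Local soundness at $c_0$ and at $c_1$ separately is therefore compatible with $\widetilde{\mathcal{V}}_2 = \varnothing$, and I will remark that the constructed $\operatorname{Cons}$ satisfies local soundness at each program individually.
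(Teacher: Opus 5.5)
Your proposal is correct and matches the paper: the paper also takes the consistency sets $D_{c_0}(e)=\{d_a\}$ and $D_{c_1}(e)=\{d_b\}$ as given and evaluates $\widetilde{\mathcal{V}}_2 = D_{c_0}(e_1)\cap D_{c_1}(e_2) = \varnothing$ directly from the definition, and your realization of these sets through $\operatorname{Cons}$ is just an added check that the example is admissible in the framework. You are also right that $h_a, h_b$ in the statement is a typo for $d_a, d_b$.
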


Observation~\ref{obser:counterexample-of-non-elimination-of-linear-self-refinement} suggests that, from a certain time step onward, linear self-refinement methods may not have any repair instruction satisfying the conditions up to that point. In other words, this implies that linear self-refinement methods do not necessarily satisfy \textit{non-elimination} for multi-turn program correction. 

To describe the above more generally, we define a \textit{windowed version space} using the most recent $w$ observations as
\begin{align}
    \widetilde{\mathcal{V}}_t^{(w)} := \bigcap_{i=t-w+1}^{t} D_{c_{i-1}}\left(e_i\right) \quad(w \leq t).
\end{align}
This represents a set of repair instructions that satisfy the most recent $w$ observations. In linear self-refinement methods, since $c_t$ changes over time, the constraint set may vary even for the same observation. To capture this, we introduce the following two notions using a measure $\mu$ on $D$.
\begin{itemize}
    \item The size of the candidate set consistent with a single observation $e_t$:
    \begin{align}
        \alpha := \inf _t \mu\left(D_{c_{t-1}}\left(e_t\right)\right).
    \end{align}
    \item The degree to which the constraint imposed by the previous observation is invalidated:
    \begin{align}
        \delta := \sup _{t \geq 2} \mu\left(D_{c_{t-2}}\left(e_{t-1}\right) \backslash D_{c_{t-1}}\left(e_t\right)\right).
    \end{align}
    When $\delta = 0$, the interpretation of the constraint remains unchanged. As $\delta$ increases, it changes more drastically.
\end{itemize}

As a simple choice of $\mu$, one can consider a proportion based on cardinality, such as
\begin{align}
    \mu(D_{c_{t-1}}\left(e_t\right)) = \frac{|D_{c_{t-1}}\left(e_t\right)|}{|D|}.
\end{align}
In IRRI with a single initial program, we have $\delta = 0$ ideally. In contrast, under linear self-refinement methods, we generally have $\delta > 0$. Under this condition, Theorem~\ref{thm:non-elimination-of-linear-self-refinemet} holds.

\begin{theorem}
    \label{thm:non-elimination-of-linear-self-refinemet}
    The maximum window size such that
    \begin{align}
        \mu\left(\widetilde{\mathcal{V}}_t^{(w)}\right) \geq \varepsilon
    \end{align}
    holds for any $t$ is given by
    \begin{align}
        w_{\max} := 1 + \frac{\alpha - \varepsilon}{\delta}.
    \end{align}
    That is, the \textit{non-elimination} of linear self-refinement methods requires the finiteness of the observation history window used at each refinement step.
\end{theorem}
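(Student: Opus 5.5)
The plan is to bound the measure of the windowed version space from below by a telescoping argument. Each step drops at most $\delta$ of measure from the previous constraint set. Then we solve for the largest $w$ keeping this bound above $\varepsilon$.

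\textbf{Notation.} Fix $t \ge w$ and write $A_i := D_{c_{i-1}}(e_i)$, so that
\begin{align}
    \widetilde{\mathcal{V}}_t^{(w)} = \bigcap_{i=t-w+1}^{t} A_i .
\end{align}

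\textbf{Complement decomposition.} Take the most recent set $A_t$ as the anchor. Every point of $A_t$ that is missing from the intersection lies outside some $A_i$ with $t-w+1 \le i < t$. Hence
\begin{align}
    A_t \setminus \widetilde{\mathcal{V}}_t^{(w)} \subseteq \bigcup_{i=t-w+1}^{t-1} (A_t \setminus A_i),
\end{align}
and subadditivity of $\mu$ gives
\begin{align}
    \mu\bigl(\widetilde{\mathcal{V}}_t^{(w)}\bigr) \ge \mu(A_t) - \sum_{i} \mu(A_t\setminus A_i).
\end{align}

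\textbf{Main obstacle.} The definition of $\delta$ only controls consecutive differences. In particular, it controls $\mu(A_{t-1}\setminus A_t)$, which is in the "wrong" direction relative to the terms $A_t \setminus A_i$ above. To get one $\delta$ per step, I would chain the sets in the direction matching the definition instead.

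For that, anchor at the oldest set $A_{t-w+1}$ and use the chain inclusion
\begin{align}
    A_{t-w+1}\setminus \bigcap_{i} A_i \subseteq \bigcup_{k} \bigl(A_{k-1}\setminus A_k\bigr).
\end{align}
This inclusion is not literally true for non-nested sets: a point can leave and then re-enter the chain. If needed, I would state it explicitly as the modelling interpretation that $\delta$ measures the cumulative invalidation rate, so that each of the $w-1$ transitions removes at most $\delta$ of measure from the surviving candidates.

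\textbf{Lower bound.} Under that interpretation,
\begin{align}
    \mu\bigl(\widetilde{\mathcal{V}}_t^{(w)}\bigr) \ge \mu(A_{t-w+1}) - (w-1)\delta \ge \alpha - (w-1)\delta ,
\end{align}
where the second inequality uses $\alpha=\inf_t \mu(A_t)$.

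\textbf{Solving for the window.} Requiring $\alpha-(w-1)\delta\ge\varepsilon$ for all $t$ is equivalent to
\begin{align}
    w \le 1+\frac{\alpha-\varepsilon}{\delta},
\end{align}
which gives $w_{\max}$.

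\textbf{Sharpness.} To show $w_{\max}$ is the maximum rather than just a sufficient window, I would exhibit a worst-case construction. Take nested shrinking sets $A_{k} \subseteq A_{k-1}$ with $\mu(A_{k-1}\setminus A_k)=\delta$ exactly and $\mu(A_{t-w+1})=\alpha$. There the bound holds with equality, so any larger $w$ violates $\mu\bigl(\widetilde{\mathcal{V}}_t^{(w)}\bigr)\ge\varepsilon$.

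\textbf{Conclusion.} Since $\delta>0$ for linear self-refinement, $w_{\max}<\infty$. Non-elimination, in the sense of a version space with measure at least $\varepsilon$, therefore requires a finite observation window.
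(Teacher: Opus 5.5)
\textbf{Gap: you assume a step that actually follows from the definitions.} Your overall route is exactly the paper's: anchor at the oldest constraint set $A_{t-w+1}$ and charge at most $\delta$ to each of the $w-1$ consecutive transitions. The problem is that you declare the key chain inclusion false and replace it with a ``modelling interpretation'' that is not among the hypotheses. As written, you therefore prove the bound only under an extra assumption.

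The inclusion you doubt is true for arbitrary, non-nested sets. Suppose $x\in A_{t-w+1}$ but $x\notin\widetilde{\mathcal{V}}_t^{(w)}$. Let $k$ be the \emph{first} index in the window with $x\notin A_k$. By minimality $x\in A_{k-1}$, so $x\in A_{k-1}\setminus A_k$. A point that leaves and later re-enters is charged only at its first exit, so re-entry does no harm.

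The paper writes the same observation as a telescoping recursion. Set $I_1:=A_{t-w+1}$ and $I_k:=I_{k-1}\cap A_{t-w+k}$. Then $I_{k-1}\subseteq A_{t-w+k-1}$, hence $I_{k-1}\setminus A_{t-w+k}\subseteq A_{t-w+k-1}\setminus A_{t-w+k}$, whose measure is at most $\delta$ directly by definition. So the property you proposed to assume, that each transition removes at most $\delta$ from the surviving candidates, is a consequence of the definition of $\delta$, and $\mu(I_w)\ge\alpha-(w-1)\delta$ follows with no nesting. Drop the hedge and insert this one-line argument.

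\textbf{Your sharpness construction contradicts the definition of $\alpha$.} If $A_k\subseteq A_{k-1}$ with $\mu(A_{k-1}\setminus A_k)=\delta>0$ and $\mu(A_{t-w+1})=\alpha$, then $\mu(A_{t-w+2})=\alpha-\delta<\alpha=\inf_t\mu(A_t)$. The extremal configuration is a sliding family of constant measure $\alpha$, not a shrinking chain. For example, take $A_k=[(k-1)\delta,(k-1)\delta+\alpha]$ under Lebesgue measure with $\delta\le\alpha$. Then every $\mu(A_k)=\alpha$ and every consecutive loss is exactly $\delta$. The window intersection $[(t-1)\delta,(t-w)\delta+\alpha]$ has measure exactly $\alpha-(w-1)\delta$. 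Hence no integer window larger than $\lfloor w_{\max}\rfloor$ can be guaranteed.

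The paper itself does not prove this tightness: its ``it is necessary that'' step only shows that $w\le w_{\max}$ suffices for the lower bound. So this part of your plan goes genuinely beyond the paper, but it needs the corrected example.
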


\begin{proof}
    First, for any $t$ and $w \leq t$, we show that
    \begin{align}
        \mu\left(\widetilde{\mathcal{V}}_t^{(w)}\right) \geq \alpha - (w - 1) \delta
    \end{align}
    holds.
    Let $A_t := D_{c_{t-1}}\left(e_t\right)$ and $I_1 := A_{t-w+1}$. By definition,
    \begin{align}
        \mu\left(I_1\right) \geq \alpha.
    \end{align}
    Next, for $k = 2, \ldots, w$, define
    \begin{align}
        I_k := I_{k-1} \cap A_{t-w+k}.
    \end{align}
    Then,
    \begin{align}
        \mu\left(I_k\right)=\mu\left(I_{k-1}\right)-\mu\left(I_{k-1} \backslash A_{t-w+k}\right).
    \end{align}
    Since $I_{k-1} \subseteq A_{t-w+k-1}$,
    \begin{align}
        I_{k-1} \backslash A_{t-w+k} \subseteq A_{t-w+k-1} \backslash A_{t-w+k}
    \end{align}
    holds.
    Therefore,
    \begin{align}
        \mu\left(I_{k-1} \backslash A_{t-w+k}\right) \leq \delta.
    \end{align}
    Hence,
    \begin{align}
        \mu\left(I_k\right) \geq \mu\left(I_{k-1}\right) - \delta.
    \end{align}
    By repeating this process $w - 1$ times,
    \begin{align}
        \mu\left(I_w\right) \geq \alpha - (w - 1) \delta.
    \end{align}
    Finally, since $I_w = \widetilde{\mathcal{V}}_t^{(w)}$, for any $t$ and $w \leq t$,
    \begin{align}
        \mu\left(\widetilde{\mathcal{V}}_t^{(w)}\right) \geq \alpha - (w - 1) \delta
    \end{align}
    holds.
    From the above, in order to guarantee that
    \begin{align}
        \mu\left(\widetilde{\mathcal{V}}_t^{(w)}\right) \geq \varepsilon
    \end{align}
    holds for any $t$, it is necessary that
    \begin{align}
        \alpha - (w - 1) \delta \geq \varepsilon,
    \end{align}
    which is equivalent to
    \begin{align}
        w \leq 1 + \frac{\alpha - \varepsilon}{\delta}.
    \end{align}
    By defining this quantity as
    \begin{align}
        w_{\max} := 1 + \frac{\alpha - \varepsilon}{\delta},
    \end{align}
    the desired result follows.
\end{proof}

Theorem~\ref{thm:non-elimination-of-linear-self-refinemet} suggests that, in linear self-refinement methods, as the observation history becomes longer, the \textit{version space} (i.e., the candidate set) tends to shrink rapidly. 

Overall, linear self-refinement methods do not necessarily satisfy \textit{non-elimination} in multi-turn program correction.

\subsection{Proof of Theorem~\ref{thm:effect-of-the-number-of-initial-programs}}
\label{subapp:proof-of-thm-effect-of-the-number-of-initial-programs}

\begin{proof}
    \textit{1}. Since $B_1 \subseteq B_2$,
    \begin{align}
        U(B_2) = \bigcap_{c \in B_2} \bigcap_{e \in E_c^{\mathrm{obs}}} \mathcal D_c(e)
    \end{align}
    is the intersection over a larger collection of sets than $U(B_1)$. Therefore,
    \begin{align}
        U(B_2) \subseteq U(B_1).
    \end{align}
    \textit{2}. Take an arbitrary $d \in D_{\mathrm{stab}}^\star$. By the definition of $D_{\mathrm{stab}}^\star$, for any $c \in C_{\mathrm{init}}$ and any $e \in E_c^{\mathrm{obs}}$,
    \begin{align}
        d \in D_c(e)
    \end{align}
    holds.
    In particular, since $B_2 \subseteq C_{\mathrm{init}}$, for any $c \in B_2$ and any $e \in E_c^{\mathrm{obs}}$,
    \begin{align}
        d \in D_c(e)
    \end{align}
    also holds. Therefore,
    \begin{align}
        d \in \bigcap_{c \in B_2}\bigcap_{e \in E_c^{\mathrm{obs}}} D_c(e) = U(B_2).
    \end{align}
    Hence,
    \begin{align}
        D_{\mathrm{stab}}^\star \subseteq U(B_2).
    \end{align}
    \textit{3}. First, from, $D_{\mathrm{stab}}^\star \neq \varnothing$ and $D_{\mathrm{stab}}^\star \subseteq U(B_2)$,
    \begin{align}
        U(B_2) \neq \varnothing.
    \end{align}
    Second, from $U(B_2) \subseteq U(B_1)$,
    \begin{align}
        Z_{\infty}(B_2) = C_{\mathrm{init}} \times U(B_2) \subseteq C_{\mathrm{init}} \times U(B_1) = Z_{\infty}(B_1).
    \end{align}
    Finally, from $C_{\mathrm{init}} \neq \varnothing$ and $U(B_2) \neq \varnothing$,
    \begin{align}
        Z_{\infty}(B_2) \neq \varnothing.
    \end{align}
\end{proof}

\subsection{Strict condition for increase in discriminative power}
\label{subapp:strict-condition-for-increase-in-discriminative-power}

\begin{corollary}
    \label{coro:effect-of-the-number-of-initial-programs}
    Under the same assumption in Theorem~\ref{thm:effect-of-the-number-of-initial-programs}, suppose furthermore that
    \begin{align}
        U(B_2) \subsetneq U(B_1)
    \end{align}
    holds. Then, for any
    \begin{align}
        \tilde d \in U(B_1) \setminus U(B_2),
    \end{align}
    the following holds.
    \begin{enumerate}
        \item $\tilde d$ cannot be eliminated as long as only programs  in $B_1$ are used. That is, for any finite history,
        \begin{align}
            (c_1, e_1), \ldots, (c_t, e_t), \qquad c_i \in B_1, e_i \in E_{c_i}^{\mathrm{obs}},
        \end{align}
        we have
        \begin{align}
            \tilde d \in \bigcap_{i=1}^{t} D_{c_i}(e_i).
        \end{align}
        \item On the other hand, there exist an additional program
        \begin{align}
            b \in B_2 \setminus B_1
        \end{align}
        and a feasible observation
        \begin{align}
            e \in E_b^{\mathrm{obs}}
        \end{align}
        such that
        \begin{align}
            \tilde d \notin D_b(e).
        \end{align}
        \item Moreover, 
        \begin{align}
            D_{\mathrm{stab}}^\star \subseteq D_b(e).
        \end{align}
        Therefore, $\tilde d$ can be eliminated while preserving truly valid repair instructions.
    \end{enumerate}
\end{corollary}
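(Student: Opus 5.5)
The plan is to unfold the definition of $U(\cdot)$ and use the three items of Theorem~\ref{thm:effect-of-the-number-of-initial-programs}; no probabilistic argument is needed, since we work on the fixed realization $\omega \in G$ with $B_1 \subseteq B_2 \subseteq C_{\mathrm{init}}$. Fix $\tilde d \in U(B_1) \setminus U(B_2)$. Such an element exists because $U(B_2) \subsetneq U(B_1)$.

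\emph{Item 1.} Since $\tilde d \in U(B_1) = \bigcap_{c \in B_1} \bigcap_{e \in E_c^{\mathrm{obs}}} D_c(e)$, we have $\tilde d \in D_c(e)$ for every $c \in B_1$ and every $e \in E_c^{\mathrm{obs}}$. Now take any finite history with $c_i \in B_1$ and $e_i \in E_{c_i}^{\mathrm{obs}}$. Each pair $(c_i,e_i)$ is one of the pairs indexed in $U(B_1)$, so $\tilde d \in D_{c_i}(e_i)$ for every $i$. Hence $\tilde d \in \bigcap_{i=1}^t D_{c_i}(e_i)$. This is the same argument as item 2 of Theorem~\ref{thm:non-elimination-of-irri}, with $U(B_1)$ playing the role of $D_{\mathrm{stab}}^\star$.

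\emph{Item 2.} Since $\tilde d \notin U(B_2)$, there exist $b \in B_2$ and $e \in E_b^{\mathrm{obs}}$ with $\tilde d \notin D_b(e)$. The step that needs care is showing $b \notin B_1$. Suppose instead $b \in B_1$. Then $(b,e)$ is one of the pairs indexed in $U(B_1)$, and membership $\tilde d \in U(B_1)$ would force $\tilde d \in D_b(e)$, a contradiction. So $b \in B_2 \setminus B_1$. To make this explicit, I would write $U(B_2) = U(B_1) \cap U(B_2 \setminus B_1)$ when $B_2 \setminus B_1 \neq \varnothing$. The strict inclusion guarantees $B_2 \setminus B_1 \neq \varnothing$, because $B_1 = B_2$ would give equality. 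It follows that $\tilde d \notin U(B_2 \setminus B_1)$, and the witness $(b,e)$ can be taken from this set.

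\emph{Item 3.} Every $d \in D_{\mathrm{stab}}^\star$ satisfies $d \in D_c(e')$ for all $c \in C_{\mathrm{init}}$ and $e' \in E_c^{\mathrm{obs}}$. This applies in particular to $c = b \in B_2 \subseteq C_{\mathrm{init}}$ and $e' = e$, so $D_{\mathrm{stab}}^\star \subseteq D_b(e)$. Alternatively, combine item 2 of Theorem~\ref{thm:effect-of-the-number-of-initial-programs}, $D_{\mathrm{stab}}^\star \subseteq U(B_2)$, with $U(B_2) \subseteq D_b(e)$. As a result, intersecting the version space with $D_b(e)$ removes $\tilde d$ and keeps all of $D_{\mathrm{stab}}^\star$, which is the concluding remark of the statement.

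The only real obstacle is the locality argument in item 2, namely that the eliminating witness must come from $B_2 \setminus B_1$. The set decomposition of $U(B_2)$ above settles it. Everything else follows directly from the definitions.
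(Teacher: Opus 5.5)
Your proposal is correct and follows the paper's proof essentially step for step. You get item 1 by unfolding $\tilde d \in U(B_1)$, item 2 by extracting a witness $(b,e)$ from $\tilde d \notin U(B_2)$ and ruling out $b \in B_1$ by contradiction, and item 3 directly from the definition of $D_{\mathrm{stab}}^\star$ using $b \in B_2 \subseteq C_{\mathrm{init}}$; your decomposition $U(B_2) = U(B_1) \cap U(B_2 \setminus B_1)$ just repackages the same contradiction step and is valid as stated.
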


\begin{proof}
    \textit{1}. It follows immediately from the definition of $\tilde d \in U(B_1)$.
    
    \textit{2}. Since $\tilde d \notin U(B_2)$, there exist
    \begin{align}
        b \in B_2
    \end{align}
    and
    \begin{align}
        e \in E_b^{\mathrm{obs}}
    \end{align}
    such that
    \begin{align}
        \tilde d \notin D_b(e),
    \end{align}
    from which the claim follows. If $b \in B_1$, then it contradicts $\tilde d \in U(B_1)$. Hence,
    \begin{align}
        b \in B_2 \setminus B_1.
    \end{align}
    \textit{3}. By the definition of $D_{\mathrm{stab}}^\star$, for any $d \in D_{\mathrm{stab}}^\star$,
    \begin{align}
        d \in D_b(e).
    \end{align}
    Therefore,
    \begin{align}
        D_{\mathrm{stab}}^\star \subseteq D_b(e).
    \end{align}
\end{proof}

The above corollary implies that by preparing multiple initial programs, one can distinguish repair instructions that could not be eliminated with a single initial program, while still preserving truly valid candidates. 

\section{Additional results}

\subsection{Inference scaling}
\label{subapp:inference-scaling}

The results are shown in Figures~\ref{fig:scaling-gpt-4o-mini-humaneval}-~\ref{fig:scaling-llama-3.2-3b-instruct-mbpp}.

\begin{figure}[htbp]
    \centering
    \begin{subfigure}[b]{0.49\linewidth}
        \centering
        \includegraphics[width=\linewidth]{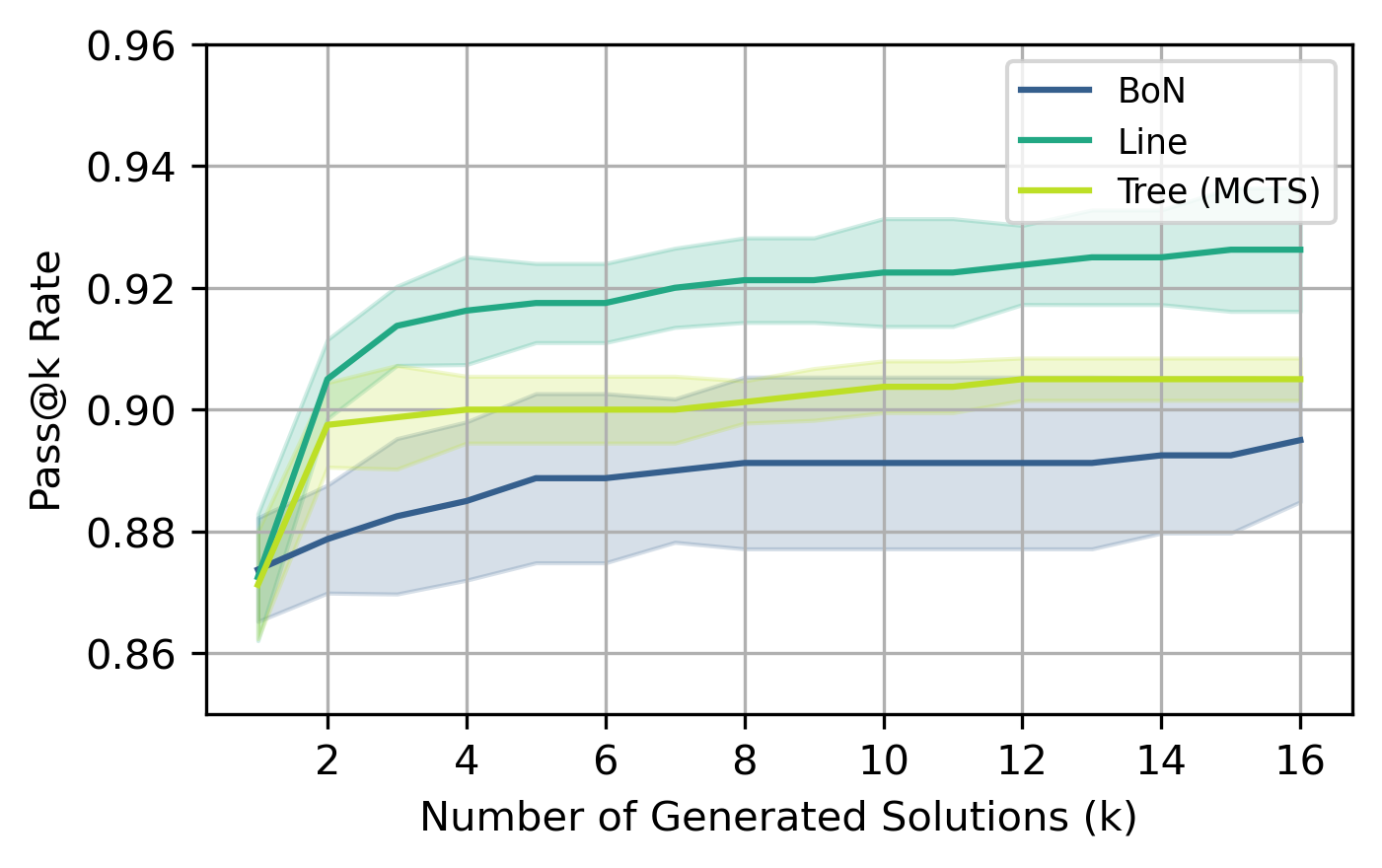}
        \label{fig:scaling1-gpt-4o-mini-humaneval}
    \end{subfigure}
    \begin{subfigure}[b]{0.49\linewidth}
        \centering
        \includegraphics[width=\linewidth]{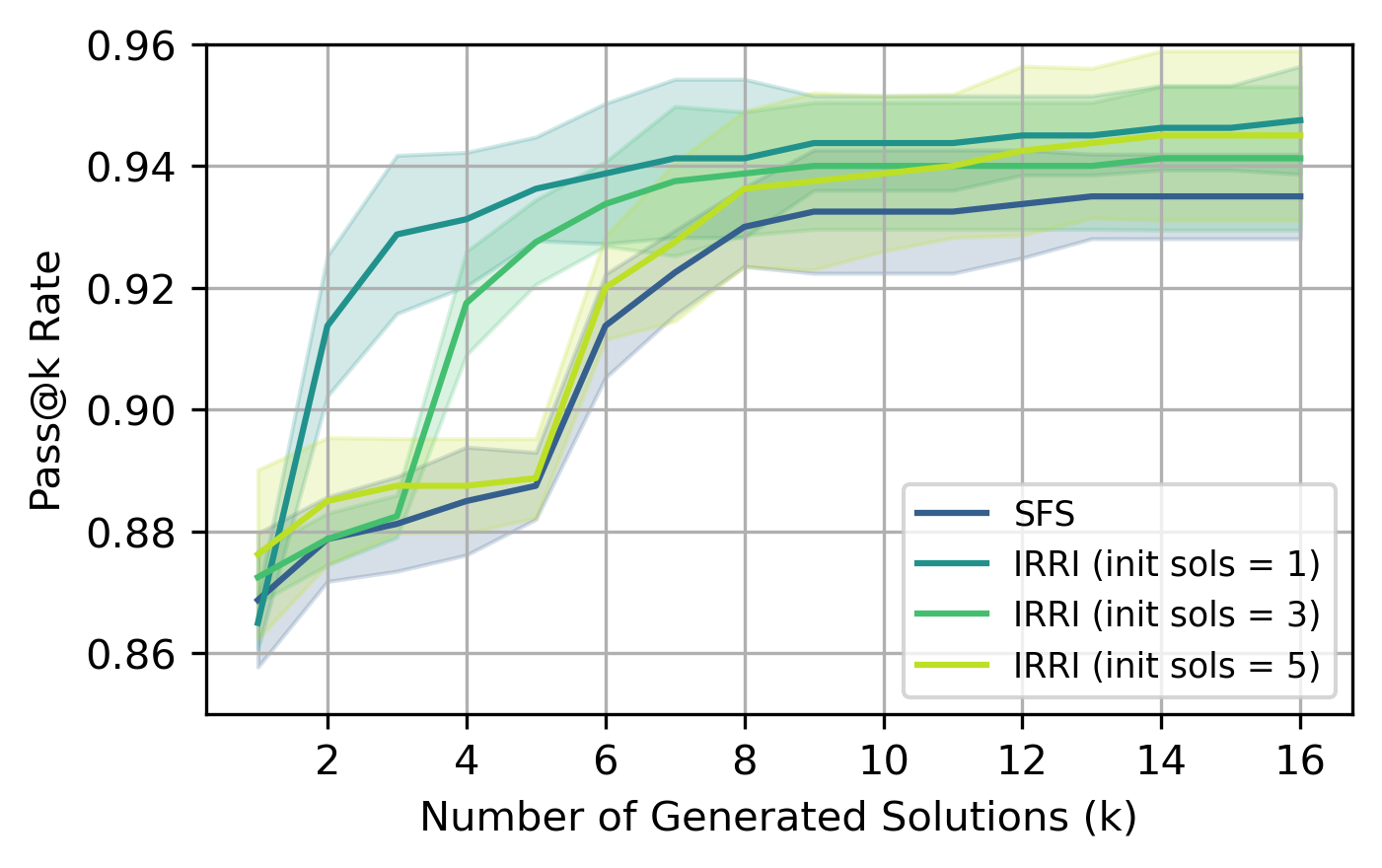}
        \label{fig:scaling2-gpt-4o-mini-humaneval}
    \end{subfigure}
    \caption{\textbf{Scaling curves for different methods on HumanEval using gpt-4o-mini.} Curves show the mean of five runs, with shaded areas indicating 95\% confidence intervals based on the $t$-distribution.}
    \label{fig:scaling-gpt-4o-mini-humaneval}
\end{figure}

\begin{figure}[htbp]
    \centering
    \begin{subfigure}[b]{0.49\linewidth}
        \centering
        \includegraphics[width=\linewidth]{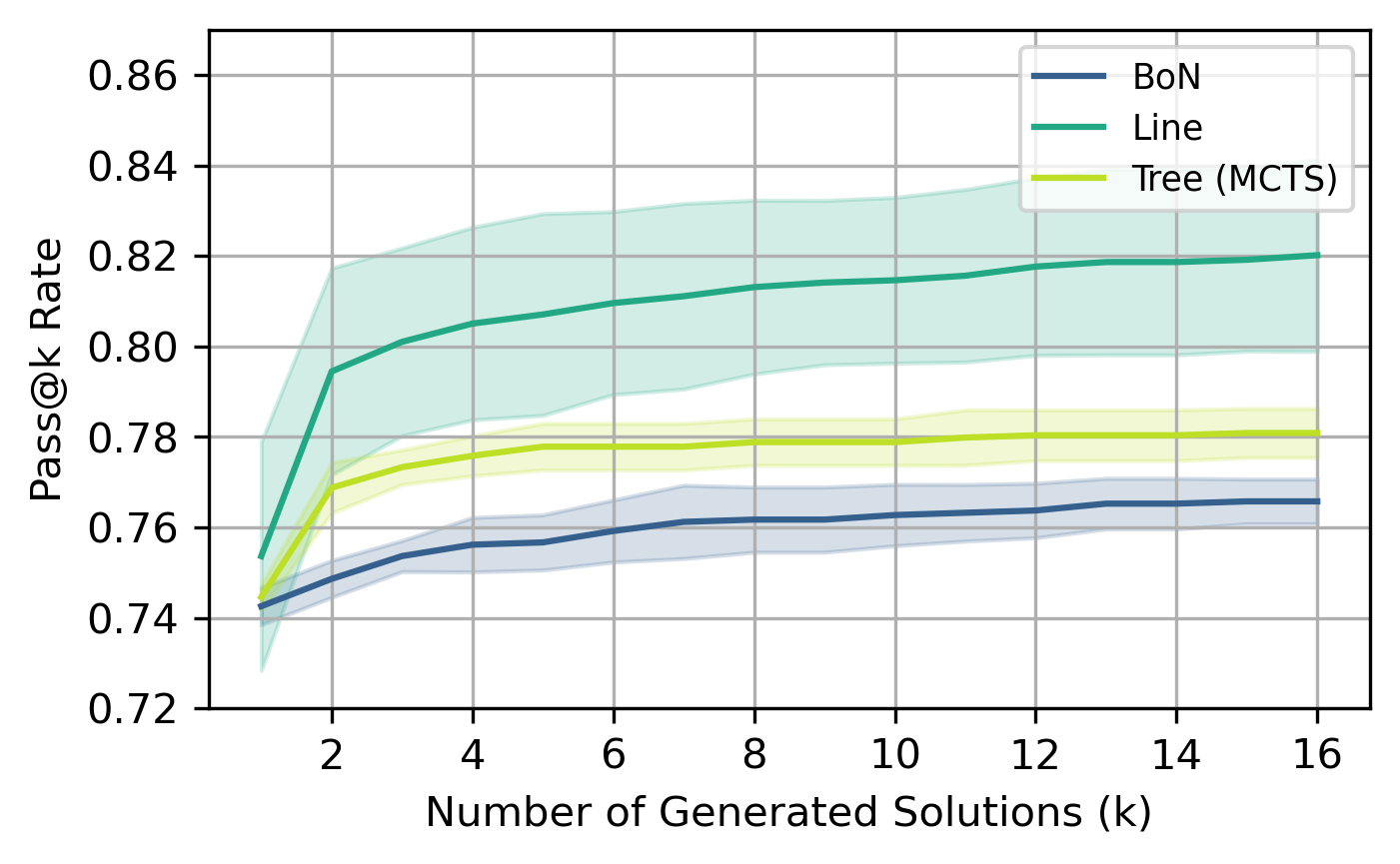}
        \label{fig:scaling1-gpt-4o-mini-mbpp}
    \end{subfigure}
    \begin{subfigure}[b]{0.49\linewidth}
        \centering
        \includegraphics[width=\linewidth]{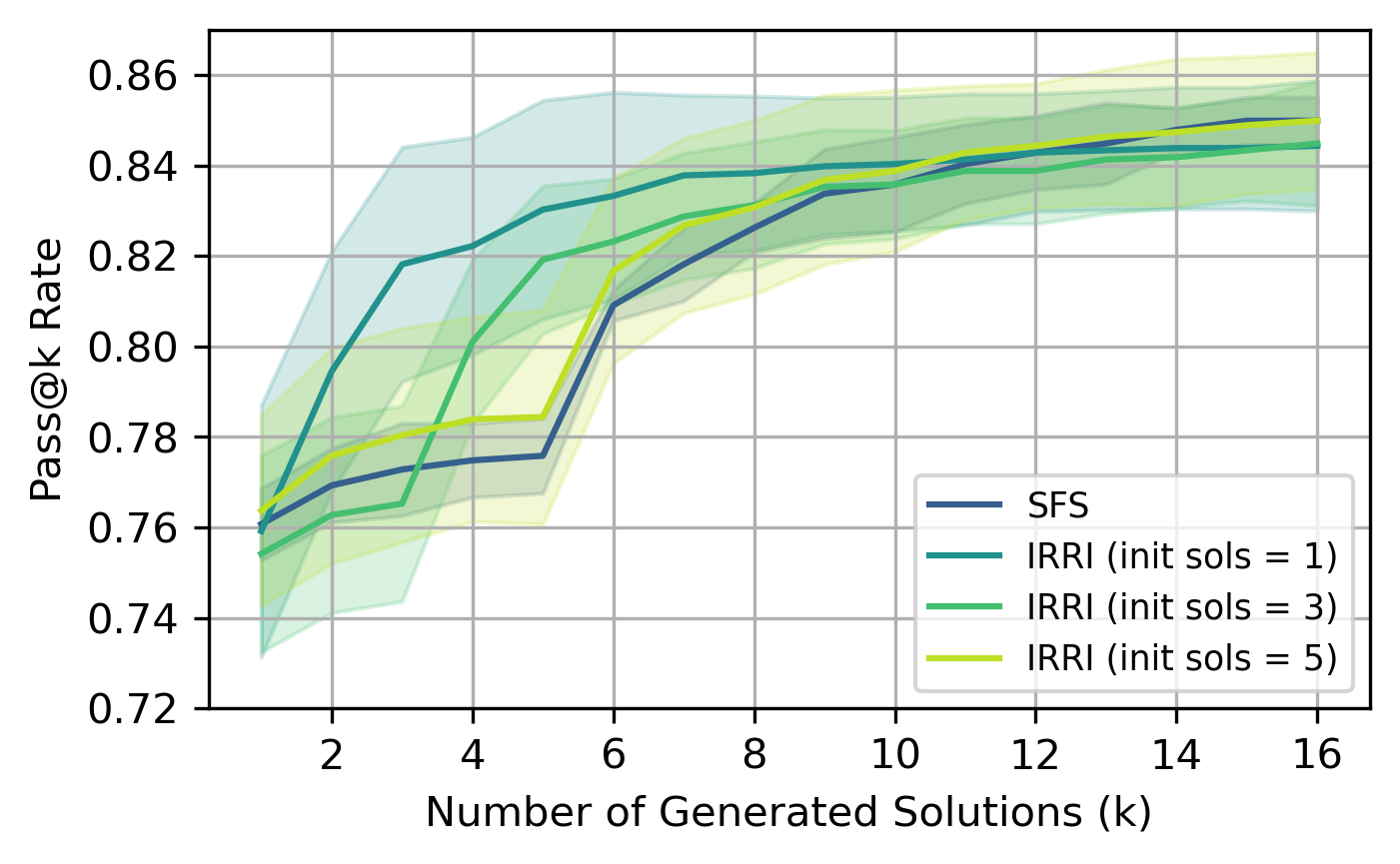}
        \label{fig:scaling2-gpt-4o-mini-mbpp}
    \end{subfigure}
    \caption{\textbf{Scaling curves for different methods on MBPP using gpt-4o-mini.} Curves show the mean of five runs, with shaded areas indicating 95\% confidence intervals based on the $t$-distribution.}
    \label{fig:scaling-gpt-4o-mini-mbpp}
\end{figure}

\begin{figure}[htbp]
    \centering
    \begin{subfigure}[b]{0.49\linewidth}
        \centering
        \includegraphics[width=\linewidth]{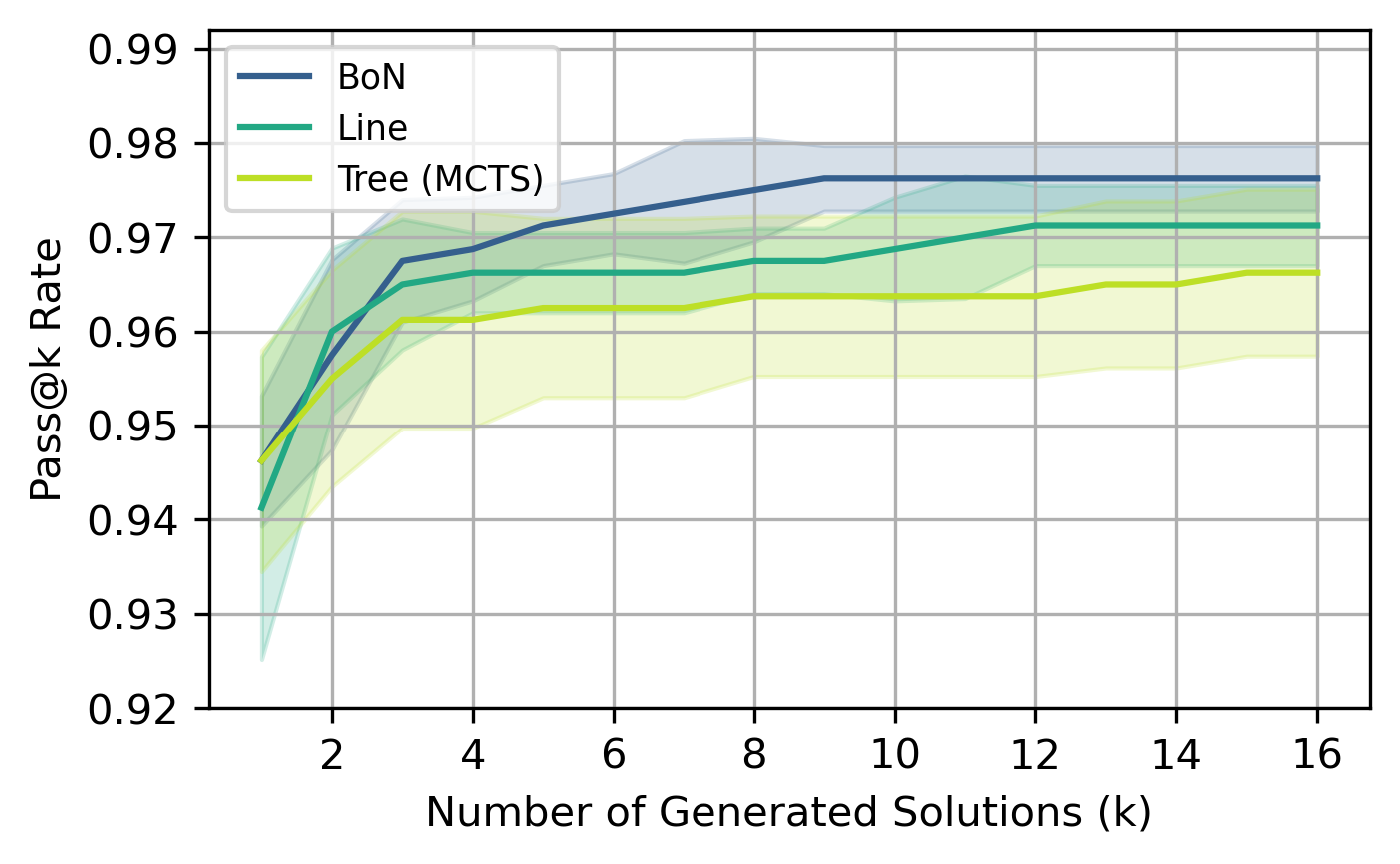}
        \label{fig:scaling1-gpt-4.1-mini-humaneval}
    \end{subfigure}
    \begin{subfigure}[b]{0.49\linewidth}
        \centering
        \includegraphics[width=\linewidth]{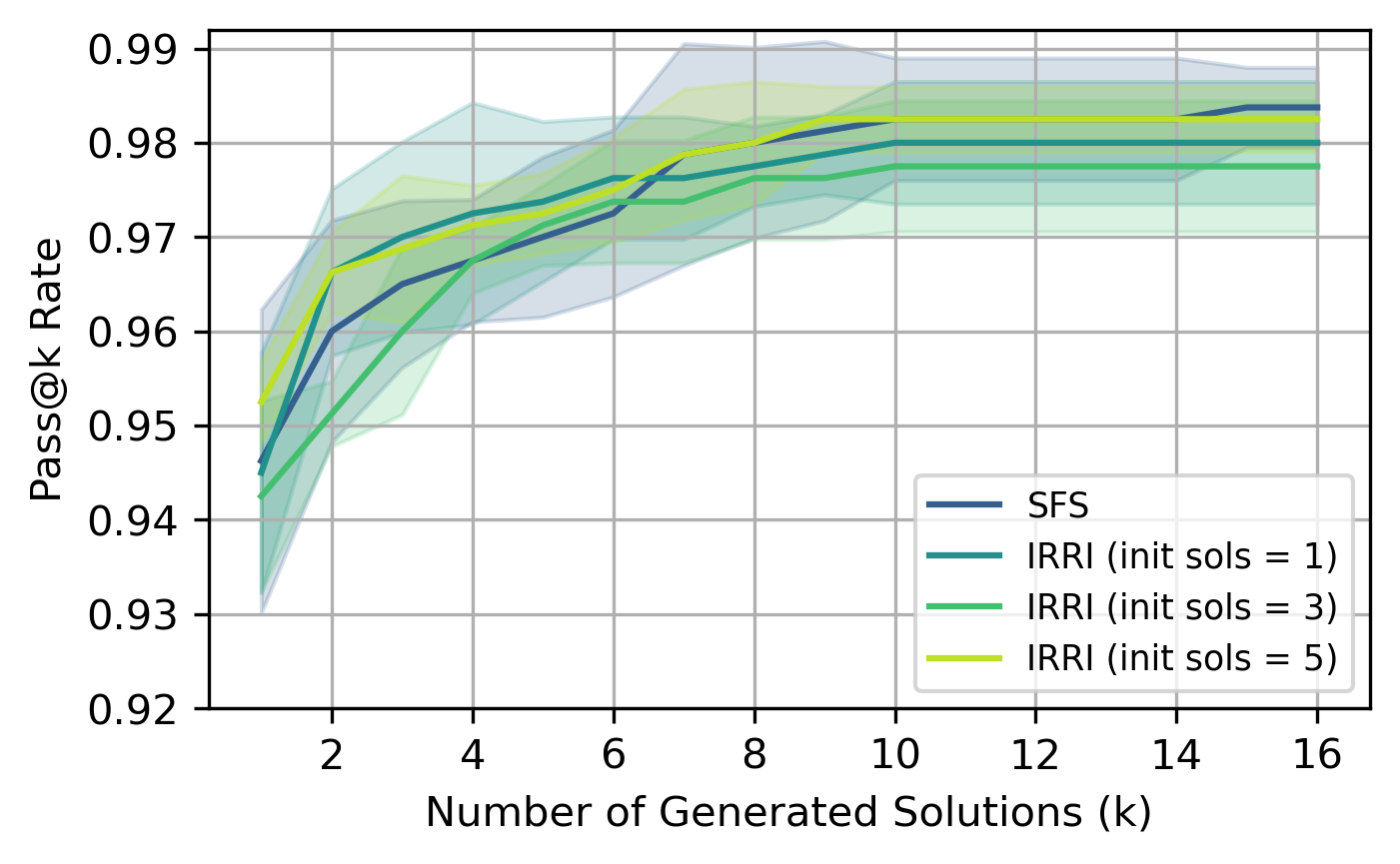}
        \label{fig:scaling2-gpt-4.1-mini-humaneval}
    \end{subfigure}
    \caption{\textbf{Scaling curves for different methods on HumanEval using gpt-4.1-mini.} Curves show the mean of five runs, with shaded areas indicating 95\% confidence intervals based on the $t$-distribution.}
    \label{fig:scaling-gpt-4.1-mini-humaneval}
\end{figure}

\begin{figure}[htbp]
    \centering
    \begin{subfigure}[b]{0.49\linewidth}
        \centering
        \includegraphics[width=\linewidth]{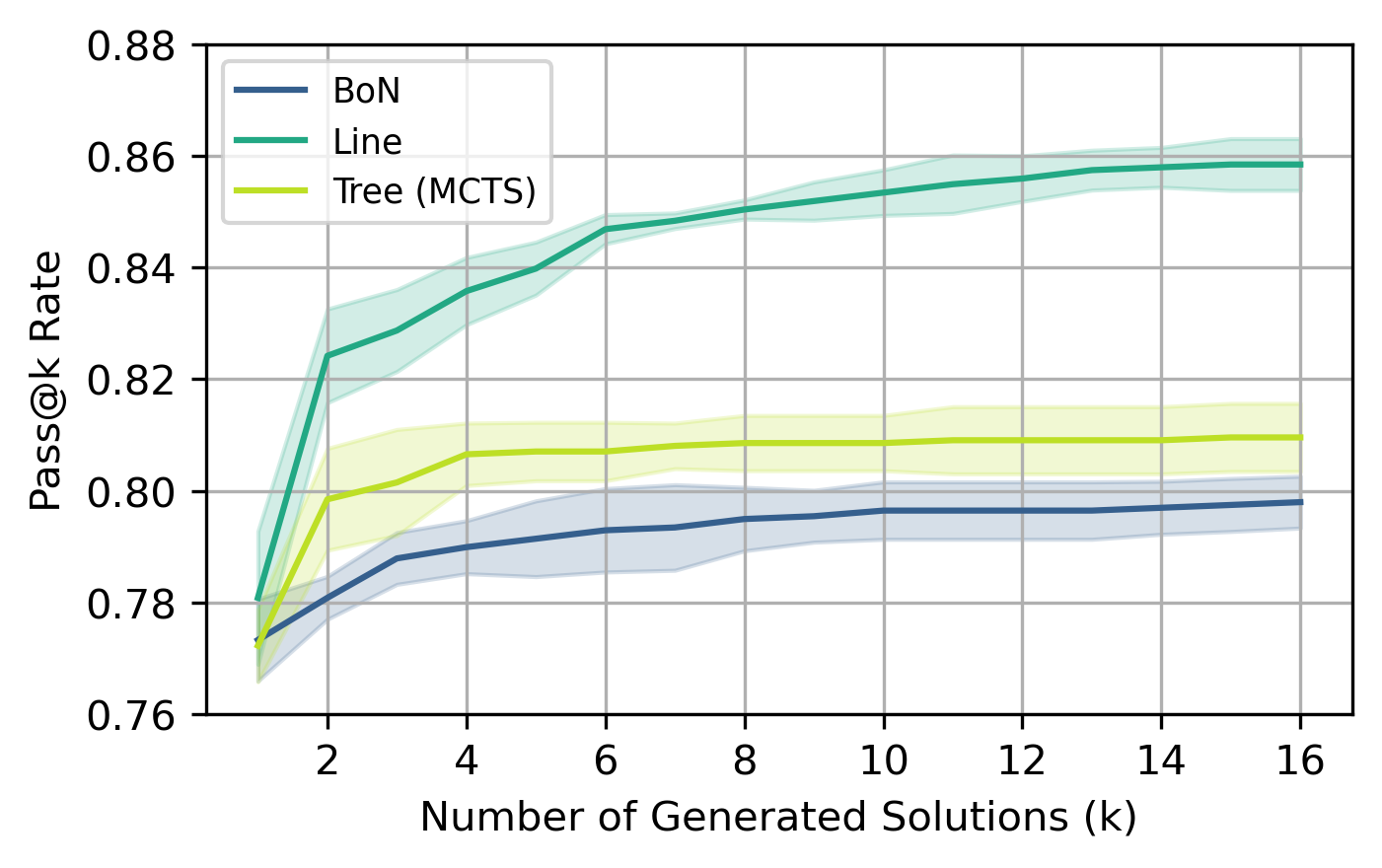}
        \label{fig:scaling1-gpt-4.1-mini-mbpp}
    \end{subfigure}
    \begin{subfigure}[b]{0.49\linewidth}
        \centering
        \includegraphics[width=\linewidth]{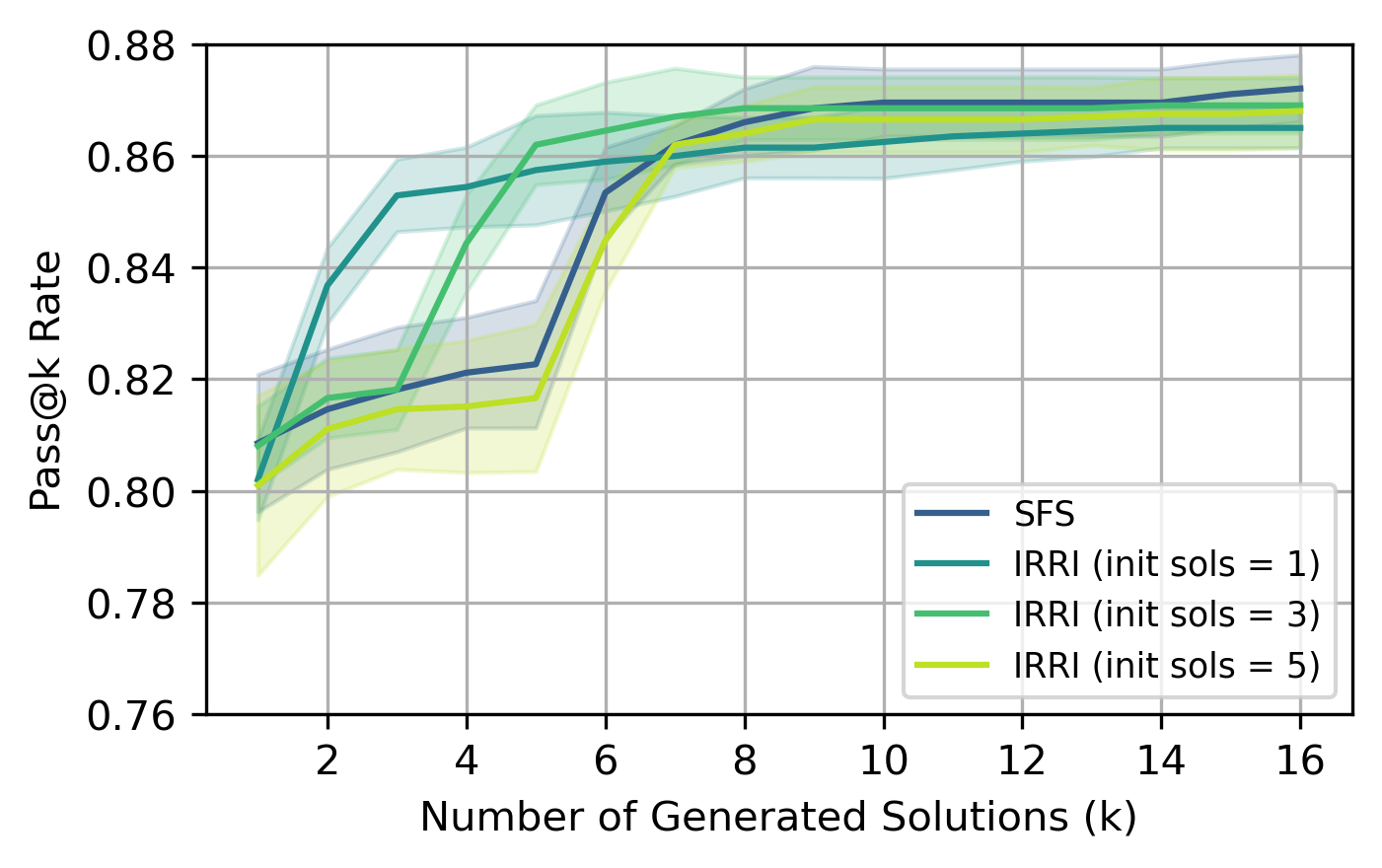}
        \label{fig:scaling2-gpt-4.1-mini-mbpp}
    \end{subfigure}
    \caption{\textbf{Scaling curves for different methods on MBPP using gpt-4.1-mini.} Curves show the mean of five runs, with shaded areas indicating 95\% confidence intervals based on the $t$-distribution.}
    \label{fig:scaling-gpt-4.1-mini-mbpp}
\end{figure}

\begin{figure}[htbp]
    \centering
    \begin{subfigure}[b]{0.49\linewidth}
        \centering
        \includegraphics[width=\linewidth]{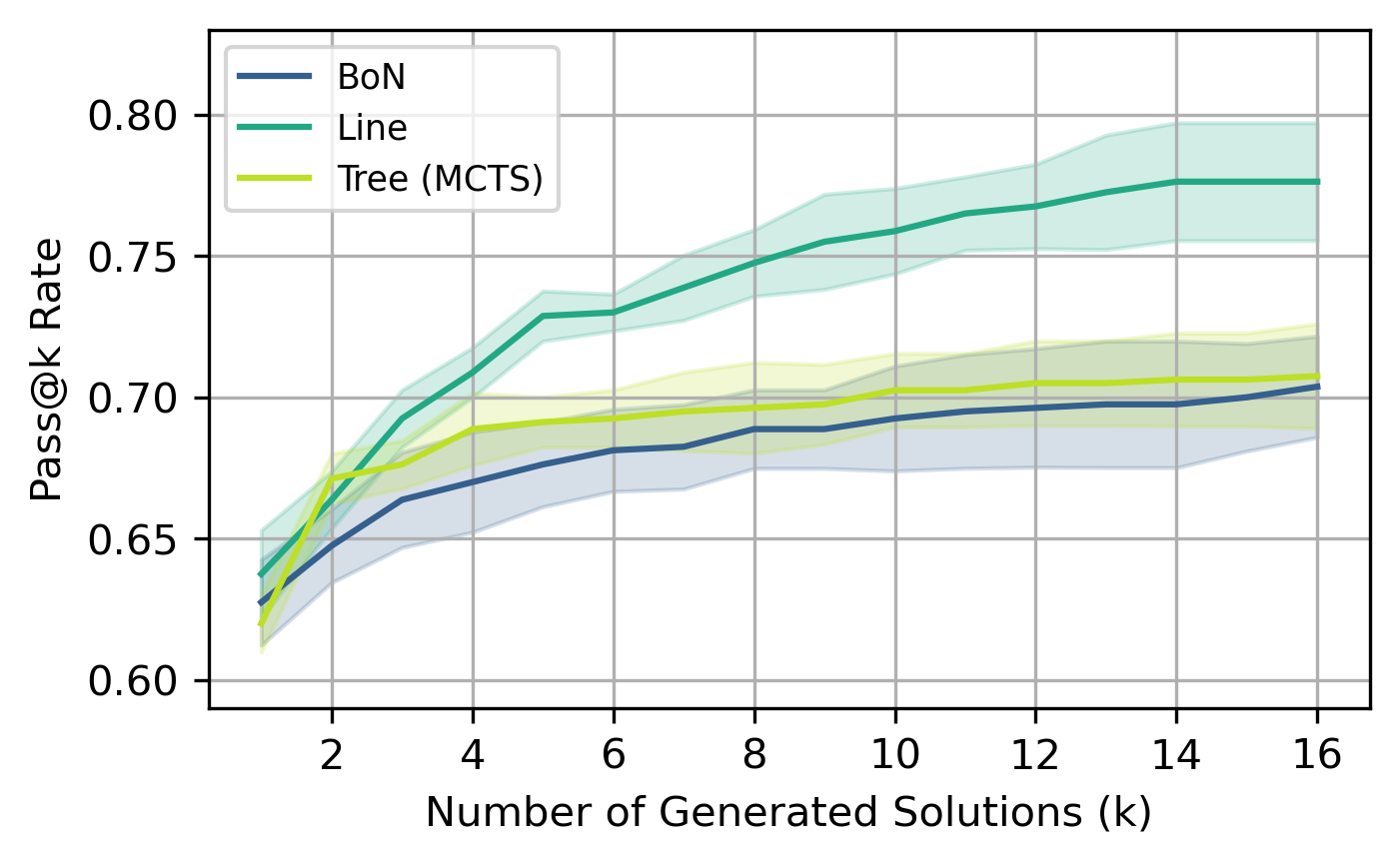}
        \label{fig:scaling1-llama-3.1-8b-instruct-humaneval}
    \end{subfigure}
    \begin{subfigure}[b]{0.49\linewidth}
        \centering
        \includegraphics[width=\linewidth]{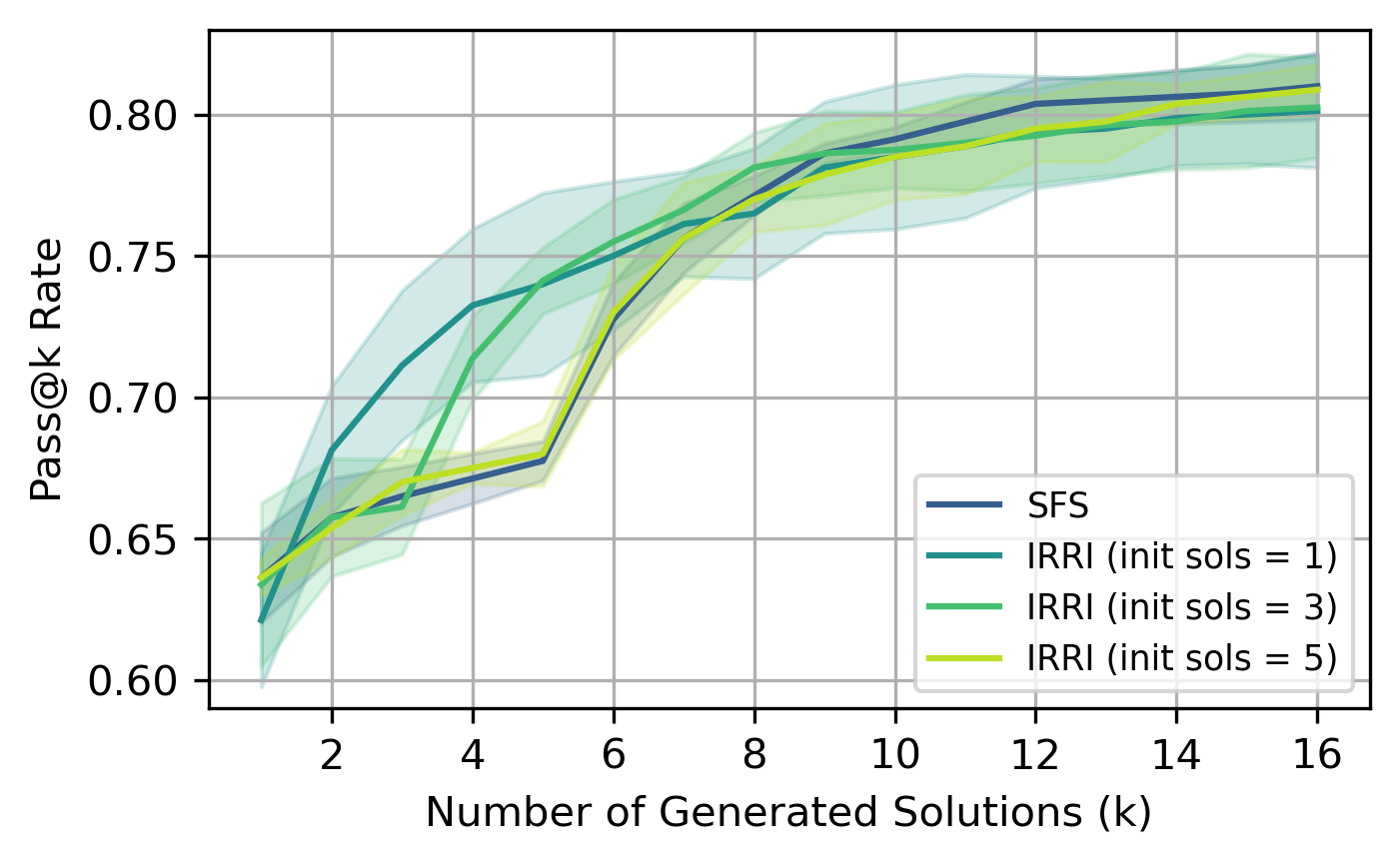}
        \label{fig:scaling2-llama-3.1-8b-instruct-humaneval}
    \end{subfigure}
    \caption{\textbf{Scaling curves for different methods on HumanEval using Llama-3.1-8B-Instruct.} Curves show the mean of five runs, with shaded areas indicating 95\% confidence intervals based on the $t$-distribution.}
    \label{fig:scaling-llama-3.1-8b-instruct-humaneval}
\end{figure}

\begin{figure}[htbp]
    \centering
    \begin{subfigure}[b]{0.49\linewidth}
        \centering
        \includegraphics[width=\linewidth]{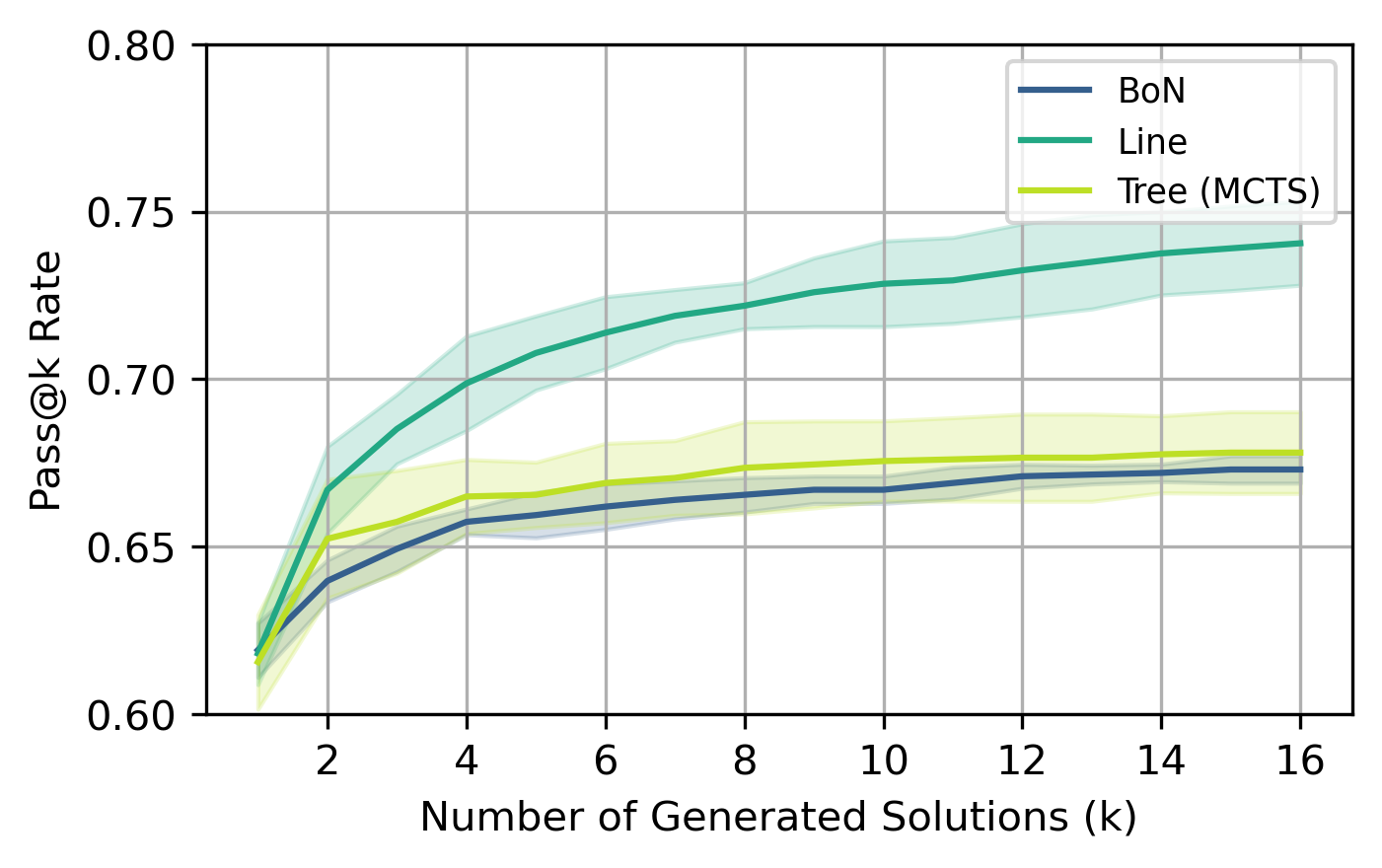}
        \label{fig:scaling1-llama-3.1-8b-instruct-mbpp}
    \end{subfigure}
    \begin{subfigure}[b]{0.49\linewidth}
        \centering
        \includegraphics[width=\linewidth]{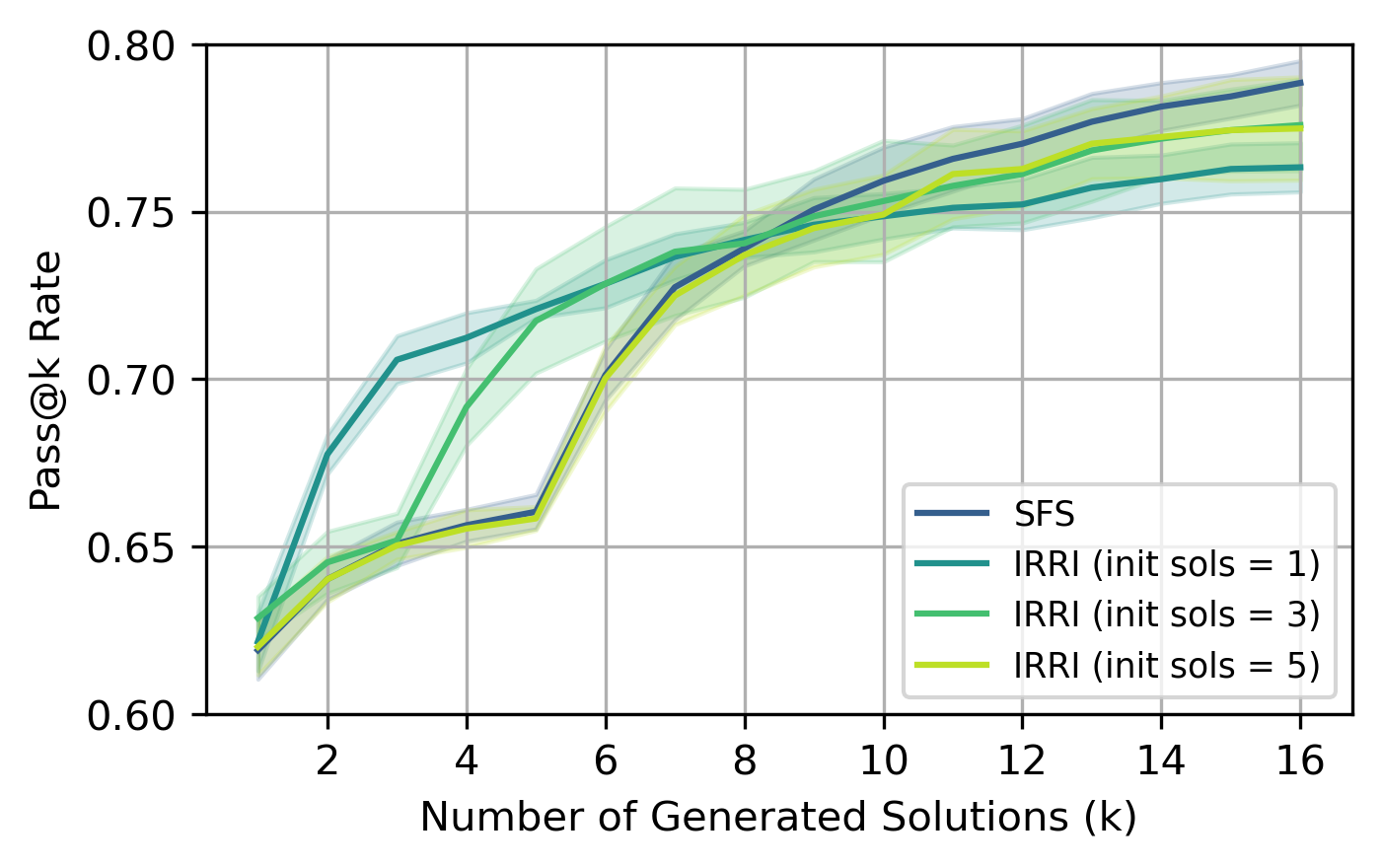}
        \label{fig:scaling2-llama-3.1-8b-instruct-mbpp}
    \end{subfigure}
    \caption{\textbf{Scaling curves for different methods on MBPP using Llama-3.1-8B-Instruct.} Curves show the mean of five runs, with shaded areas indicating 95\% confidence intervals based on the $t$-distribution.}
    \label{fig:scaling-llama-3.1-8b-instruct-mbpp}
\end{figure}

\begin{figure}[htbp]
    \centering
    \begin{subfigure}[b]{0.49\linewidth}
        \centering
        \includegraphics[width=\linewidth]{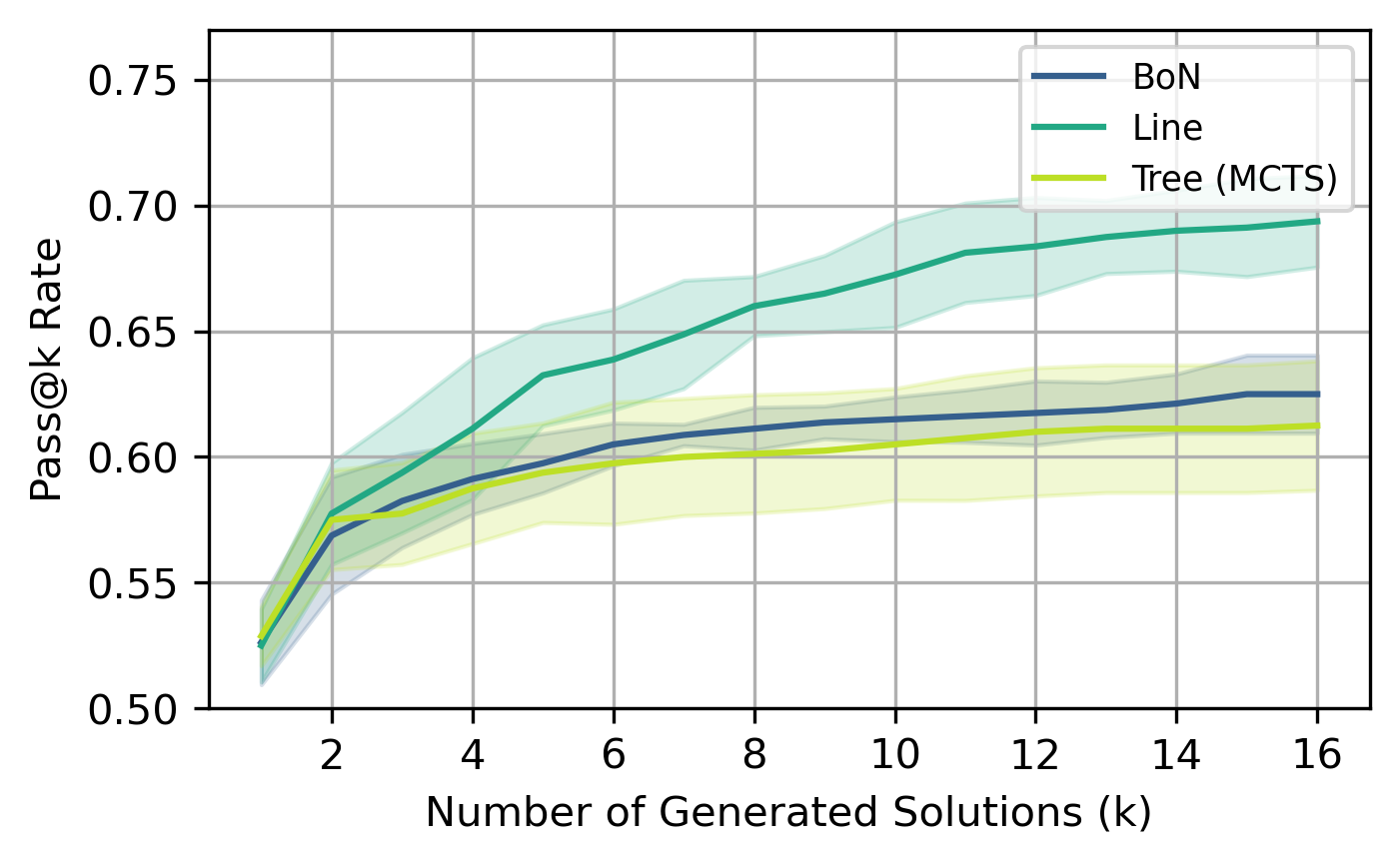}
        \label{fig:scaling1-llama-3.2-3b-instruct-humaneval}
    \end{subfigure}
    \begin{subfigure}[b]{0.49\linewidth}
        \centering
        \includegraphics[width=\linewidth]{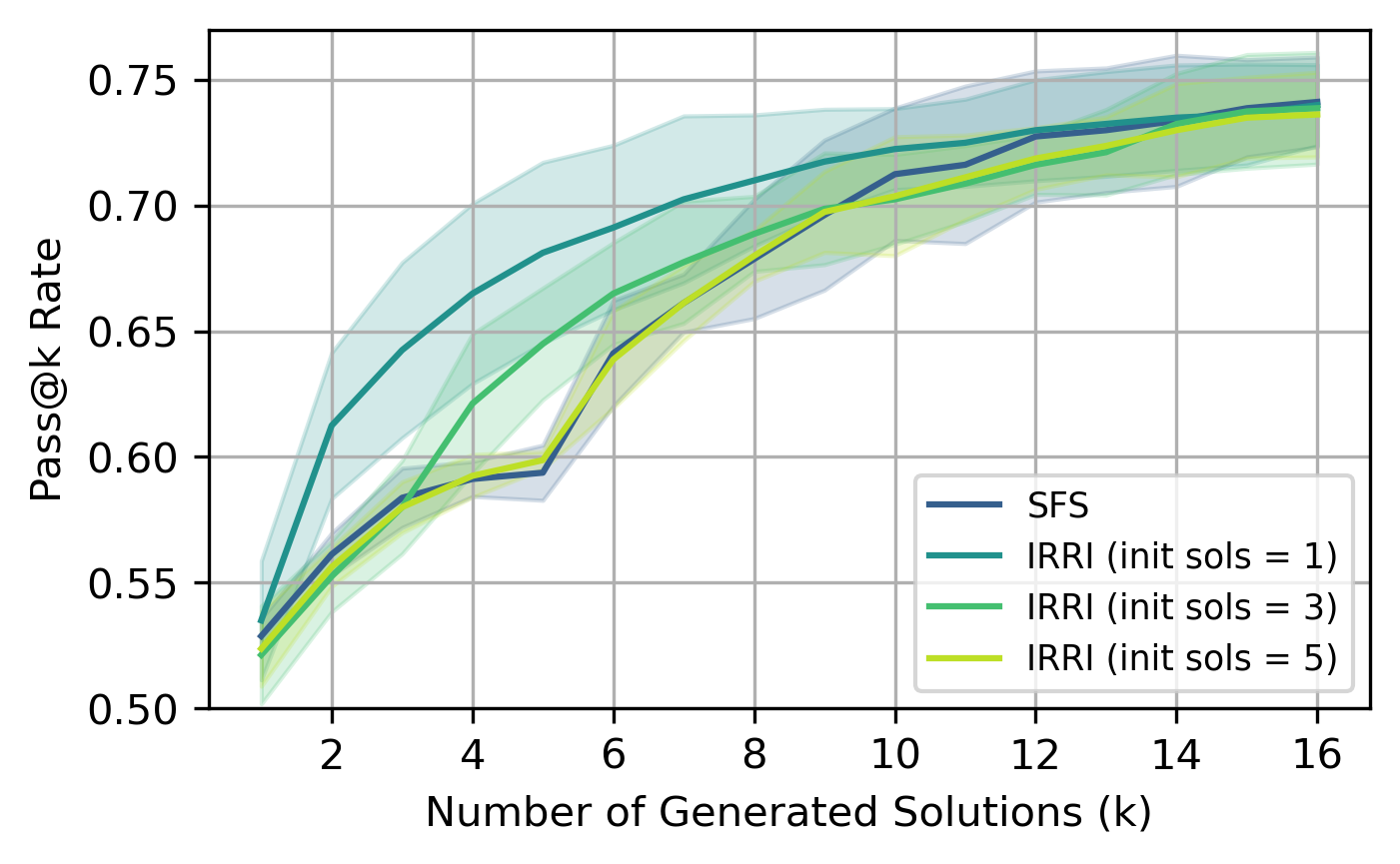}
        \label{fig:scaling2-llama-3.2-3b-instruct-humaneval}
    \end{subfigure}
    \caption{\textbf{Scaling curves for different methods on HumanEval using Llama-3.2-3B-Instruct.} Curves show the mean of five runs, with shaded areas indicating 95\% confidence intervals based on the $t$-distribution.}
    \label{fig:scaling-llama-3.2-3b-instruct-humaneval}
\end{figure}

\begin{figure}[htbp]
    \centering
    \begin{subfigure}[b]{0.49\linewidth}
        \centering
        \includegraphics[width=\linewidth]{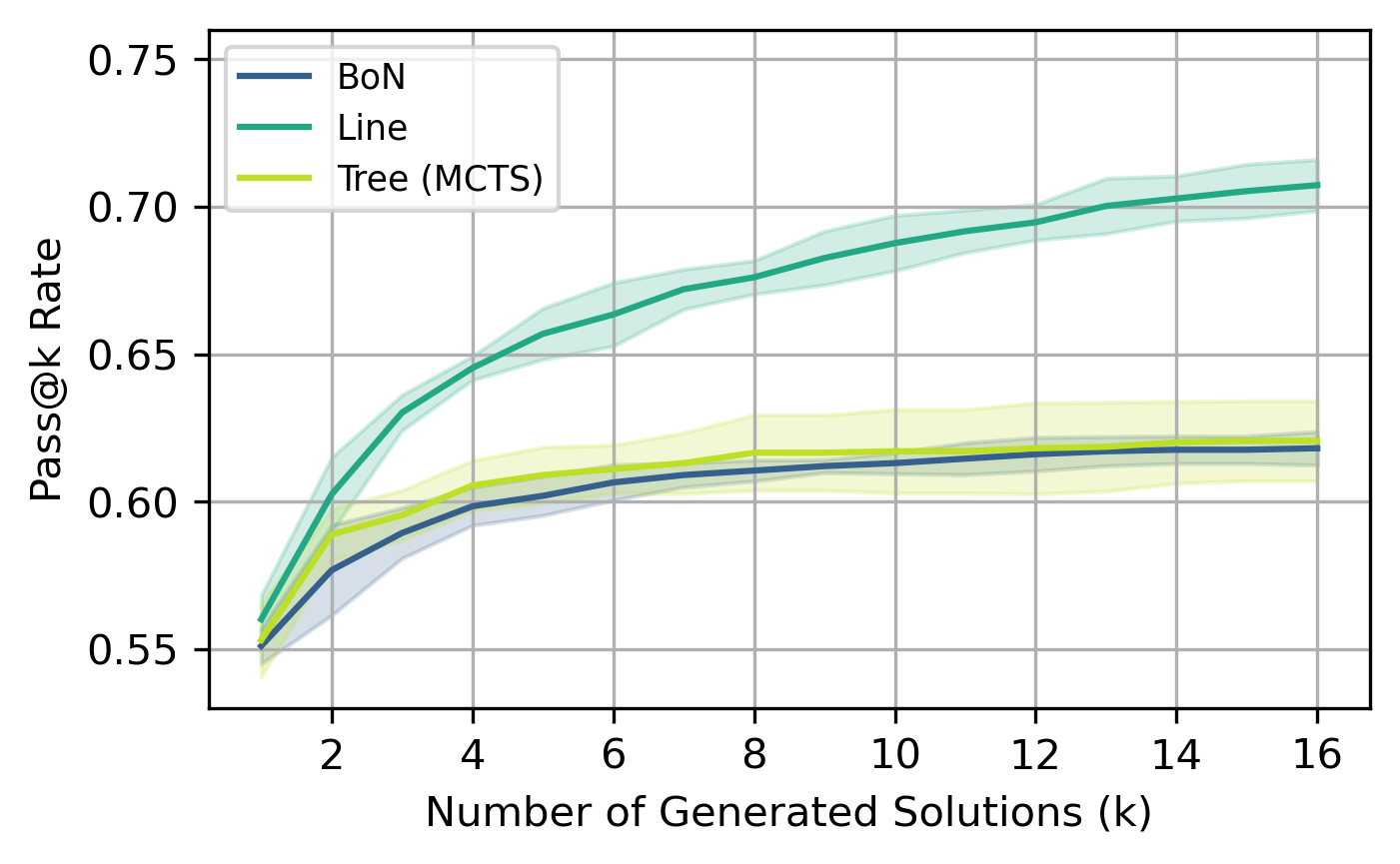}
        \label{fig:scaling1-llama-3.2-3b-instruct-mbpp}
    \end{subfigure}
    \begin{subfigure}[b]{0.49\linewidth}
        \centering
        \includegraphics[width=\linewidth]{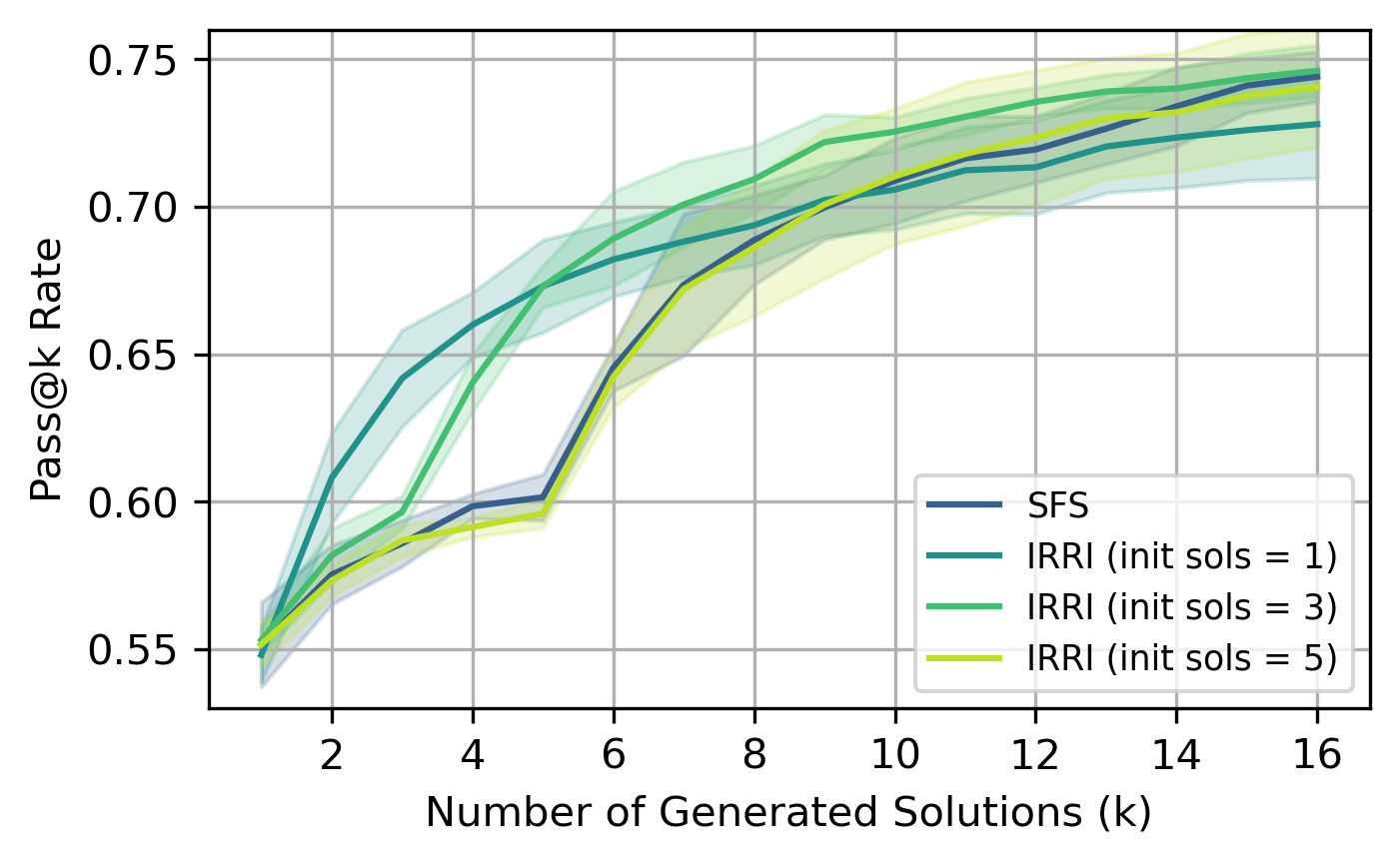}
        \label{fig:scaling2-llama-3.2-3b-instruct-mbpp}
    \end{subfigure}
    \caption{\textbf{Scaling curves for different methods on MBPP using Llama-3.2-3B-Instruct.} Curves show the mean of five runs, with shaded areas indicating 95\% confidence intervals based on the $t$-distribution.}
    \label{fig:scaling-llama-3.2-3b-instruct-mbpp}
\end{figure}

\FloatBarrier
\subsection{Diversity of repair instructions}
\label{subapp:diversity-of-repair-instructions}

The results are shown in Figures~\ref{fig:heatmap-gpt-4o-mini-humaneval}-~\ref{fig:heatmap-llama-3.2-3b-instruct-mbpp}.

\begin{figure}[htbp]
    \centering
    \begin{subfigure}[b]{0.32\linewidth}
        \centering
        \includegraphics[width=\linewidth]{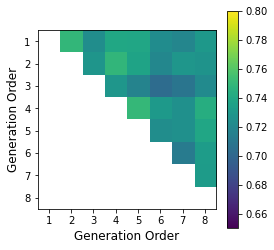}
        \caption{IRRI (init sols = 1)}
        \label{fig:heatmap-irri1-gpt-4o-mini-humaneval}
    \end{subfigure}
    \begin{subfigure}[b]{0.32\linewidth}
        \centering
        \includegraphics[width=\linewidth]{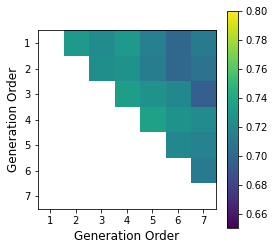}
        \caption{IRRI (init sols = 3)}
        \label{fig:heatmap-irri3-gpt-4o-mini-humaneval}
    \end{subfigure}
    \begin{subfigure}[b]{0.32\linewidth}
        \centering
        \includegraphics[width=\linewidth]{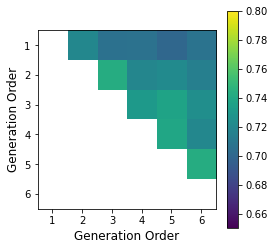}
        \caption{IRRI (init sols = 5)}
        \label{fig:heatmap-irri5-gpt-4o-mini-humaneval}
    \end{subfigure}
    \caption{\textbf{BERT similarity heatmaps on HumanEval using gpt-4o-mini.} We generated embeddings for repair instructions using the ‘all-MiniLM-L6-v2’ model of SentenceTransformers.}
    \label{fig:heatmap-gpt-4o-mini-humaneval}
\end{figure}

\begin{figure}[htbp]
    \centering
    \begin{subfigure}[b]{0.32\linewidth}
        \centering
        \includegraphics[width=\linewidth]{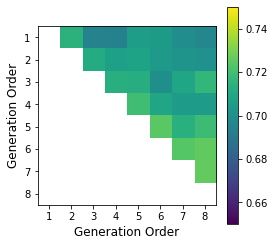}
        \caption{IRRI (init sols = 1)}
        \label{fig:heatmap-irri1-gpt-4o-mini-mbpp}
    \end{subfigure}
    \begin{subfigure}[b]{0.32\linewidth}
        \centering
        \includegraphics[width=\linewidth]{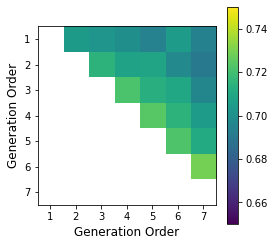}
        \caption{IRRI (init sols = 3)}
        \label{fig:heatmap-irri3-gpt-4o-mini-mbpp}
    \end{subfigure}
    \begin{subfigure}[b]{0.32\linewidth}
        \centering
        \includegraphics[width=\linewidth]{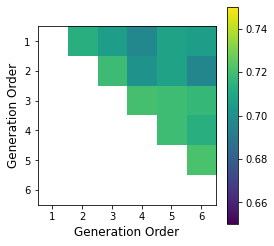}
        \caption{IRRI (init sols = 5)}
        \label{fig:heatmap-irri5-gpt-4o-mini-mbpp}
    \end{subfigure}
    \caption{\textbf{BERT similarity heatmaps on MBPP using gpt-4o-mini.} We generated embeddings for repair instructions using the ‘all-MiniLM-L6-v2’ model of SentenceTransformers.}
    \label{fig:heatmap-gpt-4o-mini-mbpp}
\end{figure}

\begin{figure}[htbp]
    \centering
    \begin{subfigure}[b]{0.32\linewidth}
        \centering
        \includegraphics[width=\linewidth]{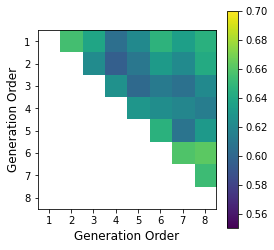}
        \caption{IRRI (init sols = 1)}
        \label{fig:heatmap-irri1-gpt-4.1-mini-humaneval}
    \end{subfigure}
    \begin{subfigure}[b]{0.32\linewidth}
        \centering
        \includegraphics[width=\linewidth]{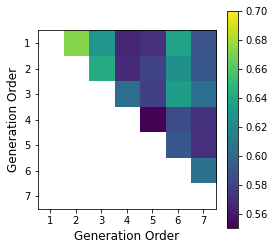}
        \caption{IRRI (init sols = 3)}
        \label{fig:heatmap-irri3-gpt-4.1-mini-humaneval}
    \end{subfigure}
    \begin{subfigure}[b]{0.32\linewidth}
        \centering
        \includegraphics[width=\linewidth]{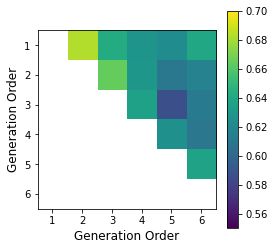}
        \caption{IRRI (init sols = 5)}
        \label{fig:heatmap-irri5-gpt-4.1-mini-humaneval}
    \end{subfigure}
    \caption{\textbf{BERT similarity heatmaps on HumanEval using gpt-4.1-mini.} We generated embeddings for repair instructions using the ‘all-MiniLM-L6-v2’ model of SentenceTransformers.}
    \label{fig:heatmap-gpt-4.1-mini-humaneval}
\end{figure}

\begin{figure}[htbp]
    \centering
    \begin{subfigure}[b]{0.32\linewidth}
        \centering
        \includegraphics[width=\linewidth]{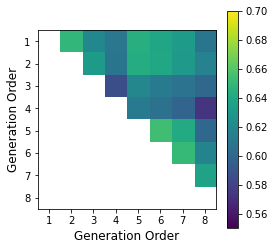}
        \caption{IRRI (init sols = 1)}
        \label{fig:heatmap-irri1-gpt-4.1-mini-mbpp}
    \end{subfigure}
    \begin{subfigure}[b]{0.32\linewidth}
        \centering
        \includegraphics[width=\linewidth]{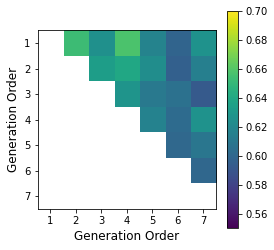}
        \caption{IRRI (init sols = 3)}
        \label{fig:heatmap-irri3-gpt-4.1-mini-mbpp}
    \end{subfigure}
    \begin{subfigure}[b]{0.32\linewidth}
        \centering
        \includegraphics[width=\linewidth]{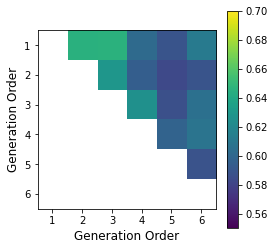}
        \caption{IRRI (init sols = 5)}
        \label{fig:heatmap-irri5-gpt-4.1-mini-mbpp}
    \end{subfigure}
    \caption{\textbf{BERT similarity heatmaps on MBPP using gpt-4.1-mini.} We generated embeddings for repair instructions using the ‘all-MiniLM-L6-v2’ model of SentenceTransformers.}
    \label{fig:heatmap-gpt-4.1-mini-mbpp}
\end{figure}

\begin{figure}[htbp]
    \centering
    \begin{subfigure}[b]{0.32\linewidth}
        \centering
        \includegraphics[width=\linewidth]{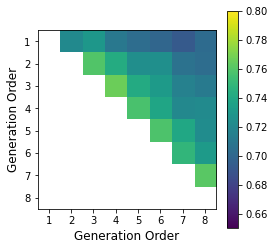}
        \caption{IRRI (init sols = 1)}
        \label{fig:heatmap-irri1-llama-3.1-8b-instruct-humaneval}
    \end{subfigure}
    \begin{subfigure}[b]{0.32\linewidth}
        \centering
        \includegraphics[width=\linewidth]{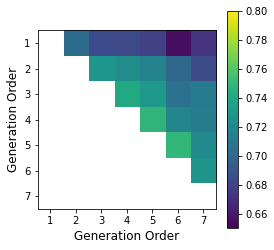}
        \caption{IRRI (init sols = 3)}
        \label{fig:heatmap-irri3-llama-3.1-8b-instruct-humaneval}
    \end{subfigure}
    \begin{subfigure}[b]{0.32\linewidth}
        \centering
        \includegraphics[width=\linewidth]{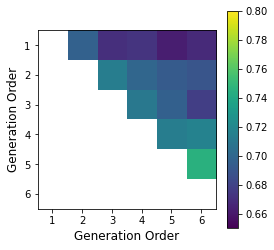}
        \caption{IRRI (init sols = 5)}
        \label{fig:heatmap-irri5-llama-3.1-8b-instruct-humaneval}
    \end{subfigure}
    \caption{\textbf{BERT similarity heatmaps on HumanEval using Llama-3.1-8B-Instruct.} We generated embeddings for repair instructions using the ‘all-MiniLM-L6-v2’ model of SentenceTransformers.}
    \label{fig:heatmap-llama-3.1-8b-instruct-humaneval}
\end{figure}

\begin{figure}[htbp]
    \centering
    \begin{subfigure}[b]{0.32\linewidth}
        \centering
        \includegraphics[width=\linewidth]{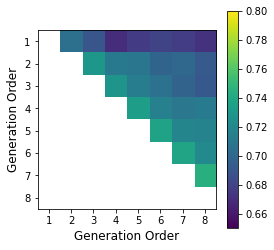}
        \caption{IRRI (init sols = 1)}
        \label{fig:heatmap-irri1-llama-3.1-8b-instruct-mbpp}
    \end{subfigure}
    \begin{subfigure}[b]{0.32\linewidth}
        \centering
        \includegraphics[width=\linewidth]{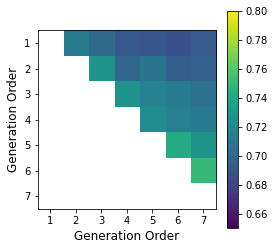}
        \caption{IRRI (init sols = 3)}
        \label{fig:heatmap-irri3-llama-3.1-8b-instruct-mbpp}
    \end{subfigure}
    \begin{subfigure}[b]{0.32\linewidth}
        \centering
        \includegraphics[width=\linewidth]{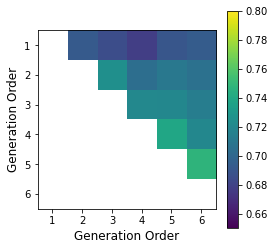}
        \caption{IRRI (init sols = 5)}
        \label{fig:heatmap-irri5-llama-3.1-8b-instruct-mbpp}
    \end{subfigure}
    \caption{\textbf{BERT similarity heatmaps on MBPP using Llama-3.1-8B-Instruct.} We generated embeddings for repair instructions using the ‘all-MiniLM-L6-v2’ model of SentenceTransformers.}
    \label{fig:heatmap-llama-3.1-8b-instruct-mbpp}
\end{figure}

\begin{figure}[htbp]
    \centering
    \begin{subfigure}[b]{0.32\linewidth}
        \centering
        \includegraphics[width=\linewidth]{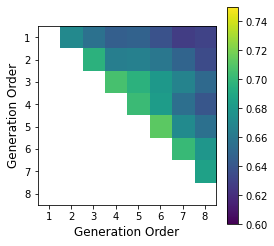}
        \caption{IRRI (init sols = 1)}
        \label{fig:heatmap-irri1-llama-3.2-3b-instruct-humaneval}
    \end{subfigure}
    \begin{subfigure}[b]{0.32\linewidth}
        \centering
        \includegraphics[width=\linewidth]{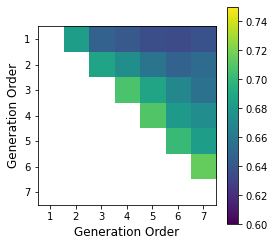}
        \caption{IRRI (init sols = 3)}
        \label{fig:heatmap-irri3-llama-3.2-3b-instruct-humaneval}
    \end{subfigure}
    \begin{subfigure}[b]{0.32\linewidth}
        \centering
        \includegraphics[width=\linewidth]{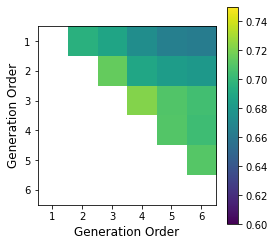}
        \caption{IRRI (init sols = 5)}
        \label{fig:heatmap-irri5-llama-3.2-3b-instruct-humaneval}
    \end{subfigure}
    \caption{\textbf{BERT similarity heatmaps on HumanEval using Llama-3.2-3B-Instruct.} We generated embeddings for repair instructions using the ‘all-MiniLM-L6-v2’ model of SentenceTransformers.}
    \label{fig:heatmap-llama-3.2-3b-instruct-humaneval}
\end{figure}

\begin{figure}[htbp]
    \centering
    \begin{subfigure}[b]{0.32\linewidth}
        \centering
        \includegraphics[width=\linewidth]{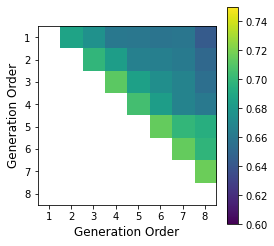}
        \caption{IRRI (init sols = 1)}
        \label{fig:heatmap-irri1-llama-3.2-3b-instruct-mbpp}
    \end{subfigure}
    \begin{subfigure}[b]{0.32\linewidth}
        \centering
        \includegraphics[width=\linewidth]{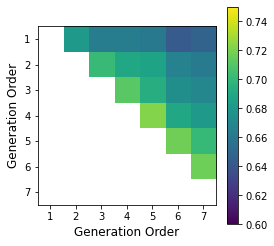}
        \caption{IRRI (init sols = 3)}
        \label{fig:heatmap-irri3-llama-3.2-3b-instruct-mbpp}
    \end{subfigure}
    \begin{subfigure}[b]{0.32\linewidth}
        \centering
        \includegraphics[width=\linewidth]{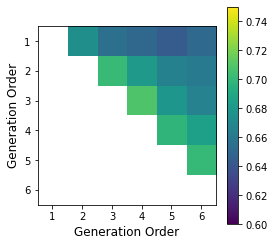}
        \caption{IRRI (init sols = 5)}
        \label{fig:heatmap-irri5-llama-3.2-3b-instruct-mbpp}
    \end{subfigure}
    \caption{\textbf{BERT similarity heatmaps on MBPP using Llama-3.2-3B-Instruct.} We generated embeddings for repair instructions using the ‘all-MiniLM-L6-v2’ model of SentenceTransformers.}
    \label{fig:heatmap-llama-3.2-3b-instruct-mbpp}
\end{figure}

\FloatBarrier
\subsection{Computational cost comparison}
\label{subapp:computational-cost-comparison}

The results are shown in Tables~\ref{tab:cost-gpt-4o-mini-humaneval}-~\ref{tab:cost-llama-3.2-3b-instruct-mbpp}.

\begin{table}[htbp]
    \small
    \centering
    \begin{tabular}{c|cc}
        \toprule
        \textbf{Metric} & \textbf{SFS} & \textbf{IRRI (init sols = 5)} \\
        \midrule
        Average LLM calls & 15.47 & 14.04 \\
        Average output tokens & 3356 & 3009 \\
        Average total tokens & 20714 & 18290 \\
        \bottomrule
    \end{tabular}
    \caption{\textbf{Computational cost comparison on HumanEval using gpt-4o-mini.} We compared the number of LLM calls, the number of output tokens, and the total number of tokens. For each metric, we collected statistics when generating $k = 16$ program candidates and reported the average over five runs.}
    \label{tab:cost-gpt-4o-mini-humaneval}
\end{table}

\begin{table}[htbp]
    \small
    \centering
    \begin{tabular}{c|cc}
        \toprule
        \textbf{Metric} & \textbf{SFS} & \textbf{IRRI (init sols = 5)} \\
        \midrule
        Average LLM calls & 14.23 & 13.11 \\
        Average output tokens & 2725 & 2463 \\
        Average total tokens & 16339 & 14574 \\
        \bottomrule
    \end{tabular}
    \caption{\textbf{Computational cost comparison on MBPP using gpt-4o-mini.} We compared the number of LLM calls, the number of output tokens, and the total number of tokens. For each metric, we collected statistics when generating $k = 16$ program candidates and reported the average over five runs.}
    \label{tab:cost-gpt-4o-mini-mbpp}
\end{table}

\begin{table}[htbp]
    \small
    \centering
    \begin{tabular}{c|cc}
        \toprule
        \textbf{Metric} & \textbf{SFS} & \textbf{IRRI (init sols = 5)} \\
        \midrule
        Average LLM calls & 9.01 & 8.11 \\
        Average output tokens & 2384 & 2118 \\
        Average total tokens & 12396 & 10753 \\
        \bottomrule
    \end{tabular}
    \caption{\textbf{Computational cost comparison on HumanEval using gpt-4.1-mini.} We compared the number of LLM calls, the number of output tokens, and the total number of tokens. For each metric, we collected statistics when generating $k = 16$ program candidates and reported the average over five runs.}
    \label{tab:cost-gpt-4.1-mini-humaneval}
\end{table}

\begin{table}[htbp]
    \small
    \centering
    \begin{tabular}{c|cc}
        \toprule
        \textbf{Metric} & \textbf{SFS} & \textbf{IRRI (init sols = 5)} \\
        \midrule
        Average LLM calls & 9.51 & 8.24 \\
        Average output tokens & 2186 & 1826 \\
        Average total tokens & 11078 & 8986 \\
        \bottomrule
    \end{tabular}
    \caption{\textbf{Computational cost comparison on MBPP using gpt-4.1-mini.} We compared the number of LLM calls, the number of output tokens, and the total number of tokens. For each metric, we collected statistics when generating $k = 16$ program candidates and reported the average over five runs.}
    \label{tab:cost-gpt-4.1-mini-mbpp}
\end{table}

\begin{table}[htbp]
    \small
    \centering
    \begin{tabular}{c|cc}
        \toprule
        \textbf{Metric} & \textbf{SFS} & \textbf{IRRI (init sols = 5)} \\
        \midrule
        Average LLM calls & 19.79 & 18.02 \\
        Average output tokens & 5140 & 4907 \\
        Average total tokens & 27037 & 23286 \\
        \bottomrule
    \end{tabular}
    \caption{\textbf{Computational cost comparison on HumanEval using Llama-3.1-8B-Instruct.} We compared the number of LLM calls, the number of output tokens, and the total number of tokens. For each metric, we collected statistics when generating $k = 16$ program candidates and reported the average over five runs.}
    \label{tab:cost-llama-3.1-8b-instruct-humaneval}
\end{table}

\begin{table}[htbp]
    \small
    \centering
    \begin{tabular}{c|cc}
        \toprule
        \textbf{Metric} & \textbf{SFS} & \textbf{IRRI (init sols = 5)} \\
        \midrule
        Average LLM calls & 20.97 & 19.37 \\
        Average output tokens & 5109 & 4621 \\
        Average total tokens & 26069 & 23286 \\
        \bottomrule
    \end{tabular}
    \caption{\textbf{Computational cost comparison on MBPP using Llama-3.1-8B-Instruct.} We compared the number of LLM calls, the number of output tokens, and the total number of tokens. For each metric, we collected statistics when generating $k = 16$ program candidates and reported the average over five runs.}
    \label{tab:cost-llama-3.1-8b-instruct-mbpp}
\end{table}

\begin{table}[htbp]
    \small
    \centering
    \begin{tabular}{c|cc}
        \toprule
        \textbf{Metric} & \textbf{SFS} & \textbf{IRRI (init sols = 5)} \\
        \midrule
        Average LLM calls & 15.47 & 14.04 \\
        Average output tokens & 3356 & 3009 \\
        Average total tokens & 20714 & 18290 \\
        \bottomrule
    \end{tabular}
    \caption{\textbf{Computational cost comparison on HumanEval using Llama-3.2-3B-Instruct.} We compared the number of LLM calls, the number of output tokens, and the total number of tokens. For each metric, we collected statistics when generating $k = 16$ program candidates and reported the average over five runs.}
    \label{tab:cost-llama-3.2-3b-instruct-humaneval}
\end{table}

\begin{table}[htbp]
    \small
    \centering
    \begin{tabular}{c|cc}
        \toprule
        \textbf{Metric} & \textbf{SFS} & \textbf{IRRI (init sols = 5)} \\
        \midrule
        Average LLM calls & 14.23 & 13.11 \\
        Average output tokens & 2725 & 2463 \\
        Average total tokens & 16339 & 14574 \\
        \bottomrule
    \end{tabular}
    \caption{\textbf{Computational cost comparison on MBPP using Llama-3.2-3B-Instruct.} We compared the number of LLM calls, the number of output tokens, and the total number of tokens. For each metric, we collected statistics when generating $k = 16$ program candidates and reported the average over five runs.}
    \label{tab:cost-llama-3.2-3b-instruct-mbpp}
\end{table}

\FloatBarrier
\section{Ablation studies}

\subsection{Other benchmarks}

To demonstrate the robustness, we evaluated IRRI on CodeContests \citep{li2022competition} with gpt-4o-mini as the base model (Figure~\ref{fig:scaling-gpt-4o-mini-codecontests}). We also evaluated IRRI on Leetcode \citep{guo2024deepseek} with Llama-3.1-8B-Instruct as the base model (Figure~\ref{fig:scaling-llama-3.1-8b-instruct-leetcode}). Each curve represents the mean of three runs, and each shaded area indicates 95\% confidence intervals based on the $t$-distribution. We found that IRRI achieves inference scaling performance comparable to SFS.

\begin{figure}[htbp]
    \centering
    \includegraphics[width=0.7\linewidth]{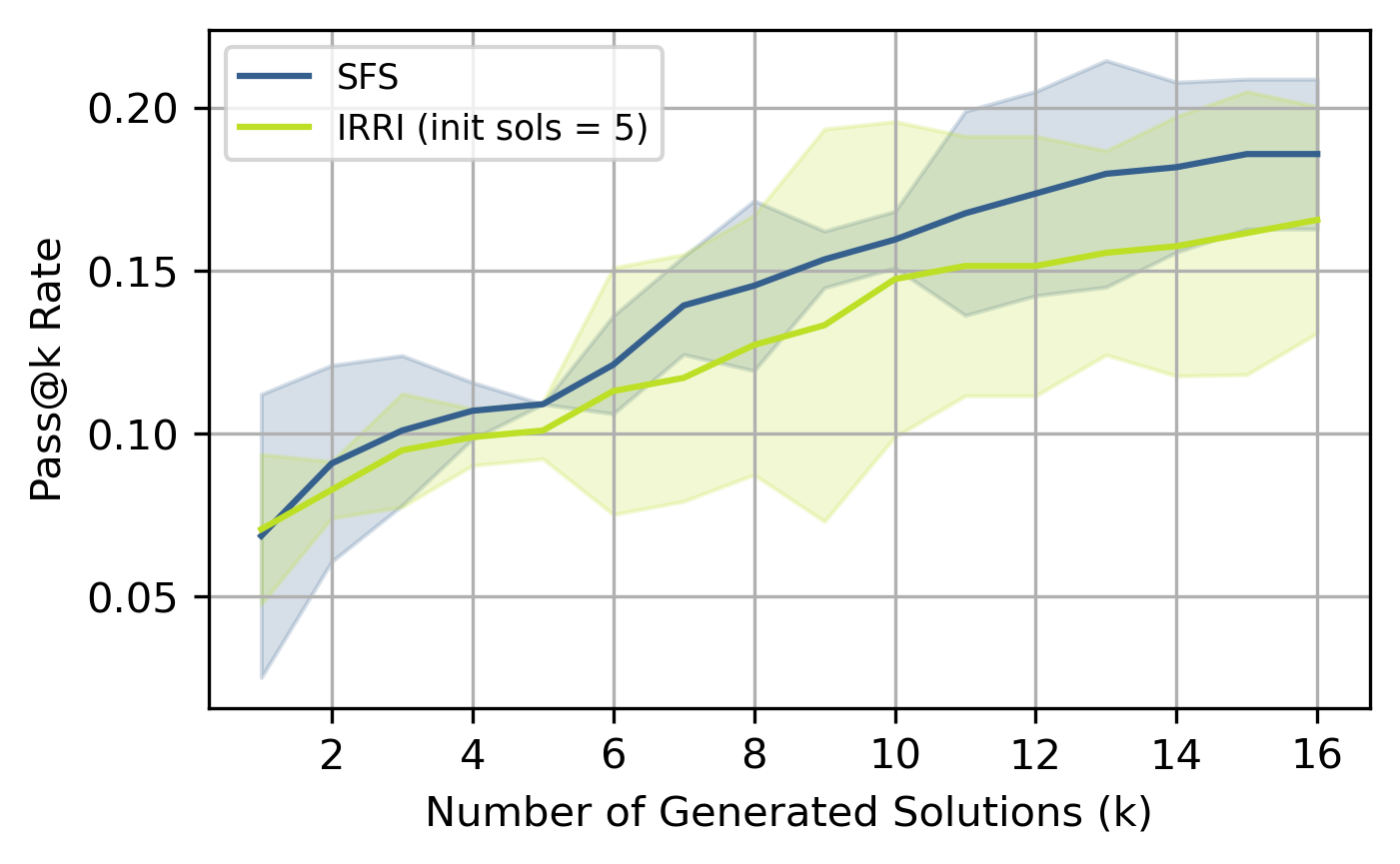}
    \caption{\textbf{Scaling curves for different methods on CodeContests using gpt-4o-mini.} Curves show the mean of three runs, with shaded areas indicating 95\% confidence intervals based on the $t$-distribution.}
    \label{fig:scaling-gpt-4o-mini-codecontests}
\end{figure}

\begin{figure}[htbp]
    \centering
    \includegraphics[width=0.7\linewidth]{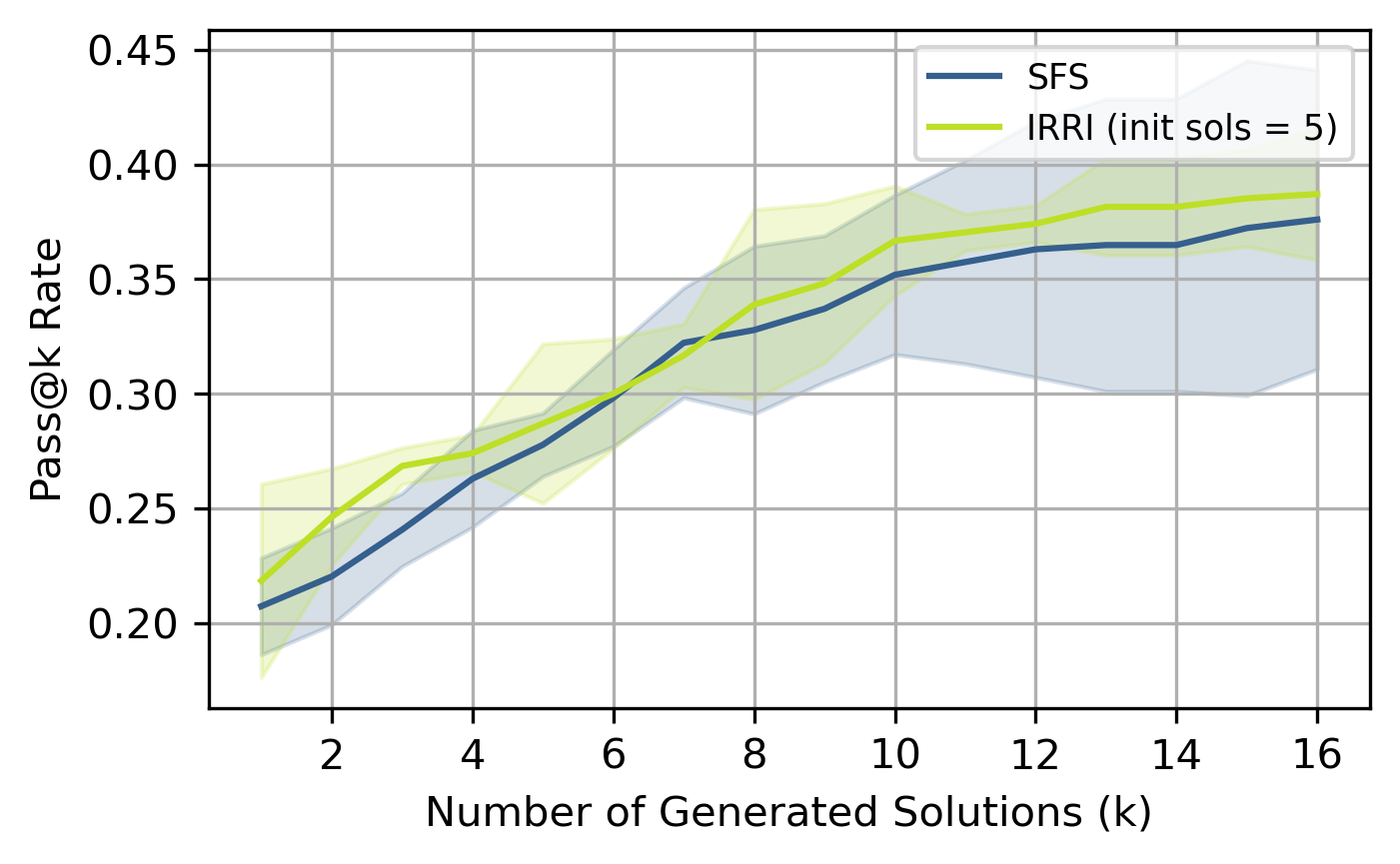}  
    \caption{\textbf{Scaling curves for different methods on Leetcode using Llama-3.1-8B-Instruct.} Curves show the mean of three runs, with shaded areas indicating 95\% confidence intervals based on the $t$-distribution.}
    \label{fig:scaling-llama-3.1-8b-instruct-leetcode}
\end{figure}

\subsection{Diversity of generated programs}
To analyze the diversity of generated programs, we measured the cosine similarity among the initial programs and the cosine similarity among all programs generated throughout the refinement process. The results are summarized in Tables~\ref{tab:program-diversity-gpt-4o-mini-apps}-~\ref{tab:program-diversity-gpt-4o-mini-mbpp}. Embedding vectors were computed using CodeBERT \citep{feng2020codebert}, and cosine similarity was measured based on these embeddings. We found that the cosine similarity among programs does not exhibit substantial differences across the various settings. 

\begin{table}[htbp]
    \small
    \centering
    \begin{tabular}{c|cc}
        \toprule
        \textbf{Method} & \textbf{Initial programs} & \textbf{All programs} \\
        \midrule
        SFS & 0.9985 & 0.9974 \\
        IRRI (init sols = 1) & 1.0000 & 0.9983 \\
        IRRI (init sols = 3) & 0.9986 & 0.9978 \\
        IRRI (init sols = 5) & 0.9986 & 0.9976 \\
        \bottomrule
    \end{tabular}
    \caption{\textbf{Diversity of generated programs on APPS with gpt-4o-mini.} Embedding vectors were computed using CodeBERT, and cosine similarity was measured based on these embeddings.}
    \label{tab:program-diversity-gpt-4o-mini-apps}
\end{table}

\begin{table}[htbp]
    \small
    \centering
    \begin{tabular}{c|cc}
        \toprule
        \textbf{Method} & \textbf{Initial programs} & \textbf{All programs} \\
        \midrule
        SFS & 0.9994 & 0.9985 \\
        IRRI (init sols = 1) & 1.0000 & 0.9986 \\
        IRRI (init sols = 3) & 0.9995 & 0.9987 \\
        IRRI (init sols = 5) & 0.9994 & 0.9988 \\
        \bottomrule
    \end{tabular}
    \caption{\textbf{Diversity of generated programs on HumanEval with gpt-4o-mini.} Embedding vectors were computed using CodeBERT, and cosine similarity was measured based on these embeddings.}
    \label{tab:program-diversity-gpt-4o-mini-humaneval}
\end{table}

\begin{table}[htbp]
    \small
    \centering
    \begin{tabular}{c|cc}
        \toprule
        \textbf{Method} & \textbf{Initial programs} & \textbf{All programs} \\
        \midrule
        SFS & 0.9997 & 0.9983 \\
        IRRI (init sols = 1) & 1.0000 & 0.9982 \\
        IRRI (init sols = 3) & 0.9997 & 0.9986 \\
        IRRI (init sols = 5) & 0.9997 & 0.9986 \\
        \bottomrule
    \end{tabular}
    \caption{\textbf{Diversity of generated programs on MBPP with gpt-4o-mini.} Embedding vectors were computed using CodeBERT, and cosine similarity was measured based on these embeddings.}
    \label{tab:program-diversity-gpt-4o-mini-mbpp}
\end{table}

\end{document}